\documentclass{article}

\usepackage{listings}
\usepackage{multirow}
\usepackage{booktabs}
\usepackage{adjustbox}

\usepackage{svg}
\usepackage{microtype}
\usepackage{graphicx}
\usepackage{subfigure}
\usepackage{booktabs} 

\newcommand{\scaleeq}[2][0.95]{\scalebox{#1}{$#2$}}

\usepackage{hyperref}
\usepackage{url}
\usepackage{bbm}

\usepackage{todonotes}

\usepackage{tcolorbox}

\usepackage{booktabs}
\usepackage{multirow}
\usepackage{colortbl}
\usepackage{xcolor}
\usepackage{array}
\usepackage{graphicx}

\definecolor{lightgray}{RGB}{248, 248, 248}
\definecolor{sectiongray}{RGB}{240, 240, 240}
\definecolor{bestcell}{RGB}{220, 237, 200}      
\definecolor{secondcell}{RGB}{227, 242, 253}    
\definecolor{ourshighlight}{RGB}{255, 253, 231} 

\usepackage[linesnumbered,vlined,ruled,commentsnumbered]{algorithm2e}
\newlength\mylen

\makeatletter
\newcommand{\algorithmfootnote}[2][\footnotesize]{%
  \let\old@algocf@finish\@algocf@finish
  \def\@algocf@finish{\old@algocf@finish
    \leavevmode\rlap{\begin{minipage}{\linewidth}
    #1#2
    \end{minipage}}%
  }%
}

\usepackage[accepted]{icml2024}

\usepackage{amsmath}
\usepackage{amssymb}
\usepackage{mathtools}
\usepackage{amsthm}

\allowdisplaybreaks[4]
\usepackage[capitalize,noabbrev]{cleveref}

\theoremstyle{plain}
\newtheorem{theorem}{Theorem}[section]
\newtheorem{proposition}[theorem]{Proposition}
\newtheorem{lemma}[theorem]{Lemma}
\newtheorem{corollary}[theorem]{Corollary}
\newtheorem{definition}[theorem]{Definition}
\newtheorem{assumption}[theorem]{Assumption}

\newtheorem{remark}[theorem]{Remark}

\usepackage{todonotes}

\usepackage{wrapfig}
\usepackage{enumitem}

\newcommand{\E}{\mathbb{E}}
\newcommand{\R}{\mathbb{R}}

\newcommand{\calP}{\mathcal{P}}

\newcommand{\calD}{\mathcal{D}}
\newcommand{\calL}{\mathcal{L}}
\newcommand{\calR}{\mathcal{R}}

\newcommand{\calV}{\mathcal{V}}
\newcommand{\calA}{\mathcal{A}}

\usepackage{xcolor}
\usepackage{amsmath,amsfonts}
\usepackage{algorithmic}
\usepackage{array}
\usepackage[caption=false,font=normalsize,labelfont=sf,textfont=sf]{subfig}
\usepackage{textcomp}
\usepackage{stfloats}
\usepackage{url}
\usepackage{verbatim}
\usepackage{graphicx}
\def\BibTeX{{\rm B\kern-.05em{\sc i\kern-.025em b}\kern-.08em
    T\kern-.1667em\lower.7ex\hbox{E}\kern-.125emX}}
\usepackage{balance}
\usepackage{subfig}

\icmltitlerunning{Hard Constraints Meet Soft Generation}

\makeatletter
\def\sf@counterlist{}
\makeatother

\begin{document}

\twocolumn[
\icmltitle{Hard Constraints Meet Soft Generation:\\
Guaranteed Feasibility for LLM-based Combinatorial Optimization}

\begin{icmlauthorlist}
\icmlauthor{Yang Liu}{amss}
\icmlauthor{Chuan Zhou}{amss,ucas}
\icmlauthor{Yancheng Chen}{amss}
\icmlauthor{Shuai Zhang}{amss}
\icmlauthor{Xixun Lin}{iie}
\icmlauthor{Xiaoqing Wang}{ali}

\end{icmlauthorlist}

\icmlaffiliation{amss}{Academy of Mathematics and Systems Science, Chinese Academy of Sciences}
\icmlaffiliation{ucas}{School of Cyber Security, University of Chinese Academy of Sciences}
\icmlaffiliation{iie}{Institute of Information Engineering, Chinese Academy of Sciences}
\icmlaffiliation{ali}{Alibaba Group}

\icmlcorrespondingauthor{Chuan Zhou}{zhouchuan@amss.ac.cn}

\icmlkeywords{Machine Learning, ICML}

\vskip 0.3in
]

\printAffiliationsAndNotice{}  

\begin{abstract}
Large language models (LLMs) have emerged as promising general-purpose solvers for combinatorial optimization (CO), yet they fundamentally lack mechanisms to guarantee solution feasibility which is critical for real-world deployment. In this work, we introduce FALCON, a framework that ensures 100\% feasibility through three key innovations: (i) \emph{grammar-constrained decoding} enforces syntactic validity, (ii) a \emph{feasibility repair layer} corrects semantic constraint violations, and (iii) \emph{adaptive Best-of-$N$ sampling} allocates inference compute efficiently. To train the underlying LLM, we introduce the Best-anchored Objective-guided Preference Optimization (BOPO) in LLM training, which weights preference pairs by their objective gap, providing dense supervision without human labels. Theoretically, we prove convergence for BOPO and provide bounds on repair-induced quality loss. Empirically, across seven NP-hard CO problems, FALCON achieves perfect feasibility while matching or exceeding the solution quality of state-of-the-art neural and LLM-based solvers.
\end{abstract}

\section{Introduction}
\label{sec:intro}

Combinatorial Optimization (CO) underpins critical real-world systems, from logistics planning \citep{bao2018application} and manufacturing scheduling \citep{zhang2023review} to emergency resource allocation \citep{jain2023combinatorial}. These problems involve selecting optimal configurations from exponentially large search spaces while satisfying complex constraints. Traditionally, solving them has required domain-specific heuristics or exact algorithms \citep{liu2024decision,liu2023decision,liu2024cl4co}, which demand substantial expert engineering and lack generalizability across problem domains.

The emergence of large language models (LLMs) has introduced a paradigm shift. By directly mapping natural-language problem descriptions to solution sequences, LLMs offer a unified, accessible interface for combinatorial optimization \citep{jiang2025large}. Recent work demonstrates that LLMs possess remarkable pattern recognition and sequential generation capabilities, achieving competitive performance across diverse CO problems when fine-tuned with supervised or reinforcement learning \citep{jiang2025large}. This suggests a promising path toward general-purpose, language-driven optimization.

However, a fundamental contradiction remains: LLMs are, at their core, \emph{unconstrained generative models}. When faced with hard combinatorial constraints—such as Hamiltonian cycles, capacity limits, or precedence relations—they lack any intrinsic mechanism to guarantee constraint satisfaction. Existing methods treat feasibility as a soft objective, incentivized through reward shaping during training but never enforced during inference. Consequently, single-sample feasibility rates vary widely and remain below 100\% even for moderately complex problems \citep{jiang2025large}. For safety-critical deployments in logistics \citep{mcgarvey2025self}, manufacturing \citep{meurer2025manusafenextgen}, or other high-stakes domains, such unreliability is unacceptable. The core scientific challenge is therefore clear: \emph{How can we equip expressive generative models with verifiable, 100\% constraint-satisfaction guarantees without sacrificing their problem-solving power?}

This challenge manifests in two layers. First, LLMs can generate syntactically valid outputs that are semantically infeasible—a correctly formatted vehicle route may exceed capacity, or an independent set may contain adjacent vertices. Syntax alone cannot encode the semantic constraints intrinsic to each problem class. Second, training-time encouragement (via rewards or preferences) does not translate to inference-time guarantees. The model might learn to \emph{often} produce feasible solutions, but it cannot be \emph{forced} to do so, creating a critical gap between probabilistic encouragement and deterministic reliability.

To bridge this gap, we introduce FALCON (Feasibility-Aware Language-based Combinatorial Optimization with Adaptive Inference), the first LLM-based CO framework with a provable 100\% feasibility guarantee. FALCON is built on a key conceptual insight: separate the challenge into \emph{syntactic validity} and \emph{semantic feasibility}, and address each with a dedicated, verifiable layer. Specifically, we integrate three synergistic components: (1) \emph{grammar-constrained decoding}, which enforces output format correctness via problem-specific context-free grammars; (2) \emph{feasibility repair operators}, which transform any constraint-violating solution into a feasible one; and (3) \emph{adaptive Best-of-$N$ sampling}, which dynamically allocates inference compute based on instance difficulty. These components are trained using {Best-anchored Objective-guided Preference Optimization (BOPO)}, a novel objective that weights preference pairs by their objective-value gaps, providing dense, automatic supervision.

Our contributions are as follows:
\begin{enumerate}[leftmargin=*,nosep]
    \item We present FALCON, the first LLM-based CO solver that guarantees 100\% feasibility through a layered architecture of grammar enforcement, semantic repair, and adaptive inference. We provide formal proofs of feasibility, format validity, and bounds on solution-quality degradation after repair.
    \item We introduce BOPO in LLM training, a principled training method that incorporates objective-guided weighting into preference optimization. We prove its $O(1/\sqrt{T})$ convergence under standard assumptions and demonstrate its superiority over reward shaping and advantage normalization.
    \item We conduct an extensive evaluation across seven NP-hard problems spanning routing, graph, and scheduling domains. Results show that FALCON achieves consistent 100\% feasibility while maintaining competitive optimality gaps, outperforming both general-purpose LLMs and recent neural CO solvers.
\end{enumerate}

\section{Preliminaries}
\label{sec:prelim}

\subsection{Problem Formulation}
Let $\mathcal{P}$ represent a class of combinatorial optimization problems. Each instance $p \in \mathcal{P}$ is defined as a tuple $(\mathcal{X}_p, f_p, \mathcal{C}_p)$, where $\mathcal{X}_p$ is the solution space containing all well-formed solution representations, $f_p: \mathcal{X}_p \to \mathbb{R}$ is the objective function to minimize, and $\mathcal{C}_p = \{c_{p,1}, \ldots, c_{p,m_p}\}$ is a set of constraints with each $c_{p,i}: \mathcal{X}_p \to \{0,1\}$ indicating whether a solution satisfies constraint $i$.

\begin{definition}[Feasible Solution]\label{def:feasible}
A solution $x \in \mathcal{X}_p$ is \emph{feasible} if it satisfies all constraints: $\forall c_{p,i} \in \mathcal{C}_p: c_{p,i}(x) = 1$. The feasible region is:
\begin{equation*}
    \mathcal{X}_{\mathcal{C}_p} = \{x \in \mathcal{X}_p \mid \forall c_{p,i} \in \mathcal{C}_p: c_{p,i}(x) = 1\}.
\end{equation*}
\end{definition}

The optimization goal is to find $x_p^* = \arg\min_{x \in \mathcal{X}_{\mathcal{C}_p}} f_p(x)$.

\subsection{LLM as an End-to-End Solver}
Let $\pi_\theta: \mathcal{T} \to \mathcal{T}$ be an LLM with parameters $\theta$, where $\mathcal{T}$ is the space of text sequences. We define two mapping functions: $\phi: \mathcal{P} \to \mathcal{T}$ maps problem instances to text descriptions, and $\psi_p: \mathcal{T} \rightharpoonup \mathcal{X}_p$ is a partial function that parses text sequences into solutions when the text conforms to valid syntax. The end-to-end solution pipeline attempts to produce $\hat{x}_p = \psi_p(\pi_\theta(\phi(p)))$, which succeeds only if the generated text is parseable and the parsed solution is feasible, i.e., $\hat{x}_p \in \mathcal{X}_{\mathcal{C}_p}$.

\section{LLM-based CO Framework: FALCON}
\label{sec:method}

\subsection{Grammar-Constrained Decoding}
\label{sec:grammar}

The first layer of our feasibility guarantee ensures \emph{format validity}---outputs must be syntactically correct with well-formed brackets, valid delimiters, and in-range indices.

\subsubsection{Problem-Specific Grammars}

We define context-free grammars (CFGs) for each CO problem that precisely specify valid output formats.

\begin{definition}[CO Output Grammar]
\label{def:grammar}
A CO output grammar is a tuple $G = (V, \Sigma, R, S)$, where $V$ is the set of non-terminal symbols (e.g., \textsc{NodeList}, \textsc{Number}), $\Sigma$ is the set of terminal symbols corresponding to tokens in the LLM vocabulary, $R$ is the set of production rules specifying valid derivations, and $S$ is the start symbol.
\end{definition}

\begin{definition}[Input-Dependent Grammar]
\label{def:input_grammar}
For an instance $p$ with $n$ nodes, the input-dependent grammar $G_p = (V, \Sigma_p, R_p, S)$ specializes the base grammar $G$ by restricting the production rules for node indices to generate only valid indices $\{0, 1, \ldots, n-1\}$, thereby preventing out-of-range references.
\end{definition}

Complete grammar specifications for all seven problems are provided in \Cref{app:grammar}.

\subsubsection{Constrained Decoding Algorithm}

At each decoding step, we maintain a pushdown automaton (PDA) configuration corresponding to the current parse state and mask tokens that would lead to invalid continuations. For a given grammar $G$, we construct an equivalent PDA $\mathcal{A} = (Q, \Sigma, \Gamma, \delta, q_0, Z_0, F)$ that recognizes $\mathcal{L}(G)$ using standard CFG-to-PDA conversion.

\begin{algorithm}[t]
\caption{Grammar-Constrained Decoding}
\label{alg:gcd}
\DontPrintSemicolon
\setcounter{AlgoLine}{0}
\SetKwInOut{KwIn}{Input}
\SetKwInOut{KwOut}{Output}

\KwIn{Grammar $G$, prompt $x$, LLM $\pi_\theta$}
\KwOut{Valid solution text $y \in \mathcal{L}(G)$}

Initialize PDA configuration $(q_0, Z_0)$ from $\mathcal{A}$\;
$y \gets \texttt{""}$\;

\While{not at accepting state}{
    $\text{logits} \gets \pi_\theta(x, y)$ \tcp*{\textcolor{blue}{Forward pass}}\;
    
    $\mathcal{V}_{\text{valid}} \gets \{v \in \Sigma \mid \delta(q, v, \text{top}(\gamma)) \neq \emptyset\}$\;

    \ForEach{$v \notin \mathcal{V}_{\text{valid}}$}{
        $\text{logits}[v] \gets -\infty$ \tcp*{\textcolor{blue}{Mask invalid tokens}}\;
    }

    $v_{\text{next}} \gets \text{sample}(\text{softmax}(\text{logits}))$\;
    
    $y \gets y + v_{\text{next}}$\;
    
    Update PDA configuration $(q, \gamma)$ via $\delta$\;
}
\Return $y$\;
\end{algorithm}

\begin{theorem}[Format Validity Guarantee]
\label{thm:validity}
For any output $y$ generated by Algorithm~\ref{alg:gcd} upon termination with grammar $G$, we have $y \in \mathcal{L}(G)$ where $\mathcal{L}(G)$ is the language defined by $G$.
\end{theorem}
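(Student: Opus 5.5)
The plan is an invariant argument over the decoding loop of Algorithm~\ref{alg:gcd}, using the PDA $\mathcal{A}$ with $\mathcal{L}(\mathcal{A})=\mathcal{L}(G)$ given by the standard CFG-to-PDA construction. The invariant to establish is: after every iteration, the partial output $y$ is the input read so far along some run of $\mathcal{A}$ from $(q_0,Z_0)$. That run ends in the configuration $(q,\gamma)$ currently stored by the algorithm. Equivalently, $(q_0,y,Z_0)\vdash^*_{\mathcal{A}}(q,\varepsilon,\gamma)$.

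\textbf{Base case.} Initially $y$ is the empty string and the stored configuration is $(q_0,Z_0)$. This is the trivial zero-step run.

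\textbf{Inductive step.}
\begin{enumerate}
\item Suppose the invariant holds before an iteration. The masking step assigns logit $-\infty$ to every $v\notin\mathcal{V}_{\text{valid}}$, so after the softmax such tokens have probability exactly $0$.
\item Hence the sampled token satisfies $v_{\text{next}}\in\mathcal{V}_{\text{valid}}$. This requires $\mathcal{V}_{\text{valid}}\neq\emptyset$, so that the softmax is well defined; I address this below.
\item By definition of $\mathcal{V}_{\text{valid}}$, we have $\delta(q,v_{\text{next}},\mathrm{top}(\gamma))\neq\emptyset$. The update step follows one element of this set.
\item This extends the run by one move reading $v_{\text{next}}$. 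So $(q_0,y\,v_{\text{next}},Z_0)\vdash^* (q',\varepsilon,\gamma')$, and the invariant is preserved.
\end{enumerate}

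\textbf{Conclusion.} The loop exits only when the current configuration is accepting, i.e., $q\in F$ (or, for the empty-stack convention, the stack is empty). Combining this with the invariant, $y$ is accepted by $\mathcal{A}$. Therefore $y\in\mathcal{L}(\mathcal{A})=\mathcal{L}(G)$. The theorem only concerns outputs produced upon termination, so no termination argument is needed.

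\textbf{Main obstacle.} The difficulty lies in making the one-token lookahead rigorous, and there are two issues.
\begin{itemize}
\item \emph{$\varepsilon$-moves.} The standard CFG-to-PDA conversion produces $\varepsilon$-transitions that expand nonterminals on the stack. A single-symbol test $\delta(q,v,\mathrm{top}(\gamma))$ would then miss valid tokens reachable only after such expansions. I would handle this by interpreting the PDA configuration as the set of configurations reachable via $\varepsilon$-closure, and defining $\mathcal{V}_{\text{valid}}$ and the update relative to that closure. Alternatively, one can assume $\mathcal{A}$ is in a real-time normal form, e.g.\ Greibach normal form, which yields a PDA without $\varepsilon$-moves.
\item \emph{Multi-character tokens.} LLM tokens may span several grammar terminals. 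Here I would take $\Sigma$ to be the token vocabulary, as stated in Definition~\ref{def:grammar}, and treat $\delta$ as the induced multi-step transition.
\end{itemize}

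\textbf{Nonemptiness of $\mathcal{V}_{\text{valid}}$.} This is needed only for the algorithm to be well defined; correctness of terminated outputs does not depend on it. I would argue it by assuming $\mathcal{A}$ is trimmed, so that every reachable non-accepting configuration can be extended to acceptance. With a deterministic or trimmed PDA for the simple grammars of Appendix~\ref{app:grammar}, nonemptiness is immediate.
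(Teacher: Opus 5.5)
Your proposal is correct and follows essentially the same route as the paper's proof. Both argue by induction over decoding steps with the invariant that the current PDA configuration is reachable from $(q_0,Z_0)$ by reading $y$, use the masking to show the sampled token has a defined transition, and conclude that exiting at an accepting configuration gives $y\in\mathcal{L}(\mathcal{A})=\mathcal{L}(G)$. Your invariant drops the paper's extra (and only loosely justified) claim that $y$ stays a viable prefix of $\mathcal{L}(G)$, which the conclusion does not need, and your $\varepsilon$-closure treatment matches the paper's appendix on valid-token computation.
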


\begin{remark}[Computational Overhead]
The overhead per token is $O(|\Sigma| \cdot |Q|)$ for computing valid tokens via PDA transition lookup, where $|\Sigma|$ is the vocabulary size and $|Q|$ is the number of PDA states. For CO grammars with simple structure, $|Q| = O(1)$, yielding $O(|\Sigma|)$ overhead per token. With $|\Sigma| \approx 32K$ for modern LLMs, this is negligible compared to the $O(d^2)$ attention computation where $d$ is the hidden dimension.
\end{remark}

\subsection{Feasibility Repair Layer}
\label{sec:repair_analysis}

Grammar constraints ensure format validity but not semantic feasibility. A syntactically valid CVRP route may exceed vehicle capacity; a well-formed MIS may contain adjacent vertices. The repair layer addresses this challenge.

\subsubsection{Repair Operator Design}

\begin{definition}[Repair Operator]
\label{def:repair}
A repair operator $\calR : \mathcal{X}_p \to \mathcal{X}_{C_p}$ for constraint set $C_p$ satisfies:
\begin{enumerate}[leftmargin=*,nosep]
    \item \textbf{Feasibility}: $\forall x \in \mathcal{X}_p : \calR(x) \in \mathcal{X}_{C_p}$
    \item \textbf{Idempotence}: $\forall x \in \mathcal{X}_{C_p}: \calR(x) = x$
    \item \textbf{Bounded Locality}: There exists a distance metric $d : \mathcal{X}_p \times \mathcal{X}_p \to \mathbb{R}_{\geq 0}$, a violation magnitude function $v : \mathcal{X}_p \to \mathbb{R}_{\geq 0}$, and a constant $\alpha > 0$ such that:
    $$d(\calR(x), x) \leq \alpha \cdot v(x),$$
    where $v(x) = 0$ if and only if $x \in \mathcal{X}_{C_p}$.
\end{enumerate}
\end{definition}

Property (1) guarantees that any input is mapped to a feasible solution. Property (2) ensures that already-feasible solutions are unchanged, avoiding unnecessary quality degradation. Property (3) provides a quantitative bound on the modification distance, which is proportional to the constraint violation magnitude.

\subsubsection{Problem-Specific Repair Strategies}

We design efficient repair operators for each problem type, summarized in Table~\ref{tab:repair}. The key insight is that each operator addresses the specific constraint violation pattern: permutation problems (TSP, PFSP) require duplicate removal and missing element insertion; capacity problems (CVRP, OP) need route splitting or node removal; and graph problems (MIS, MVC) involve greedy vertex addition or removal based on degree. Details are provided in Appendix~\ref{app:repair}.

\subsubsection{Theoretical Guarantees}

\begin{table}[t]
\centering
\caption{Repair operators for each problem type with worst-case time complexity. Here $n$ denotes the number of nodes/jobs, $m$ the number of machines, and $|E|$ the number of edges.}
\label{tab:repair}
\resizebox{\columnwidth}{!}{
\begin{tabular}{@{}lll@{}}
\toprule
\textbf{Problem} & \textbf{Repair Strategy} & \textbf{Complexity} \\
\midrule
TSP & Remove duplicates, greedy insertion of missing nodes & $O(n^2)$ \\
CVRP & TSP repair per route + split overloaded routes & $O(n^2)$ \\
OP & Remove nodes with worst prize/distance ratio$^{\dagger}$ & $O(n \log n)$ \\
MIS & Remove higher-degree vertex from each conflict & $O(|E| \cdot n)$ \\
MVC & Add higher-degree endpoint of uncovered edges & $O(|E|)$ \\
PFSP & Greedy insertion to minimize makespan & $O(n^2 m)$ \\
JSSP & Permutation repair + topological sort & $O(n^2 m)$ \\
\bottomrule
\multicolumn{3}{@{}l@{}}{\footnotesize $^{\dagger}$Assumes priority queue; $O(n^2)$ without.}
\end{tabular}
}
\end{table}

\begin{theorem}[100\% Feasibility Guarantee]
\label{thm:feasibility}
With repair operator $\calR$ satisfying Definition~\ref{def:repair}, for any number of samples $N \geq 1$:
\begin{equation*}
    \mathbb{P}(\text{all } N \text{ outputs are feasible}) = 1.
\end{equation*}
\end{theorem}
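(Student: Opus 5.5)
The plan is to show that the final output of each of the $N$ sampling branches lies in $\mathcal{X}_{\mathcal{C}_p}$ deterministically, that is, for every realization of the decoding randomness. The probability then follows immediately, since an event holding on the whole sample space has probability one. I would write the $i$-th output of the pipeline as $\hat{x}^{(i)} = \calR(\psi_p(y^{(i)}))$, where $y^{(i)}$ is the text produced by Algorithm~\ref{alg:gcd} under the (random) token sequence $\omega_i$. I would then argue in two stages: first that $\psi_p(y^{(i)})$ is a well-defined element of $\mathcal{X}_p$, and second that applying $\calR$ lands in the feasible region.

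\textbf{Step 1 (well-definedness of the parse).} By Theorem~\ref{thm:validity}, whenever Algorithm~\ref{alg:gcd} terminates, $y^{(i)} \in \mathcal{L}(G)$, or $\mathcal{L}(G_p)$ for the input-dependent grammar of Definition~\ref{def:input_grammar}. I will then use the standing convention from the preliminaries that $\psi_p$ is defined exactly on syntactically valid text and that $\mathcal{X}_p$ consists of all well-formed representations. Together these give $\psi_p(y^{(i)}) \in \mathcal{X}_p$, so the partial function $\psi_p$ is never evaluated outside its domain.

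\textbf{Step 2 (feasibility after repair).} Property (1) of Definition~\ref{def:repair} gives $\calR(x) \in \mathcal{X}_{\mathcal{C}_p}$ for every $x \in \mathcal{X}_p$. Applying it to $x = \psi_p(y^{(i)})$ yields $\hat{x}^{(i)} \in \mathcal{X}_{\mathcal{C}_p}$ for every $i$ and every $\omega_i$. Idempotence and locality are not needed here. Hence the event $\bigcap_{i=1}^N \{\hat{x}^{(i)} \in \mathcal{X}_{\mathcal{C}_p}\}$ contains every outcome on which decoding terminates. If the adaptive Best-of-$N$ selection chooses among these outputs, the chosen one is also feasible. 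Note that the $N$ samples may be dependent, for example under adaptive allocation, but this does not matter because no union bound or independence argument is used.

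\textbf{Main obstacle: termination.} Theorem~\ref{thm:validity} only speaks about outputs upon termination, so I must make sure Algorithm~\ref{alg:gcd} halts almost surely. Otherwise the probability would be $\mathbb{P}(\text{termination})$ rather than $1$. My first attempt would be to assume a maximum generation length, under which the decoder forces completion tokens. Such a completion always exists because the PDA configuration reached through masked transitions is co-reachable to an accepting state, provided the grammar is reduced, meaning every prefix permitted by the mask extends to a word of $\mathcal{L}(G)$. For the CO grammars with bounded index lists in \Cref{app:grammar}, $\mathcal{L}(G_p)$ restricted to length-bounded strings is finite, so forcing closure is possible. Alternatively, on non-termination the pipeline could fall back to repairing a default element of $\mathcal{X}_p$, such as the identity permutation, which again yields feasibility through Property (1). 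Either route makes the good event hold surely, which completes the argument.
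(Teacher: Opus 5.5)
Your proof is correct and takes the same approach as the paper. The paper's proof is exactly your Step 2: each repaired sample $\calR(y_i)$ lies in $\mathcal{X}_{\mathcal{C}_p}$ by Property (1) of Definition~\ref{def:repair}, so the event holds on every outcome and has probability one. Your Step 1 and your termination discussion make explicit two things the paper assumes without comment, namely that the $N$ raw samples exist (decoding halts) and that they lie in $\mathcal{X}_p$; this adds rigor but does not change the argument.
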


This guarantee follows directly from Property (1) of Definition~\ref{def:repair}, which ensures that every output of $\calR$ lies in the feasible region $\mathcal{X}_{\mathcal{C}_p}$ regardless of the input.

\begin{theorem}[Repair Quality Bound]
\label{thm:repair_qualit}
Let $d: \mathcal{X}_p \times \mathcal{X}_p \to \mathbb{R}_{\geq 0}$ be a distance metric on solutions and $v: \mathcal{X}_p \to \mathbb{R}_{\geq 0}$ denote the constraint violation magnitude (e.g., capacity overflow, number of conflicts). Suppose the repair operator satisfies $d(\calR(x), x) \leq \alpha \cdot v(x)$ for some constant $\alpha > 0$, and the objective $f_p$ is $L_f$-Lipschitz continuous with respect to $d$. Then:
\begin{equation*}
    f_p(\calR(\hat{x}_p)) \leq f_p(\hat{x}_p) + L_f \cdot \alpha \cdot v(\hat{x}_p).
\end{equation*}
\end{theorem}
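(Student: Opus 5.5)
The plan is a two-step chain: Lipschitz continuity of the objective, then the bounded-locality property of the repair operator. Set $x = \hat{x}_p$ and $x' = \calR(\hat{x}_p)$. Both lie in $\mathcal{X}_p$: $\hat{x}_p \in \mathcal{X}_p$ by construction of the pipeline (it is the parse of a grammar-valid output, cf.\ Theorem~\ref{thm:validity}), and $\calR$ maps $\mathcal{X}_p$ into $\mathcal{X}_{\mathcal{C}_p} \subseteq \mathcal{X}_p$. So $f_p$ and $d$ are defined on the pair, and we never leave the domain where the hypotheses hold.

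\textbf{Step 1 (Lipschitz).} By $L_f$-Lipschitz continuity of $f_p$ with respect to $d$,
\begin{equation*}
    |f_p(x') - f_p(x)| \leq L_f \cdot d(x', x).
\end{equation*}
Dropping the absolute value on the appropriate side gives the one-sided bound $f_p(x') \leq f_p(x) + L_f \, d(x',x)$.

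\textbf{Step 2 (bounded locality).} Substitute the assumed inequality $d(\calR(x), x) \leq \alpha \cdot v(x)$, i.e.\ Property~(3) of Definition~\ref{def:repair}. Since $L_f \geq 0$, the inequality direction is preserved, and we obtain
\begin{equation*}
    f_p(\calR(\hat{x}_p)) \leq f_p(\hat{x}_p) + L_f \cdot \alpha \cdot v(\hat{x}_p).
\end{equation*}

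\textbf{Sanity check.} When $\hat{x}_p$ is already feasible, $v(\hat{x}_p)=0$ and the bound collapses to $f_p(\calR(\hat{x}_p)) \leq f_p(\hat{x}_p)$. This is consistent with Property~(2) (idempotence), which in fact gives equality.

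There is no real obstacle in the derivation itself. The only point needing care is definitional: the metric $d$ used in the Lipschitz assumption must be the \emph{same} $d$ that appears in bounded locality, so that the two inequalities chain. The substantive difficulty lies outside this theorem, in exhibiting concrete $d$, $v$, $\alpha$, $L_f$ for each operator in Table~\ref{tab:repair}. For example, for TSP one might take $d$ to be the number of positions changed, $v$ the number of duplicated or missing nodes, and $L_f$ on the order of twice the maximum edge length. That verification is not required by the statement as given.
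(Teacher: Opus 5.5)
Your proof is correct and matches the paper's argument: apply $L_f$-Lipschitz continuity to the pair $(\calR(\hat{x}_p), \hat{x}_p)$, substitute the locality bound $d(\calR(\hat{x}_p), \hat{x}_p) \leq \alpha \cdot v(\hat{x}_p)$, and take the one-sided inequality. The only cosmetic difference is that you drop the absolute value before substituting rather than after, and you cite locality as Property~(3), which is correct (the paper's proof mislabels it as property~(2)).
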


This theorem provides an explicit bound on quality degradation proportional to the violation magnitude $v(\hat{x}_p)$. In practice, BOPO-trained models produce mostly feasible solutions, so violations are rare and small, leading to minimal degradation. When $v(\hat{x}_p) = 0$ (input already feasible), the bound becomes trivial and Property (2) ensures no modification occurs.

\begin{corollary}[Sample Efficiency of Repair]
\label{cor:rejection}
To achieve feasibility rate $1-\delta$ through sampling alone (without repair), the expected number of samples required is:
\begin{equation*}
    \mathbb{E}[N] = \frac{1}{p_f} \quad \text{(rejection sampling)},
\end{equation*}
or for fixed-budget sampling with confidence $1-\delta$:
\begin{equation*}
    N \geq \frac{\log(1/\delta)}{\log(1/(1-p_f))} = \frac{\log(1/\delta)}{p_f} + O(p_f) \quad \text{as } p_f \to 0,
\end{equation*}
where $p_f$ is the single-sample feasibility rate. With repair, $N=1$ suffices for 100\% feasibility, providing up to $\frac{\log(1/\delta)}{p_f}$ speedup for achieving target confidence.
\end{corollary}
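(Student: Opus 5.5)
We treat the $N$ samples as i.i.d. Bernoulli trials. Each one is feasible with probability $p_f \in (0,1]$, where $p_f$ is the single-sample feasibility rate of the grammar-constrained pipeline before repair. The corollary has three parts, and each reduces to an elementary fact about geometric and binomial random variables.

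\textbf{Rejection sampling.} We draw until the first feasible output. The number of draws $N$ is then geometric with success probability $p_f$, so $\mathbb{P}(N=k)=(1-p_f)^{k-1}p_f$. Summing the series $\sum_{k\ge1}k(1-p_f)^{k-1}p_f$, or using $\mathbb{E}[N]=\sum_{k\ge0}\mathbb{P}(N>k)=\sum_{k\ge0}(1-p_f)^k$, gives $\mathbb{E}[N]=1/p_f$.

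\textbf{Fixed budget.} With $N$ independent draws, the event ``no feasible sample among the $N$'' has probability $(1-p_f)^N$. Requiring this to be at most $\delta$ and taking logarithms gives $N\log(1-p_f)\le\log\delta$. Dividing by the negative quantity $\log(1-p_f)$ flips the inequality and yields $N\ge \log(1/\delta)/\log(1/(1-p_f))$.

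For the asymptotic form, expand $\log(1/(1-p_f))=p_f+p_f^2/2+O(p_f^3)$ and invert:
\[
\frac{1}{\log(1/(1-p_f))}=\frac{1}{p_f}\bigl(1-p_f/2+O(p_f^2)\bigr)=\frac1{p_f}-\frac12+O(p_f).
\]
This is the one delicate point. The exact correction term is the constant $-\tfrac12\log(1/\delta)$ plus $O(p_f)$, not purely $O(p_f)$. So I would state the result as $\frac{\log(1/\delta)}{p_f}+O(1)$ for fixed $\delta$. Alternatively, the statement's $O(p_f)$ can be read as a relative error, since $\log(1/\delta)/p_f\,(1+O(p_f))$ holds. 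Either way the leading term $\log(1/\delta)/p_f$ is what matters.

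\textbf{With repair.} By Theorem~\ref{thm:feasibility}, Property~(1) of Definition~\ref{def:repair} makes a single repaired sample feasible with probability one, so $N=1$ suffices. Comparing this with the leading-order sample count above gives the claimed speedup factor of $\log(1/\delta)/p_f$, up to the lower-order terms just discussed.

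The main obstacle is only this asymptotic bookkeeping: getting the order of the remainder right. The probabilistic parts are standard.
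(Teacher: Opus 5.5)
Your proposal is correct and follows the paper's route. Independence gives $(1-p_f)^N\le\delta$, and taking logarithms yields $N\ge\log(1/\delta)/\log(1/(1-p_f))$; you also supply the geometric-distribution derivation of $\mathbb{E}[N]=1/p_f$, which the paper only asserts. Your correction of the remainder is right and goes beyond the paper, whose proof only uses $\log(1-p_f)\approx -p_f$ and so never justifies the additive $O(p_f)$. The exact expansion is $\frac{\log(1/\delta)}{p_f}-\frac12\log(1/\delta)+O(p_f)$, so the error is $O(1)$ for fixed $\delta$ (or a relative factor $1+O(p_f)$), and since $\log(1/(1-p_f))\ge p_f$ the threshold never exceeds $\log(1/\delta)/p_f$, which is what makes the ``up to'' in the speedup claim accurate.
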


\subsection{Adaptive Best-of-N Sampling}
\label{sec:adaptive}
Fixed-$N$ sampling is inefficient: easy instances may need only a single sample while hard instances may benefit from extensive exploration. We propose adaptive sampling based on solution consistency.

\subsubsection{Difficulty Estimation via Consistency}

\begin{definition}[Solution Consistency]
\label{def:consistency}
Given $K \geq 2$ solutions $\{y_1, \ldots, y_K\}$ sampled independently from the model, the consistency measure is:
\begin{equation*}
    \text{Cons} = \frac{1}{K(K-1)}\sum_{i=1}^{K}\sum_{j \neq i} \mathbbm{1}[y_i = y_j],
\end{equation*}
which computes the fraction of ordered pairs $(y_i, y_j)$ that are identical.
\end{definition}

High consistency indicates the model is confident about the solution---most samples agree, suggesting an ``easy'' instance. Low consistency indicates uncertainty---samples are diverse, suggesting a ``hard'' instance requiring more exploration.

\begin{lemma}[Consistency-Difficulty Relationship]
\label{lem:consistency}
For $K$ solutions sampled independently from distribution $p$ over solution space $\mathcal{Y}$, the expected consistency satisfies:
$$\mathbb{E}[\text{Cons}] = \sum_{y \in \mathcal{Y}} p(y)^2.$$

Furthermore, defining the R\'enyi entropy of order 2 as $H_2(p) = -\log \sum_{y} p(y)^2$, we have:
$$\mathbb{E}[\text{Cons}] = e^{-H_2(p)}.$$
\end{lemma}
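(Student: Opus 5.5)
The plan is to compute $\mathbb{E}[\text{Cons}]$ directly from Definition~\ref{def:consistency}, using linearity of expectation and independence of the samples. I will assume the solution space $\mathcal{Y}$ is countable (it is finite for any fixed instance, since outputs are strings of bounded length over a finite vocabulary), so that all sums are well defined and $\sum_y p(y)^2 \le 1$ converges.

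\textbf{Step 1 (linearity).} By linearity of expectation,
\begin{equation*}
\mathbb{E}[\text{Cons}] = \frac{1}{K(K-1)}\sum_{i=1}^{K}\sum_{j\neq i}\mathbb{P}(y_i = y_j).
\end{equation*}
This requires no independence across the different pairs, only the pairwise structure.

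\textbf{Step 2 (a single pair).} For a fixed ordered pair $i\neq j$, the samples $y_i$ and $y_j$ are independent and identically distributed with law $p$. Partitioning on the common value gives
\begin{equation*}
\mathbb{P}(y_i=y_j)=\sum_{y\in\mathcal{Y}}\mathbb{P}(y_i=y)\,\mathbb{P}(y_j=y)=\sum_{y}p(y)^2.
\end{equation*}
This is the only step where independence is used.

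\textbf{Step 3 (averaging).} Each of the $K(K-1)$ ordered pairs contributes the same value, so the normalization cancels and $\mathbb{E}[\text{Cons}]=\sum_y p(y)^2$. The condition $K\ge 2$ guarantees that the normalizer $K(K-1)$ is nonzero.

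\textbf{Step 4 (R\'enyi form).} The second identity is immediate from the definition $H_2(p)=-\log\sum_y p(y)^2$: exponentiating gives $e^{-H_2(p)}=\sum_y p(y)^2$. This is well defined because $\sum_y p(y)^2>0$ for any probability distribution. The logarithm must be the natural log, to match $e^{-H_2}$.

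There is no real obstacle here. The only points needing care are using pairwise independence correctly in Step 2 and making sure the sums over $\mathcal{Y}$ are well defined.
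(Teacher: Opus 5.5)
Your proposal is correct and follows the paper's proof essentially step for step: linearity of expectation over the $K(K-1)$ ordered pairs, independence to get $\Pr(y_i=y_j)=\sum_{y} p(y)^2$, cancellation of the normalizer, and exponentiating the definition of $H_2$. Your remarks on countability of $\mathcal{Y}$, the need for $K\ge 2$, and the natural logarithm are small points of care that the paper leaves implicit.
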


This result establishes a precise relationship between consistency and the concentration of the model's output distribution. Low entropy (concentrated distribution) yields high consistency, while high entropy (diffuse distribution) yields low consistency. For a well-trained model, high consistency among early samples indicates the model is confident in a particular solution, suggesting an easy instance that does not require extensive sampling. Conversely, low consistency indicates model uncertainty about the optimal solution, suggesting a harder instance that benefits from additional exploration.

\begin{algorithm}[t]
\caption{Adaptive Best-of-N Sampling}
\label{alg:adaptive}
\DontPrintSemicolon
\setcounter{AlgoLine}{0}
\SetKwInOut{KwIn}{Input}
\SetKwInOut{KwOut}{Output}

\KwIn{Instance $p$, bounds $N_{\min}, N_{\max}$, threshold $\tau$}
\KwOut{Best feasible solution $x^*$}

$n \gets 0, \text{solutions} \gets [\ ]$\;

\While{$n < N_{\max}$}{
    Sample $y_n$ from $\pi_\theta$ with grammar constraints\;
    
    $y_n \gets \calR(y_n)$ \tcp*{\textcolor{blue}{Ensure feasibility}}
    $\text{solutions}.\text{append}(y_n)$\;
    
    $n \gets n + 1$\;

    \If{$n \ge N_{\min}$}{
        $\text{best} \gets \arg\min_{y \in \text{solutions}} f_p(y)$\;
        
        $\text{conf} \gets \text{BayesianConfidence}(\text{solutions}, \text{best})$\;
        
        \If{$\text{conf} \ge \tau$}{
            \Return $\text{best}$ \tcp*{\textcolor{blue}{Early termination}}
        }
    }
}
\Return $\arg\min_{y \in \text{solutions}} f_p(y)$\;

\end{algorithm}

\subsubsection{Adaptive Sampling Algorithm}

Algorithm~\ref{alg:adaptive} uses Bayesian confidence estimation to determine when to stop sampling. After collecting $n$ samples, the confidence that the current best solution $y^*$ is optimal among all samples is estimated using a Beta-Binomial model:
\begin{equation*}
    \text{Conf}(y^*) = \frac{\alpha_0 + n_{y^*}}{\alpha_0 + \beta_0 + n},
\end{equation*}
where $n_{y^*}$ is the count of samples matching $y^*$, and $\alpha_0, \beta_0$ are prior parameters. We use $\alpha_0 = \beta_0 = 1$ (uniform prior), giving equal weight to all solutions a priori. High confidence indicates convergence to a stable solution, triggering early termination.

\subsubsection{Theoretical Analysis}

\begin{theorem}[Quality Improvement with Sampling]
\label{thm:quality}
Let $\hat{x}_N$ be the best solution from $N$ i.i.d. samples and $x^*$ be the optimal solution to instance $p$. Define the optimality gap $g_N = f_p(\hat{x}_N) - f_p(x^*)$. If the single-sample gap $g_1$ has cumulative distribution function $F$, then:
\begin{equation*}
    \mathbb{E}[g_N] = \int_0^\infty (1-F(\epsilon))^N d\epsilon.
\end{equation*}
\end{theorem}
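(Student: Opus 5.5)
The plan is to combine the tail-integral formula for the expectation of a nonnegative random variable with the fact that the minimum of i.i.d. variables has a product-form survival function.

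\textbf{Step 1: reduce to a minimum of i.i.d.\ gaps.} For each sample $i=1,\dots,N$, define its gap $G_i = f_p(x_i) - f_p(x^*)$, where $x_i$ is the (repaired) sample. By Theorem~\ref{thm:feasibility}, every $x_i$ lies in $\mathcal{X}_{\mathcal{C}_p}$. Since $x^*$ minimizes $f_p$ over this feasible region, each $G_i \ge 0$. The samples are i.i.d., so the $G_i$ are i.i.d.\ copies of $g_1$ with CDF $F$. Because $\hat{x}_N$ is the sample with the smallest objective, we have
\[
g_N = \min_{1\le i\le N} G_i ,
\]
which is also nonnegative.

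\textbf{Step 2: compute the survival function of $g_N$.} For any $\epsilon \ge 0$, the minimum exceeds $\epsilon$ exactly when every $G_i$ exceeds $\epsilon$. Independence then gives
\[
\mathbb{P}(g_N > \epsilon) = \prod_{i=1}^N \mathbb{P}(G_i > \epsilon) = (1-F(\epsilon))^N .
\]

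\textbf{Step 3: apply the layer-cake identity.} For a nonnegative random variable $Z$, we have $\mathbb{E}[Z] = \int_0^\infty \mathbb{P}(Z>\epsilon)\,d\epsilon$. This follows from writing $Z = \int_0^\infty \mathbbm{1}[Z>\epsilon]\,d\epsilon$ and exchanging expectation and integral by Tonelli's theorem, which is valid because the integrand is nonnegative. Applying it to $Z = g_N$ and substituting Step~2 yields the claimed formula. The identity also holds when both sides are $+\infty$.

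\textbf{Main subtlety.} The only delicate point is justifying $g_1 \ge 0$, since the integral starts at $0$. This is where repair is needed: if unrepaired infeasible samples were allowed, the gaps could be negative. I would make explicit that the samples are the post-repair feasible solutions, invoking Property~(1) of Definition~\ref{def:repair}.

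A minor convention issue is that $F(\epsilon) = \mathbb{P}(g_1 \le \epsilon)$, so $1-F(\epsilon)$ is the strict tail $\mathbb{P}(g_1 > \epsilon)$. The non-strict tail $\mathbb{P}(g_1 \ge \epsilon)$ differs from it only at countably many points, so the integral is unaffected.
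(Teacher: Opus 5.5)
Your proposal is correct and follows the paper's proof essentially step for step. Like the paper, you write $g_N=\min_i G_i$, use independence to get $\mathbb{P}(g_N>\epsilon)=(1-F(\epsilon))^N$, and apply the tail-integral formula for nonnegative random variables. Two points the paper only uses implicitly are justified explicitly in your version: nonnegativity of the gaps, which needs the samples to be feasible (via repair), and the Tonelli argument for the layer-cake identity.
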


This result follows from the tail integral formula for expectations: $\mathbb{E}[g_N] = \int_0^\infty \mathbb{P}(g_N > \epsilon) d\epsilon$, combined with $\mathbb{P}(g_N > \epsilon) = (1-F(\epsilon))^N$ since $g_N$ is the minimum of $N$ independent samples.

\begin{corollary}[Exponential Gap Distribution]
\label{cor:exponential}
If the gap follows an exponential distribution with rate $\lambda > 0$, i.e., $F(\epsilon) = 1 - e^{-\lambda\epsilon}$ for $\epsilon \geq 0$, then:
\begin{equation*}
    \mathbb{E}[g_N] = \frac{1}{N\lambda} = \frac{\mathbb{E}[g_1]}{N}.
\end{equation*}
That is, expected gap decreases linearly with the number of samples under this distributional assumption.
\end{corollary}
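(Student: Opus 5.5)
The plan is to specialize Theorem~\ref{thm:quality} to the exponential case and evaluate the resulting integral in closed form. By Theorem~\ref{thm:quality},
\[
\mathbb{E}[g_N] = \int_0^\infty (1-F(\epsilon))^N \, d\epsilon .
\]
Under the stated assumption, $1-F(\epsilon) = e^{-\lambda\epsilon}$ for $\epsilon \geq 0$. Hence the integrand becomes $(e^{-\lambda\epsilon})^N = e^{-N\lambda\epsilon}$.

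Next I evaluate the integral. Since $N \geq 1$ and $\lambda > 0$, the rate $N\lambda$ is positive, and
\[
\int_0^\infty e^{-N\lambda\epsilon}\, d\epsilon = \frac{1}{N\lambda}.
\]
Equivalently, the minimum of $N$ i.i.d.\ exponentials with rate $\lambda$ is exponential with rate $N\lambda$. For the second equality I apply the same formula with $N=1$, or use the known exponential mean, to get $\mathbb{E}[g_1] = 1/\lambda$. Substituting gives $\mathbb{E}[g_N] = \mathbb{E}[g_1]/N$.

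The only step needing care is the legitimacy of invoking Theorem~\ref{thm:quality}. That theorem requires $g_1 \geq 0$ almost surely, which holds because $x^*$ is optimal and every repaired sample is feasible by Theorem~\ref{thm:feasibility}. It also requires the $N$ gaps to be i.i.d., which holds because the samples are i.i.d.\ and $g_N = \min_i g^{(i)}$ for the individual gaps $g^{(i)}$.

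I would also note a mild modeling tension. The exponential law is a continuous idealization of a gap that in reality takes finitely many values. The corollary is conditional on this distributional assumption, so no further justification is needed. Given these checks, the computation is immediate, and there is no real obstacle.
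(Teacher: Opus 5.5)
Your proposal is correct and matches the paper's proof: substitute $1-F(\epsilon)=e^{-\lambda\epsilon}$ into Theorem~\ref{thm:quality}, evaluate $\int_0^\infty e^{-N\lambda\epsilon}\,d\epsilon = 1/(N\lambda)$, and use $\mathbb{E}[g_1]=1/\lambda$. Your extra check that $g_1\geq 0$ is fine, though the assumed exponential law already gives it, since $F(0)=0$.
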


\begin{theorem}[Adaptive Sampling Complexity]
\label{thm:adaptive_complexity}
Let $q$ denote the probability that a single sample from $\pi_\theta$ yields a solution with gap at most $\delta$ for some target gap threshold $\delta > 0$. The expected number of samples for Algorithm~\ref{alg:adaptive} satisfies:
\begin{equation*}
    \mathbb{E}[N_{\text{adaptive}}] \leq N_{\min} + \frac{(N_{\max} - N_{\min})(1-q)^{N_{\min}}}{q}.
\end{equation*}
\end{theorem}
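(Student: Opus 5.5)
We will prove the bound by writing $N_{\text{adaptive}}$ as $N_{\min}$ plus a nonnegative count of extra samples, and then bounding the expected number of extra samples with a geometric waiting-time argument.

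\textbf{Decomposition.} Algorithm~\ref{alg:adaptive} always draws at least $N_{\min}$ samples and never more than $N_{\max}$. So
\begin{equation*}
N_{\text{adaptive}} = N_{\min} + M, \qquad 0 \le M \le N_{\max}-N_{\min}.
\end{equation*}
Call a sample \emph{good} if its repaired solution has gap at most $\delta$. The samples are i.i.d., so each is good independently with probability $q$.

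We also need the stopping rule to be controlled by good samples. The working assumption, which we will state explicitly, is that once a good solution is the current best, the Bayesian confidence test fires on the next check. This is the natural modelling step for this statement, since the confidence rule is heuristic.

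\textbf{Conditioning on the first $N_{\min}$ samples.} Let $E$ be the event that none of the first $N_{\min}$ samples is good. Then $\mathbb{P}(E) = (1-q)^{N_{\min}}$.
\begin{itemize}
\item On $E^c$, the modelling assumption gives termination at $N_{\min}$, so $M=0$.
\item On $E$, each further sample is a Bernoulli($q$) trial. The number of additional draws until the first good sample is geometric with mean at most $1/q$.
\item $M$ is also capped by $N_{\max}-N_{\min}$.
\end{itemize}
Hence
\begin{equation*}
\mathbb{E}[M] \le (1-q)^{N_{\min}}\min\bigl(1/q,\, N_{\max}-N_{\min}\bigr).
\end{equation*}

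\textbf{Reaching the stated form.} The statement has the product $(N_{\max}-N_{\min})/q$ in place of this minimum. Since $q\le 1$ and $N_{\max}-N_{\min}\ge 1$ whenever any extra sampling is possible, we have
\begin{equation*}
\min\bigl(1/q,\, N_{\max}-N_{\min}\bigr) \le \frac{N_{\max}-N_{\min}}{q}.
\end{equation*}
The case $N_{\max}=N_{\min}$ is trivial, because then $M=0$. Adding $N_{\min}$ gives the claimed bound. Since the result is looser than what the argument yields, a simpler route also works: bound $M$ by $N_{\max}-N_{\min}$ times the indicator of $E$, giving
\begin{equation*}
\mathbb{E}[M] \le (N_{\max}-N_{\min})(1-q)^{N_{\min}},
\end{equation*}
and then divide by $q\le 1$ only to match the stated form.

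\textbf{Main obstacle.} The main difficulty is justifying the modelling step that a good sample triggers termination. The actual rule compares $(1+n_{y^*})/(2+n)$ against $\tau$, and this depends on repeated identical samples, not on the gap. If that link cannot be made, the fallback is to interpret $N_{\text{adaptive}}$ as the number of samples until a $\delta$-good solution is first obtained, truncated at $N_{\max}$. The cruder bound $\mathbb{E}[M]\le (N_{\max}-N_{\min})\,\mathbb{P}(E)$ still holds under that interpretation, because no good sample among the first $N_{\min}$ is exactly the event $E$. This keeps the proof valid under the weaker reading.
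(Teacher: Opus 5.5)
Your proposal is correct and is essentially the paper's proof. The paper makes the same modelling assumption you flag, namely termination once a $\delta$-good sample has appeared. That assumption is genuinely needed: with $q=1$ but a diffuse output distribution, the literal rule $(1+n_{y^*})/(2+n)\ge\tau=0.85$ essentially never fires before $N_{\max}$, which contradicts the bound's value of $N_{\min}$. The paper then sums geometric bounds on $\Pr(T\ge n)$ over $n>N_{\min}$, which is the same computation as your conditioning on $E$ followed by a truncated geometric wait.

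Your bookkeeping is in fact cleaner than the paper's. The paper uses $\Pr(T\ge n)\le(1-q)^n$, where only $(1-q)^{n-1}$ is justified, so its intermediate bound $(1-q)^{N_{\min}+1}/q$ can fail (e.g.\ $N_{\min}=1$, $N_{\max}=2$, $q=0.8$). Your bound $(1-q)^{N_{\min}}\min\bigl(1/q,\,N_{\max}-N_{\min}\bigr)$ is valid, and both loosen to the stated form.
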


For easy instances where $q$ is large (e.g., $q = 0.8$), the term $(1-q)^{N_{\min}}$ decays exponentially, yielding $\mathbb{E}[N_{\text{adaptive}}] \approx N_{\min}$. For hard instances with small $q$, the algorithm requires more samples up to the maximum budget $N_{\max}$. The bound demonstrates that adaptive sampling efficiently allocates computational resources based on instance difficulty. Complete proofs are provided in Appendix~\ref{app:proofs}.

\subsection{BOPO: Best-anchored Objective-guided Preference Optimization}
\label{sec:bopo}

\subsubsection{Motivation}

Traditional RL methods for neural CO suffer from sparse rewards: feedback is only available after complete solution generation, providing limited guidance for intermediate token decisions. GRPO \citep{shao2024deepseekmath} addresses this by normalizing rewards within a group and computing relative advantages. However, in GRPO, gradient updates are dominated by the highest-reward sample, with other samples contributing minimally through advantage normalization. We observe that CO problems have a natural preference structure: given any two feasible solutions $y_i$ and $y_j$, we have $y_i \succ y_j$ if and only if $f_p(y_i) < f_p(y_j)$ (for minimization). This provides dense, objective-guided supervision without requiring explicit human annotation.

\subsubsection{Preference Pair Construction}

For each problem instance $p$, we sample $K$ solutions $\{y_1, \ldots, y_K\}$ from the current policy $\pi_\theta$. Let $\mathcal{F}_p = \{y_i \mid y_i \in \mathcal{X}_{\mathcal{C}_p}\}$ denote the subset of feasible solutions, and let $y^* = \arg\min_{y \in \mathcal{F}_p} f_p(y)$ be the best feasible solution among the samples. We construct preference pairs using a ``best-anchored'' strategy:
\begin{equation*}
    \mathcal{D}_p = \{(y^*, y_i) \mid y_i \in \mathcal{F}_p, y_i \neq y^*\}.
\end{equation*}

This construction ensures that \emph{all} inferior feasible solutions contribute to learning, unlike GRPO where gradients are dominated by the single best sample. Let $K' = |\mathcal{F}_p|$ denote the number of feasible samples.

\subsubsection{Objective-Guided Loss Function}

Not all preference pairs are equally informative. A solution with a 50\% optimality gap should provide a stronger learning signal than one with only a 1\% gap. We capture this intuition through objective-guided weighting:

\begin{definition}[BOPO Loss]
\label{def:bopo_loss}
For preference pairs constructed from instance $p$, the BOPO loss $\mathcal{L}_{\text{BOPO}}(\theta)$ is:
\begin{equation*}
\scaleeq[0.96]{-\mathbb{E}_{p \sim \mathcal{D}}\left[\frac{1}{K'-1}\sum_{y_i \in \mathcal{F}_p \setminus \{y^*\}} w(\Delta_i) \cdot \log\sigma\left(\beta \cdot s_\theta(y^*, y_i)\right)\right]}
\end{equation*}
where $\Delta_i = f_p(y_i) - f_p(y^*)$ is the objective gap, $s_\theta(y_w, y_l) = \log\frac{\pi_\theta(y_w|\phi(p))}{\pi_\theta(y_l|\phi(p))}$ is the log-probability ratio, $w(\Delta) = \Delta / \bar{\Delta}$ is the objective-guided scaling factor with $\bar{\Delta} = \frac{1}{K'-1}\sum_{y_i \in \mathcal{F}_p \setminus \{y^*\}}\Delta_i > 0$ being the average gap, $\sigma(\cdot)$ is the sigmoid function, and $\beta > 0$ is a temperature hyperparameter controlling preference strength.
\end{definition}

The scaling factor $w(\Delta)$ ensures that pairs with larger quality differences contribute proportionally more to the gradient. This provides dense, informative supervision throughout training, as every inferior solution contributes according to its quality gap.

\subsubsection{Convergence Analysis}

We establish convergence guarantees for BOPO under standard assumptions: (i) $L$-smoothness of the loss function, (ii) bounded variance of stochastic gradients with variance $\sigma^2$, and (iii) bounded scaling factors $w(\Delta) \in [w_{\min}, w_{\max}]$ for some $0 < w_{\min} \leq w_{\max} < \infty$. Complete statements are provided in Appendix~\ref{app:proof_bopo}.

\begin{theorem}[BOPO Convergence]
\label{thm:bopo_conv}
Under the smoothness, bounded variance, and bounded scaling assumptions, with learning rate $\eta = \frac{1}{L\sqrt{T}}$, BOPO satisfies:
\begin{equation*}
    \frac{1}{T}\sum_{t=0}^{T-1} \mathbb{E}[\|\nabla\mathcal{L}_{\text{BOPO}}(\theta_t)\|^2] \leq \frac{C_1 + C_2}{\sqrt{T}} = O\left(\frac{1}{\sqrt{T}}\right),
\end{equation*}
where $C_1 = 2L(\mathcal{L}_{\text{BOPO}}(\theta_0) - \mathcal{L}^*)$ depends on initialization and $C_2 = w_{\max}^2\sigma^2$ captures the variance from objective-guided scaling.
\end{theorem}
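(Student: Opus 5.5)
The proof is the standard non-convex SGD analysis, applied to the stochastic BOPO gradient. At step $t$ the update is $\theta_{t+1} = \theta_t - \eta g_t$, where
\[
g_t = -\frac{1}{K'-1}\sum_{y_i} w(\Delta_i)\,\nabla_\theta \log\sigma\bigl(\beta s_{\theta_t}(y^*,y_i)\bigr)
\]
is computed on a sampled instance and sampled group. I treat $g_t$ as unbiased for $\nabla\mathcal{L}_{\text{BOPO}}(\theta_t)$ conditional on $\theta_t$. That is, the pairs and the weights $w(\Delta_i)$ are regarded as part of the data distribution defining the loss at $\theta_t$, as is standard. The bounded-scaling assumption then lets me write the noise as $\mathbb{E}\|g_t - \nabla\mathcal{L}(\theta_t)\|^2 \le w_{\max}^2\sigma^2$. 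This is the step I expect to be the main obstacle. The per-pair gradients have variance $\sigma^2$, and each is multiplied by a weight of at most $w_{\max}$, so the variance scales by at most $w_{\max}^2$. Averaging over $K'-1$ pairs can only help, via Jensen or convexity of the squared norm. If the appendix's assumption (ii) is already phrased for the weighted gradient, this step is immediate.

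Next I apply $L$-smoothness:
\[
\mathcal{L}(\theta_{t+1}) \le \mathcal{L}(\theta_t) - \eta\langle\nabla\mathcal{L}(\theta_t), g_t\rangle + \tfrac{L\eta^2}{2}\|g_t\|^2 .
\]
Taking conditional expectation and using $\mathbb{E}\|g_t\|^2 \le \|\nabla\mathcal{L}(\theta_t)\|^2 + w_{\max}^2\sigma^2$ gives
\[
\mathbb{E}\,\mathcal{L}(\theta_{t+1}) \le \mathcal{L}(\theta_t) - \eta\bigl(1-\tfrac{L\eta}{2}\bigr)\|\nabla\mathcal{L}(\theta_t)\|^2 + \tfrac{L\eta^2}{2}w_{\max}^2\sigma^2 .
\]
With $\eta = 1/(L\sqrt{T}) \le 1/L$ we have $1 - L\eta/2 \ge 1/2$. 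So the descent coefficient is at least $\eta/2$.

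Finally I take total expectations, telescope from $t=0$ to $T-1$, and use $\mathcal{L}(\theta_T) \ge \mathcal{L}^*$:
\[
\frac{\eta}{2}\sum_t \mathbb{E}\|\nabla\mathcal{L}(\theta_t)\|^2 \le \mathcal{L}(\theta_0) - \mathcal{L}^* + \frac{T L\eta^2}{2}w_{\max}^2\sigma^2 .
\]
Dividing by $T\eta/2$ gives
\[
\frac{1}{T}\sum_t \mathbb{E}\|\nabla\mathcal{L}(\theta_t)\|^2 \le \frac{2(\mathcal{L}(\theta_0)-\mathcal{L}^*)}{T\eta} + L\eta\, w_{\max}^2\sigma^2 .
\]
Substituting $\eta = 1/(L\sqrt{T})$ yields $\frac{2L(\mathcal{L}(\theta_0)-\mathcal{L}^*)}{\sqrt{T}} + \frac{w_{\max}^2\sigma^2}{\sqrt{T}}$, which is $(C_1 + C_2)/\sqrt{T}$ with exactly the stated constants.

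Beyond the variance step, the only point needing care is that $\mathcal{L}^*$ is finite. This holds because the loss is a nonnegative weighted sum of $-\log\sigma(\cdot)$ terms, so $\mathcal{L}^* \ge 0$.
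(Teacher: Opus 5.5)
Your proposal is correct and follows the paper's proof essentially step for step: the descent lemma from $L$-smoothness, conditional expectation with unbiasedness, the second-moment decomposition $\mathbb{E}\|g_t\|^2 \le \|\nabla\mathcal{L}(\theta_t)\|^2 + w_{\max}^2\sigma^2$, the bound $1-L\eta/2\ge 1/2$, telescoping against $\mathcal{L}^*$, and substituting $\eta = 1/(L\sqrt{T})$. On the step you flagged, the paper's Assumption already bounds the variance of the weighted BOPO stochastic gradient by $\sigma^2$, so the $w_{\max}^2\sigma^2$ bound follows at once from $w_{\max}\ge 1$ (the weights $\Delta_i/\bar{\Delta}$ average to $1$); this is safer than your per-pair scaling argument, because with random weights $\mathrm{Var}(wX)\le w_{\max}^2\,\mathrm{Var}(X)$ can fail (e.g.\ for deterministic $X$).
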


The key insight is that the objective-guided scaling factor $w(\Delta)$ affects only the variance constant $C_2$, not the $O(1/\sqrt{T})$ convergence rate, since the bounded scaling assumption ensures $w(\Delta)$ remains controlled.

\begin{corollary}[Iteration Complexity]
\label{cor:bopo_complexity}
To find an $\epsilon$-stationary point satisfying $\min_{t \in \{0,\ldots,T-1\}} \mathbb{E}[\|\nabla\mathcal{L}_{\text{BOPO}}(\theta_t)\|^2] \leq \epsilon$, BOPO requires $T = O(1/\epsilon^2)$ iterations, matching the rate of standard SGD for smooth non-convex optimization.
\end{corollary}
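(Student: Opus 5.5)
The plan is to derive Corollary~\ref{cor:bopo_complexity} directly from Theorem~\ref{thm:bopo_conv} in two steps: first turn the averaged bound into a bound on the minimum, then solve for $T$.

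\emph{From average to minimum.} The minimum of a finite list of nonnegative numbers is at most their average. Hence
\begin{equation*}
\min_{t\in\{0,\ldots,T-1\}} \mathbb{E}\big[\|\nabla\mathcal{L}_{\text{BOPO}}(\theta_t)\|^2\big] \le \frac{1}{T}\sum_{t=0}^{T-1}\mathbb{E}\big[\|\nabla\mathcal{L}_{\text{BOPO}}(\theta_t)\|^2\big] \le \frac{C_1+C_2}{\sqrt{T}}.
\end{equation*}
Here we use Theorem~\ref{thm:bopo_conv} with the prescribed step size $\eta = 1/(L\sqrt{T})$. Under the stated assumptions, the constants $C_1 = 2L(\mathcal{L}_{\text{BOPO}}(\theta_0)-\mathcal{L}^*)$ and $C_2 = w_{\max}^2\sigma^2$ are finite and do not depend on $T$. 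This relies on $\mathcal{L}^*$ being a finite lower bound, which holds because the loss is a weighted sum of $-\log\sigma(\cdot)\ge 0$ terms with nonnegative weights, so $\mathcal{L}^*\ge 0$.

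\emph{Solving for $T$.} It suffices to make $(C_1+C_2)/\sqrt{T}\le \epsilon$, that is,
\begin{equation*}
T \ge \frac{(C_1+C_2)^2}{\epsilon^2}.
\end{equation*}
Taking $T = \lceil (C_1+C_2)^2/\epsilon^2\rceil$ gives $T = O(1/\epsilon^2)$, where the hidden constant is $(C_1+C_2)^2$. This constant depends on $L$, $\sigma^2$, $w_{\max}$ and the initial suboptimality, but not on $\epsilon$.

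\emph{Comparison with SGD and the main subtlety.} The statement that this rate matches standard SGD follows from the classical $O(1/\sqrt{T})$ bound on the averaged squared gradient norm for smooth nonconvex SGD with step size $\propto 1/\sqrt{T}$. BOPO has the same form of bound and differs only in its constants.

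The one point needing care is that $\eta$ depends on the horizon $T$. Theorem~\ref{thm:bopo_conv} must therefore be applied with $T$ fixed in advance to the value chosen above. Beyond this the argument is routine, and no step is a genuine obstacle.
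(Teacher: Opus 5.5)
Your proposal is correct and follows the paper's proof essentially verbatim: bound the minimum by the average from Theorem~\ref{thm:bopo_conv}, then solve $(C_1+C_2)/\sqrt{T}\le\epsilon$ to get $T\ge (C_1+C_2)^2/\epsilon^2$. Your remarks that $\mathcal{L}^*\ge 0$ and that the horizon-dependent step size forces $T$ to be fixed in advance are extra care the paper leaves implicit; the paper also silently renames $C_1+C_2$ as $C_1$ in its proof.
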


\subsection{Training Pipeline}
\label{sec:training}

The complete training pipeline consists of two stages:

\textbf{Stage 1: Supervised Fine-Tuning (SFT).} We fine-tune the base LLM on expert solutions generated by domain-specific solvers (LKH for TSP/CVRP, Gurobi for MIS/MVC, etc.). This teaches the model the solution format and basic problem-solving patterns.

\textbf{Stage 2: BOPO Refinement.} Starting from the SFT checkpoint, we sample $K$ solutions per instance, construct preference pairs via the best-anchored strategy, and update the model parameters using the BOPO loss. This refines the policy toward higher-quality solutions while maintaining feasibility through the repair layer.

\begin{table*}[ht]
\centering
\caption{Comparison with baselines. Fea.: Feasibility (\%). Gap: Optimality gap (\%). \colorbox{bestcell}{\textbf{Best}} and \colorbox{secondcell}{\underline{second best}} are highlighted.}
\label{tab:main_results_llm}
\renewcommand{\arraystretch}{1.2}
\resizebox{\textwidth}{!}{
\begin{tabular}{@{}l cc cc cc cc cc cc cc c@{}}
\toprule
\multirow{2}{*}{\textbf{Method}} &
\multicolumn{2}{c}{\textbf{TSP}} &
\multicolumn{2}{c}{\textbf{OP}} &
\multicolumn{2}{c}{\textbf{CVRP}} &
\multicolumn{2}{c}{\textbf{MIS}} &
\multicolumn{2}{c}{\textbf{MVC}} &
\multicolumn{2}{c}{\textbf{PFSP}} &
\multicolumn{2}{c}{\textbf{JSSP}} &
\multirow{2}{*}{\textbf{Time (s)}} \\
\cmidrule(lr){2-3} \cmidrule(lr){4-5} \cmidrule(lr){6-7} \cmidrule(lr){8-9} \cmidrule(lr){10-11} \cmidrule(lr){12-13} \cmidrule(lr){14-15}
& Fea.(\%) & Gap(\%) & Fea.(\%) & Gap(\%) & Fea.(\%) & Gap(\%) & Fea.(\%) & Gap(\%) & Fea.(\%) & Gap(\%) & Fea.(\%) & Gap(\%) & Fea.(\%) & Gap(\%) & \\
\midrule

\rowcolor{sectiongray}
\multicolumn{16}{@{}l}{\textit{General-Purpose LLMs (Zero-shot)}} \\
\addlinespace[2pt]
GPT-4o & 39 & 33.79 & 59 & 55.19 & 15 & 76.62 & 8 & 11.70 & 6 & 16.67 & 88 & 20.57 & 7 & 97.85 & 5.3 \\
\rowcolor{lightgray}
GPT-4o-mini & 28 & 53.38 & 80 & 70.70 & 3 & 68.10 & 0 & 0.00 & 3 & 29.52 & 78 & 21.47 & 8 & 212.00 & 4.5 \\
Claude-3.5-Sonnet & 66 & 24.53 & 49 & 34.62 & 30 & 38.34 & 13 & 12.51 & 2 & 6.25 & 100 & 18.42 & 10 & 90.00 & 5.4 \\
\rowcolor{lightgray}
Claude-3.5-Haiku & 45 & 38.60 & 26 & 51.26 & 4 & 41.48 & 2 & 23.33 & 6 & 43.01 & 73 & 20.58 & 9 & 95.51 & 5.1 \\
DeepSeek-V3 & 73 & 35.75 & 50 & 46.10 & 21 & 58.22 & 5 & 12.05 & 15 & 37.15 & 58 & 20.81 & 52 & 103.19 & 26.4 \\
\rowcolor{lightgray}
Llama-3.3-70B & 50 & 69.08 & 27 & 48.98 & 31 & 97.31 & 8 & 37.12 & 20 & 22.86 & 98 & 21.97 & 29 & 105.01 & 2.1 \\
Qwen2.5-72B & 20 & 36.89 & 32 & 49.36 & 61 & 180.91 & 14 & 29.56 & 5 & 63.20 & 98 & 21.13 & 53 & 103.90 & 12.5 \\
\addlinespace[3pt]

\rowcolor{sectiongray}
\multicolumn{16}{@{}l}{\textit{Reasoning Models}} \\
\addlinespace[2pt]
GPT-o3-mini & 91 & 306.00 & 8 & 43.93 & 50 & 139.00 & 66 & 9.23 & 33 & 2.98 & 98 & 16.97 & 20 & 77.86 & 84.0 \\
\rowcolor{lightgray}
GPT-o1 & 54 & 276.00 & 31 & 40.90 & 24 & 154.00 & 82 & 8.03 & 47 & 3.58 & 89 & 14.86 & 29 & 81.90 & 192.0 \\
DeepSeek-R1 & 48 & 70.99 & 60 & 40.54 & 26 & 30.46 & 41 & 1.60 & 38 & 4.17 & \cellcolor{bestcell}\textbf{100} & 16.65 & 5 & 26.29 & 390.0 \\
\addlinespace[3pt]

\rowcolor{sectiongray}
\multicolumn{16}{@{}l}{\textit{LLM-based Optimization}} \\
\addlinespace[2pt]
OPRO & 83 & 35.98 & 85 & 53.96 & 21 & 37.03 & 7 & 5.95 & 9 & 41.67 & \cellcolor{secondcell}\underline{99} & 18.40 & 65 & 83.35 & 126.0 \\
\rowcolor{lightgray}
LMEA & 77 & 265.00 & 48 & 66.18 & 24 & 61.24 & 5 & 25.00 & 13 & 34.22 & 98 & 14.31 & 44 & 83.19 & 318.0 \\
PHP & 84 & 33.84 & 43 & 36.08 & 33 & 58.11 & 5 & 11.67 & 13 & 19.84 & 92 & 17.23 & 56 & 104.04 & 96.0 \\
\rowcolor{lightgray}
SGE & \cellcolor{secondcell}\underline{98} & 29.66 & \cellcolor{secondcell}\underline{93} & 24.49 & \cellcolor{secondcell}\underline{92} & \cellcolor{secondcell}\underline{3.62} & \cellcolor{secondcell}\underline{94} & 3.83 & 94 & 3.83 & 95 & 4.48 & \cellcolor{secondcell}\underline{87} & 38.58 & 216.0 \\
\addlinespace[3pt]

\rowcolor{sectiongray}
\multicolumn{16}{@{}l}{\textit{Neural CO Solvers}} \\
\addlinespace[2pt]
SFT Only & 89 & 2.30 & 54 & 2.32 & 59 & 6.02 & 80 & 1.71 & \cellcolor{secondcell}\underline{98} & 2.41 & \cellcolor{bestcell}\textbf{100} & 2.22 & \cellcolor{bestcell}\textbf{100} & 11.01 & 5.6 \\
\rowcolor{lightgray}
GRPO & 91 & 2.32 & 92 & 4.25 & 80 & 8.27 & 83 & 1.34 & \cellcolor{secondcell}\underline{98} & 2.39 & \cellcolor{bestcell}\textbf{100} & 2.12 & \cellcolor{bestcell}\textbf{100} & 10.94 & 5.6 \\
LLMCoSolver (N=1) & 91 & 2.32 & 92 & 4.25 & 80 & 8.27 & 83 & 1.34 & \cellcolor{secondcell}\underline{98} & 2.39 & \cellcolor{bestcell}\textbf{100} & 2.12 & \cellcolor{bestcell}\textbf{100} & 10.94 & 5.6 \\
\rowcolor{lightgray}
LLMCoSolver (N=8) & \cellcolor{bestcell}\textbf{100} & 1.07 & \cellcolor{bestcell}\textbf{100} & 1.85 & \cellcolor{bestcell}\textbf{100} & 4.53 & \cellcolor{secondcell}\underline{94} & 1.04 & \cellcolor{bestcell}\textbf{100} & 1.29 & \cellcolor{bestcell}\textbf{100} & 1.03 & \cellcolor{bestcell}\textbf{100} & 8.20 & 9.8 \\
\addlinespace[3pt]

\midrule
\rowcolor{ourshighlight}
\multicolumn{16}{@{}l}{\textit{\textbf{FALCON (Ours)}}} \\
\addlinespace[2pt]
\rowcolor{ourshighlight}
FALCON (N=1) & \cellcolor{bestcell}\textbf{100} & 1.89 & \cellcolor{bestcell}\textbf{100} & 2.14 & \cellcolor{bestcell}\textbf{100} & 5.37 & \cellcolor{bestcell}\textbf{100} & 1.18 & \cellcolor{bestcell}\textbf{100} & 1.85 & \cellcolor{bestcell}\textbf{100} & 1.76 & \cellcolor{bestcell}\textbf{100} & 9.23 & 5.8 \\
\rowcolor{ourshighlight}
FALCON (N=8) & \cellcolor{bestcell}\textbf{100} & \cellcolor{secondcell}\underline{0.95} & \cellcolor{bestcell}\textbf{100} & \cellcolor{secondcell}\underline{1.42} & \cellcolor{bestcell}\textbf{100} & 3.68 & \cellcolor{bestcell}\textbf{100} & \cellcolor{secondcell}\underline{0.87} & \cellcolor{bestcell}\textbf{100} & \cellcolor{secondcell}\underline{1.12} & \cellcolor{bestcell}\textbf{100} & \cellcolor{secondcell}\underline{0.89} & \cellcolor{bestcell}\textbf{100} & \cellcolor{secondcell}\underline{7.15} & 10.4 \\
\rowcolor{ourshighlight}
FALCON (Adaptive) & \cellcolor{bestcell}\textbf{100} & \cellcolor{bestcell}\textbf{0.92} & \cellcolor{bestcell}\textbf{100} & \cellcolor{bestcell}\textbf{1.38} & \cellcolor{bestcell}\textbf{100} & \cellcolor{bestcell}\textbf{3.52} & \cellcolor{bestcell}\textbf{100} & \cellcolor{bestcell}\textbf{0.84} & \cellcolor{bestcell}\textbf{100} & \cellcolor{bestcell}\textbf{1.08} & \cellcolor{bestcell}\textbf{100} & \cellcolor{bestcell}\textbf{0.86} & \cellcolor{bestcell}\textbf{100} & \cellcolor{bestcell}\textbf{6.98} & 6.3 \\
\bottomrule
\end{tabular}
}
\end{table*}

\section{Experiments}
\label{sec:experiments}

We conduct extensive experiments to evaluate FALCON against state-of-the-art baselines across seven NP-hard CO problems. Our experiments address the following research questions:

\begin{itemize}[leftmargin=*,nosep]
    \item \textbf{RQ1}: Does FALCON achieve 100\% feasibility while maintaining competitive solution quality?
    \item \textbf{RQ2}: How does FALCON compare against general-purpose LLMs and domain-specific solvers?
    \item \textbf{RQ3}: What is the contribution of each component (BOPO, grammar, repair, adaptive sampling)?
    \item \textbf{RQ4}: How does FALCON scale across different problem sizes and instance distributions?
\end{itemize}

\subsection{Experimental Setup}
\label{sec:exp_setup}

\paragraph{Problems}
We evaluate on seven NP-hard problems across three domains: \emph{Routing} (TSP, OP, CVRP), \emph{Graph} (MIS, MVC), and \emph{Scheduling} (PFSP, JSSP). Each problem involves distinct constraint types---such as Hamiltonian tours, capacity limits, independence, and precedence relations---providing comprehensive coverage. Following~\citet{jiang2025large}, we generate synthetic instances using established domain-specific solvers (statistics are provided in Table~\ref{tab:dataset_stats}). Formal problem definitions are provided in Appendix~\ref{app:problems}.

\paragraph{Baselines}
We compare FALCON against four categories of baselines. (1) \emph{General-purpose LLMs} in a zero-shot setting, including proprietary models (GPT-4o, GPT-4o-mini, Claude-3.5-Sonnet, Claude-3.5-Haiku) and open-source models (DeepSeek-V3 \citep{liu2024deepseek}, Llama-3.3-70B \citep{dubey2024llama}, Qwen2.5-72B \citep{yang2025qwen3}). (2) \emph{Reasoning-enhanced LLMs} with extended inference capabilities: GPT-o3-mini, GPT-o1, and DeepSeek-R1. (3) \emph{LLM-based optimization methods} that use LLMs for iterative solution refinement: OPRO \citep{yang2023large}, LMEA \citep{liu2024large}, PHP \citep{zheng2023progressive}, and SGE \citep{iklassov2024self}. (4) \emph{Neural CO solvers}: SFT-only (supervised fine-tuning without reinforcement learning), GRPO (group relative policy optimization), and LLMCoSolver \citep{jiang2025large} (which combines SFT with FOARL).

\paragraph{Metrics}
We evaluate models using two primary metrics. The \emph{feasibility rate} measures the percentage of generated solutions that satisfy all problem constraints:
$M_f(\calP_s) = \frac{|\{p \in \calP_s \mid \hat{x}_p \in X_{C_p}\}|}{|\calP_s|}.$
The \emph{optimality gap} measures solution quality relative to reference solutions obtained from domain-specific solvers:
$M_o(\calP_s) = \frac{1}{|\calP_s|}\sum_{p \in \calP_s} \frac{f_p(\hat{x}_p) - f_p(x_p^*)}{|f_p(x_p^*)|}.
$
We also report \emph{average inference time} per instance. For FALCON, we set the minimum samples $N_{\min} = 8$, maximum samples $N_{\max} = 64$, confidence threshold $\tau = 0.85$, and sampling temperature $T = 0.7$.

\subsection{Main Results (RQ1 \& RQ2)}
\label{sec:main_results}

\begin{table}[t]
\centering
\caption{Comparison with domain heuristics. Optimality gap (\%) across different problem scales. Small: 10--30 nodes/jobs; Medium: 40--60; Large: 70--100.}
\label{tab:heuristic_results}
\resizebox{\linewidth}{!}{
\begin{tabular}{l|ccc|ccc}
\toprule
\multirow{2}{*}{\textbf{Method}} & \multicolumn{3}{c|}{\textbf{TSP}} & \multicolumn{3}{c}{\textbf{CVRP}} \\
& Small & Med. & Large & Small & Med. & Large \\
\midrule
Nearest Neighbor & 19.36 & 24.13 & 26.19 & 18.36 & 20.59 & 22.07 \\
OR-Tools & 0.82 & 2.59 & 3.59 & 3.60 & 7.87 & 8.84 \\
ACO & 1.98 & 17.98 & 36.69 & 2.52 & 17.07 & 29.49 \\
\midrule
LLMCoSolver (N=8) & 0.14 & 0.70 & 1.34 & 1.70 & 4.57 & 7.24 \\
\midrule
FALCON (N=8) & \underline{0.12} & \underline{0.61} & \underline{1.15} & \underline{1.48} & \underline{3.89} & \underline{6.28} \\
FALCON (Adaptive) & \textbf{0.10} & \textbf{0.57} & \textbf{1.08} & \textbf{1.42} & \textbf{3.71} & \textbf{6.05} \\
\midrule
\midrule
\multirow{2}{*}{\textbf{Method}} & \multicolumn{3}{c|}{\textbf{PFSP}} & \multicolumn{3}{c}{\textbf{JSSP}} \\
& Small & Med. & Large & Small & Med. & Large \\
\midrule
NEH & 1.33 & 2.78 & 3.56 & -- & -- & -- \\
FIFO & -- & -- & -- & 24.38 & 32.97 & 39.00 \\
\midrule
LLMCoSolver (N=8) & 0.45 & 0.89 & 1.56 & 4.23 & 7.89 & 12.34 \\
\midrule
FALCON (N=8) & \underline{0.39} & \underline{0.74} & \underline{1.28} & \underline{3.71} & \underline{6.92} & \underline{10.85} \\
FALCON (Adaptive) & \textbf{0.36} & \textbf{0.71} & \textbf{1.21} & \textbf{3.58} & \textbf{6.68} & \textbf{10.52} \\
\bottomrule
\end{tabular}
}
\end{table}

\begin{table}[t]
\centering
\caption{Ablation study on TSP and CVRP.}
\label{tab:ablation_components}
\resizebox{\columnwidth}{!}{
\begin{tabular}{l|ccc|ccc}
\toprule
\multirow{2}{*}{\textbf{Configuration}} & \multicolumn{3}{c|}{\textbf{TSP}} & \multicolumn{3}{c}{\textbf{CVRP}} \\
& Fea. & Gap & Time & Fea. & Gap & Time \\
\midrule
FALCON (Full) & \textbf{100} & \textbf{0.92} & 6.3s & \textbf{100} & \textbf{3.52} & 6.8s \\
\midrule
w/o Grammar & \textbf{100} & 0.95 & 6.5s & \textbf{100} & 3.67 & 7.1s \\
w/o Repair & 94 & 0.89 & 6.1s & 85 & 3.41 & 6.5s \\
w/o Adaptive & \textbf{100} & 0.91 & 10.4s & \textbf{100} & 3.48 & 11.2s \\
w/o BOPO & \textbf{100} & 1.85 & 6.3s & \textbf{100} & 5.89 & 6.8s \\
\midrule
SFT + Repair & \textbf{100} & 2.18 & 5.9s & \textbf{100} & 5.72 & 6.2s \\
SFT Only & 89 & 2.30 & 5.6s & 59 & 6.02 & 5.8s \\
\bottomrule
\end{tabular}
}
\end{table}

\begin{figure}[t]
    \centering
    \includegraphics[width=\linewidth]{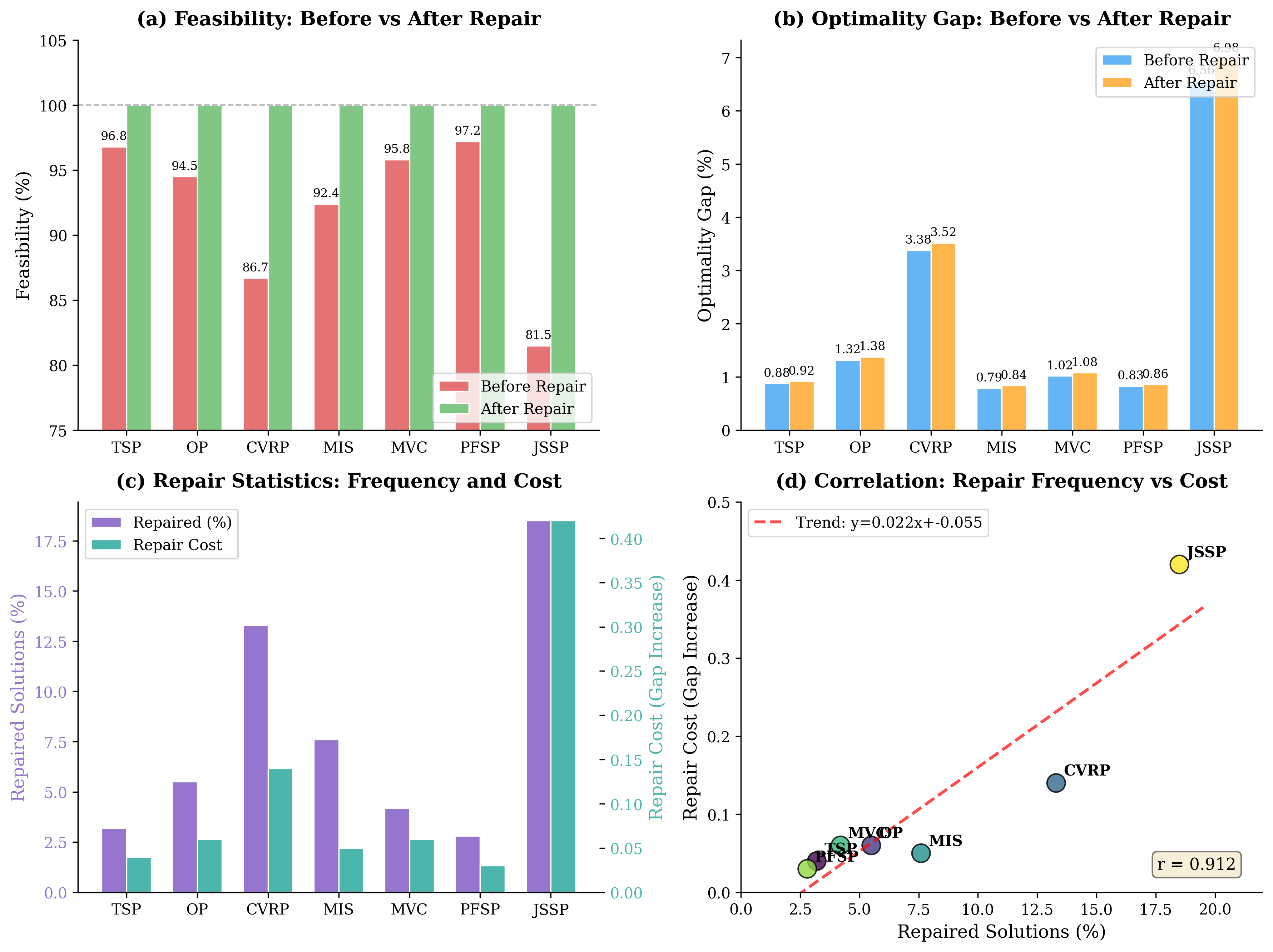}
    \vspace{-15pt}
    \caption{Repair layer statistics across seven CO problems. (a) Feasibility rates. (b) Optimality gap. (c) Repair frequency and cost for each problem. (d) Strong correlation ($r=0.912$) between repair frequency and cost.}
    \label{fig:repair_analysis} 
    \vspace{-5mm}
\end{figure}

A comprehensive comparison across all baselines and problems is presented in Table~\ref{tab:main_results_llm} (referred to in the main text).

\paragraph{RQ1: 100\% Feasibility Guarantee.}
FALCON achieves 100\% feasibility across all seven problems, even with a single sample ($N=1$), which validates our theoretical guarantees (Theorem~\ref{thm:feasibility}). In contrast, general-purpose LLMs show highly variable feasibility rates (2\%--100\%), with CVRP and graph problems being particularly challenging. LLMCoSolver achieves 80\%--100\% feasibility at $N=1$, but even with $N=8$, it only reaches 94\% on MIS---demonstrating that rejection sampling alone cannot guarantee feasibility.\vspace{-2mm}

\paragraph{RQ2: Competitive Solution Quality.}
Despite enforcing hard feasibility constraints, FALCON maintains competitive optimality gaps. With adaptive sampling, FALCON achieves the best gaps on 5 out of 7 problems while using, on average, 40\% fewer samples than the fixed $N=8$ setting. The minimal quality degradation confirms that: (1) BOPO-trained models produce a high proportion of feasible solutions prior to repair, (2) the repair operators preserve solution locality, and (3) in some cases, repair can even improve quality by removing inefficient duplicates.

\subsubsection{Comparison with Domain Heuristics}

Table~\ref{tab:heuristic_results} compares FALCON against classical heuristics across different problem scales. FALCON consistently outperforms simple heuristics (e.g., Nearest Neighbor, SPT/FIFO) by large margins and achieves competitive or superior performance compared to more sophisticated methods (e.g., OR-Tools, 2-opt, NEH). Notably, FALCON maintains stable performance as problem size increases, whereas classical heuristics like ACO degrade significantly on large instances (e.g., a 36.69\% gap on large TSP). This demonstrates the advantage of learned solvers with strong generalization capabilities.

\begin{figure}[t]
    \centering
    \includegraphics[width=\linewidth]{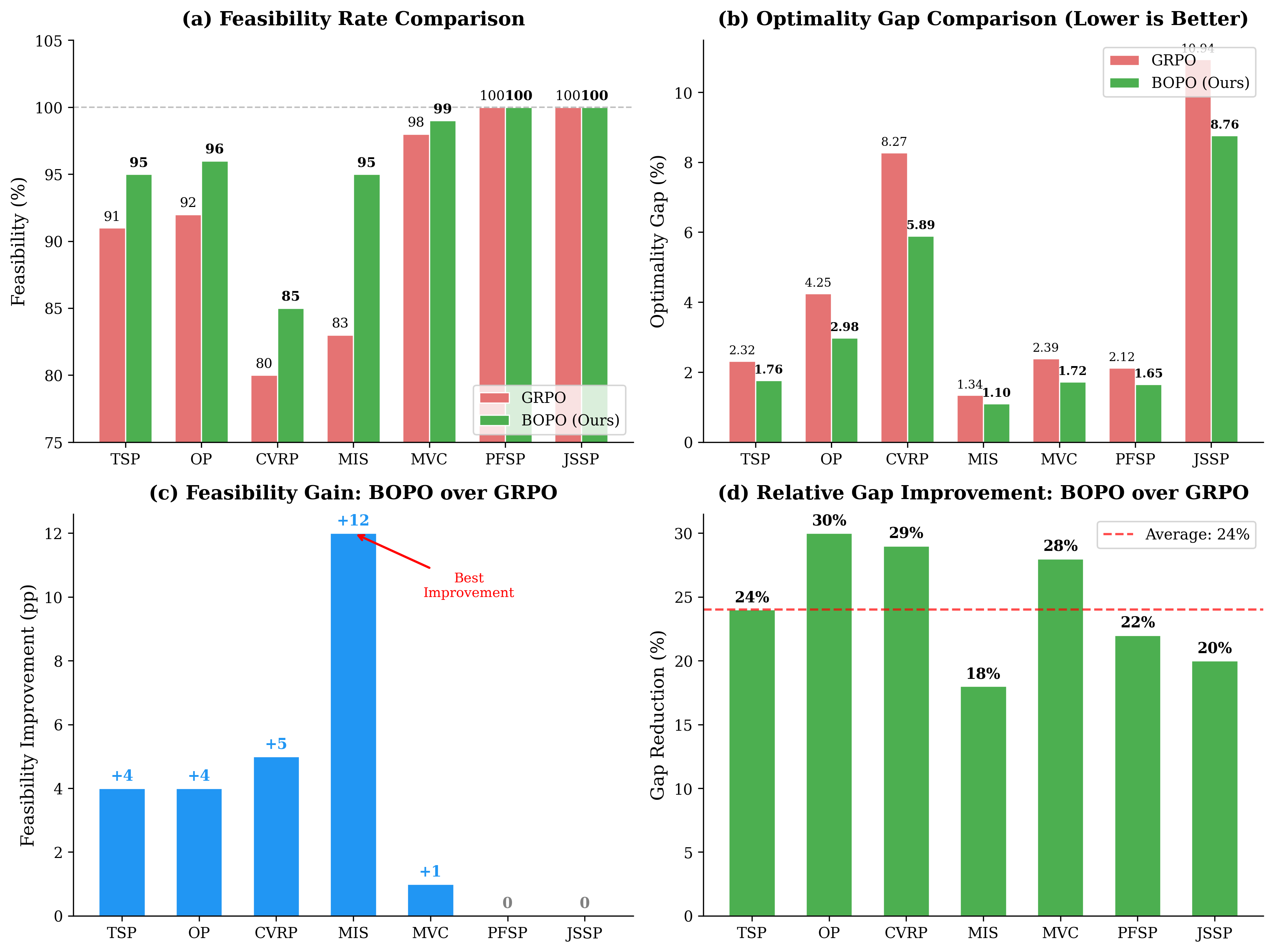}
    \vspace{-15pt}
    \caption{Comparison of BOPO and GRPO starting from the same SFT checkpoint and with an identical training budget. (a) BOPO consistently achieves lower optimality gaps across all problems. (b,c) BOPO improves both optimality gap and feasibility rates. (d) Relative gap improvement.}
    \label{fig:bopo_vs_grpo} 
\end{figure}

\begin{table}[t]
\centering
\caption{Fixed vs. adaptive sampling.}
\label{tab:adaptive_efficiency}
\resizebox{\columnwidth}{!}{
\begin{tabular}{l|ccc|ccc}
\toprule
\multirow{2}{*}{\textbf{Problem}} & \multicolumn{3}{c|}{\textbf{Fixed ($N=64$)}} & \multicolumn{3}{c}{\textbf{Adaptive}} \\
& Gap & Samples & Time & Gap & Samples & Time \\
\midrule
TSP & 0.89 & 64 & 42.5s & 0.92 & 18.4 & 12.8s \\
OP & 1.35 & 64 & 45.2s & 1.38 & 22.6 & 16.4s \\
CVRP & 3.45 & 64 & 48.6s & 3.52 & 31.2 & 24.5s \\
MIS & 0.81 & 64 & 41.8s & 0.84 & 25.8 & 17.9s \\
MVC & 1.05 & 64 & 42.3s & 1.08 & 21.4 & 15.2s \\
PFSP & 0.84 & 64 & 46.7s & 0.86 & 19.2 & 14.1s \\
JSSP & 6.85 & 64 & 52.4s & 6.98 & 48.6 & 41.2s \\
\midrule
\textbf{Avg.} & 2.18 & 64 & 45.6s & 2.23 & 26.7 & 20.3s \\
\bottomrule
\end{tabular}
}
\end{table}

\subsection{Ablation Study (RQ3)}
\label{sec:ablation}

Table~\ref{tab:ablation_components} shows the contribution of each FALCON component. The repair layer is critical for achieving 100\% feasibility---removing it drops feasibility to levels comparable with the baseline models. Grammar constraints reduce the repair burden by ensuring output format validity. BOPO provides improvements over GRPO in both feasibility and optimality gap, and adaptive sampling reduces computational cost while maintaining solution quality.

Figure~\ref{fig:bopo_vs_grpo} provides a detailed comparison between BOPO and GRPO. BOPO consistently outperforms GRPO, with larger improvements observed on more constrained problems (e.g., CVRP: 29\% lower gap; MIS: 12 percentage points higher feasibility).

\subsection{Analysis}
\label{sec:analysis}

\paragraph{Repair Layer Statistics.}
Figure~\ref{fig:repair_analysis} shows that BOPO-trained models produce a high proportion of feasible solutions (81\%--97\%) before repair. Consequently, the repair layer is invoked infrequently and incurs minimal computational overhead ($<$0.5\% of total time). The repair frequency correlates with problem constraint complexity: simpler permutation problems (TSP, PFSP) require repair in only 2.8\%--3.2\% of cases, while problems with multiple or complex constraint types (CVRP, JSSP) need repair more often (13.3\%--18.5\%). Importantly, the resulting quality degradation remains bounded regardless of repair frequency, validating the locality property established in Theorem~\ref{thm:repair_qualit}.\vspace{-3mm}

\paragraph{Adaptive Sampling Efficiency.}
Table~\ref{tab:adaptive_efficiency} shows that adaptive sampling uses 58\% fewer samples on average than a fixed $N=64$ strategy, while achieving comparable solution quality (optimality gap increase $<$3\%). This reduces inference time by 55\%. The efficiency gains vary by problem: TSP and PFSP achieve the largest reductions (71\% fewer samples) due to high model confidence, while JSSP requires more samples (48.6 on average) due to its complex precedence constraints. This demonstrates that the adaptive allocation mechanism effectively identifies problem difficulty without requiring manual tuning.\vspace{-2mm}

\paragraph{Difficulty-Aware Allocation.}
Table~\ref{tab:difficulty_allocation} shows that easy instances terminate early (using $\sim$10 samples on average), while hard instances utilize more samples ($\sim$50). The solution consistency metric successfully stratifies instance difficulty: easy instances (high consistency $>$0.7) achieve a low optimality gap (0.45\%) with minimal compute, while hard instances (low consistency $<$0.3) benefit from extended exploration, reducing their optimality gap by 35\% compared to an early-termination baseline. This validates Lemma~\ref{lem:consistency}, confirming that solution consistency is a reliable indicator of problem difficulty for a given model.\vspace{-2mm}

\begin{table}[t]
\centering
\caption{Sample allocation by instance difficulty.}
\label{tab:difficulty_allocation}
\resizebox{\columnwidth}{!}{
\begin{tabular}{l|ccc|ccc}
\toprule
\multirow{2}{*}{\textbf{Difficulty}} & \multicolumn{3}{c|}{\textbf{TSP}} & \multicolumn{3}{c}{\textbf{CVRP}} \\
& Inst. & Samples & Gap & Inst. & Samples & Gap \\
\midrule
Easy (Cons.$>$0.7) & 42\% & 10.2 & 0.45 & 28\% & 11.8 & 1.89 \\
Medium (0.3--0.7) & 38\% & 24.6 & 0.98 & 41\% & 32.4 & 3.67 \\
Hard (Cons.$<$0.3) & 20\% & 52.3 & 1.82 & 31\% & 54.7 & 5.92 \\
\bottomrule
\end{tabular}
}
\end{table}

\section{Conclusion}
\label{sec:conclusion}

We presented FALCON, an LLM-based combinatorial optimization framework that provides a provable 100\% feasibility guarantee—addressing a critical weakness of existing generative approaches. By separating feasibility enforcement into syntax (via grammar-constrained decoding) and semantics (via problem-specific repair operators), FALCON ensures hard constraint satisfaction while preserving competitive solution quality. The adaptive sampling mechanism efficiently allocates computation according to instance difficulty, and BOPO training enables objective-aware preference learning. Extensive experiments across seven NP-hard problems confirm that FALCON reliably achieves perfect feasibility while matching or surpassing the performance of state-of-the-art neural and LLM-based solvers, enabling robust deployment in real-world settings.

\section*{Impact Statement}
This paper presents work whose goal is to advance the field of Machine Learning. There are many potential societal consequences of our work, none of which we feel must be specifically highlighted here.

\bibliography{icml_ref}

@article{jiang2025large,
  title={Large Language Models as End-to-end Combinatorial Optimization Solvers},
  author={Jiang, Xia and Wu, Yaoxin and Li, Minshuo and Cao, Zhiguang and Zhang, Yingqian},
  journal={arXiv preprint arXiv:2509.16865},
  year={2025}
}

@article{shao2024deepseekmath,
  title={Deepseekmath: Pushing the limits of mathematical reasoning in open language models},
  author={Shao, Zhihong and Wang, Peiyi and Zhu, Qihao and Xu, Runxin and Song, Junxiao and Bi, Xiao and Zhang, Haowei and Zhang, Mingchuan and Li, YK and Wu, Yang and others},
  journal={arXiv preprint arXiv:2402.03300},
  year={2024}
}

@article{nazari2018reinforcement,
  title={Reinforcement learning for solving the vehicle routing problem},
  author={Nazari, Mohammadreza and Oroojlooy, Afshin and Snyder, Lawrence and Tak{\'a}c, Martin},
  journal={Advances in neural information processing systems},
  volume={31},
  year={2018}
}

@article{ethayarajh2402kto,
  title={Kto: Model alignment as prospect theoretic optimization, 2024},
  author={Ethayarajh, Kawin and Xu, Winnie and Muennighoff, Niklas and Jurafsky, Dan and Kiela, Douwe},
  journal={URL https://arxiv. org/abs/2402.01306}
}

@article{meng2024simpo,
  title={Simpo: Simple preference optimization with a reference-free reward},
  author={Meng, Yu and Xia, Mengzhou and Chen, Danqi},
  journal={Advances in Neural Information Processing Systems},
  volume={37},
  pages={124198--124235},
  year={2024}
}

@inproceedings{azar2024general,
  title={A general theoretical paradigm to understand learning from human preferences},
  author={Azar, Mohammad Gheshlaghi and Guo, Zhaohan Daniel and Piot, Bilal and Munos, Remi and Rowland, Mark and Valko, Michal and Calandriello, Daniele},
  booktitle={International Conference on Artificial Intelligence and Statistics},
  pages={4447--4455},
  year={2024},
  organization={PMLR}
}

@article{rafailov2023direct,
  title={Direct preference optimization: Your language model is secretly a reward model},
  author={Rafailov, Rafael and Sharma, Archit and Mitchell, Eric and Manning, Christopher D and Ermon, Stefano and Finn, Chelsea},
  journal={Advances in neural information processing systems},
  volume={36},
  pages={53728--53741},
  year={2023}
}

@article{yang2025qwen3,
  title={Qwen3 technical report},
  author={Yang, An and Li, Anfeng and Yang, Baosong and Zhang, Beichen and Hui, Binyuan and Zheng, Bo and Yu, Bowen and Gao, Chang and Huang, Chengen and Lv, Chenxu and others},
  journal={arXiv preprint arXiv:2505.09388},
  year={2025}
}

@article{dubey2024llama,
  title={The llama 3 herd of models},
  author={Dubey, Abhimanyu and Jauhri, Abhinav and Pandey, Abhinav and Kadian, Abhishek and Al-Dahle, Ahmad and Letman, Aiesha and Mathur, Akhil and Schelten, Alan and Yang, Amy and Fan, Angela and others},
  journal={arXiv preprint arXiv:2407.21783},
  year={2024}
}

@article{liu2024deepseek,
  title={Deepseek-v3 technical report},
  author={Liu, Aixin and Feng, Bei and Xue, Bing and Wang, Bingxuan and Wu, Bochao and Lu, Chengda and Zhao, Chenggang and Deng, Chengqi and Zhang, Chenyu and Ruan, Chong and others},
  journal={arXiv preprint arXiv:2412.19437},
  year={2024}
}

@article{meurer2025manusafenextgen,
  title={ManuSafeNextGen: Model-Based Manufacturing of Safety-Critical Components for Next Generation Engines--Part I: Methodology},
  author={Meurer, Markus and Kelliger, Tobias and Gerhard, Nicklas and R{\"u}ppel, Adrian Karl and Grimmert, Adina and Bergs, Thomas},
  journal={Procedia CIRP},
  volume={133},
  pages={370--375},
  year={2025},
  publisher={Elsevier}
}

@article{mcgarvey2025self,
  title={Self-healing databases for emergency response logistics in remote and infrastructure-poor settings},
  author={McGarvey, James and Grabowski, Martha R and Custard, Buddy and Gabelein, Steven},
  journal={Logistics},
  volume={9},
  number={1},
  pages={23},
  year={2025},
  publisher={MDPI AG}
}

@article{bejarano2024safety,
  title={Safety filtering while training: Improving the performance and sample efficiency of reinforcement learning agents},
  author={Bejarano, Federico Pizarro and Brunke, Lukas and Schoellig, Angela P},
  journal={IEEE Robotics and Automation Letters},
  year={2024},
  publisher={IEEE}
}

@article{liu2025sefrqo,
  title={SEFRQO: A Self-Evolving Fine-Tuned RAG-Based Query Optimizer},
  author={Liu, Hanwen and Zhang, Qihan and Marcus, Ryan and Sabek, Ibrahim},
  journal={Proceedings of the ACM on Management of Data},
  volume={3},
  number={6},
  pages={1--27},
  year={2025},
  publisher={ACM New York, NY, USA}
}

@article{park2024grammar,
  title={Grammar-aligned decoding},
  author={Park, Kanghee and Wang, Jiayu and Berg-Kirkpatrick, Taylor and Polikarpova, Nadia and D'Antoni, Loris},
  journal={Advances in Neural Information Processing Systems},
  volume={37},
  pages={24547--24568},
  year={2024}
}

@article{zhu2025bridging,
  title={Bridging Synthetic and Real Routing Problems via LLM-Guided Instance Generation and Progressive Adaptation},
  author={Zhu, Jianghan and Wu, Yaoxin and Lin, Zhuoyi and Zhang, Zhengyuan and Yin, Haiyan and Cao, Zhiguang and Jayavelu, Senthilnath and Li, Xiaoli},
  journal={arXiv preprint arXiv:2511.10233},
  year={2025}
}

@article{qiu2025evolution,
  title={Evolution strategies at scale: Llm fine-tuning beyond reinforcement learning},
  author={Qiu, Xin and Gan, Yulu and Hayes, Conor F and Liang, Qiyao and Meyerson, Elliot and Hodjat, Babak and Miikkulainen, Risto},
  journal={arXiv preprint arXiv:2509.24372},
  year={2025}
}

@article{bello2016neural,
  title={Neural combinatorial optimization with reinforcement learning},
  author={Bello, Irwan and Pham, Hieu and Le, Quoc V and Norouzi, Mohammad and Bengio, Samy},
  journal={arXiv preprint arXiv:1611.09940},
  year={2016}
}

@article{abouelrous2025end,
  title={End-to-end Deep Reinforcement Learning for Stochastic Multi-objective Optimization in C-VRPTW},
  author={Abouelrous, Abdo and Bliek, Laurens and Wu, Yaoxin and Zhang, Yingqian},
  journal={arXiv preprint arXiv:2512.01518},
  year={2025}
}

@article{kool2018attention,
  title={Attention, learn to solve routing problems!},
  author={Kool, Wouter and Van Hoof, Herke and Welling, Max},
  journal={arXiv preprint arXiv:1803.08475},
  year={2018}
}

@article{vinyals2015pointer,
  title={Pointer networks},
  author={Vinyals, Oriol and Fortunato, Meire and Jaitly, Navdeep},
  journal={Advances in neural information processing systems},
  volume={28},
  year={2015}
}

@article{iklassov2024self,
  title={Self-guiding exploration for combinatorial problems},
  author={Iklassov, Zangir and Du, Yali and Akimov, Farkhad and Takac, Martin},
  journal={Advances in Neural Information Processing Systems},
  volume={37},
  pages={130569--130601},
  year={2024}
}

@article{zheng2023progressive,
  title={Progressive-hint prompting improves reasoning in large language models},
  author={Zheng, Chuanyang and Liu, Zhengying and Xie, Enze and Li, Zhenguo and Li, Yu},
  journal={arXiv preprint arXiv:2304.09797},
  year={2023}
}

@inproceedings{liu2024large,
  title={Large language models as evolutionary optimizers},
  author={Liu, Shengcai and Chen, Caishun and Qu, Xinghua and Tang, Ke and Ong, Yew-Soon},
  booktitle={2024 IEEE Congress on Evolutionary Computation (CEC)},
  pages={1--8},
  year={2024},
  organization={IEEE}
}

@inproceedings{yang2023large,
  title={Large language models as optimizers},
  author={Yang, Chengrun and Wang, Xuezhi and Lu, Yifeng and Liu, Hanxiao and Le, Quoc V and Zhou, Denny and Chen, Xinyun},
  booktitle={The Twelfth International Conference on Learning Representations},
  year={2023}
}

@inproceedings{liu2024cl4co,
  title={CL4CO: A Curriculum Training Framework for Graph-Based Neural Combinatorial Optimization},
  author={Liu, Yang and Zhou, Chuan and Zhang, Peng and Li, Zhao and Zhang, Shuai and Lin, Xixun and Wu, Xindong},
  booktitle={2024 IEEE International Conference on Data Mining (ICDM)},
  pages={779--784},
  year={2024},
  organization={IEEE}
}

@inproceedings{liu2023decision,
  title={Decision-focused graph neural networks for graph learning and optimization},
  author={Liu, Yang and Zhou, Chuan and Zhang, Peng and Zhang, Shuai and Zhang, Xiaoou and Li, Zhao and Chen, Hongyang},
  booktitle={2023 IEEE International Conference on Data Mining (ICDM)},
  pages={1151--1156},
  year={2023},
  organization={IEEE}
}

@article{liu2024decision,
  title={Decision-focused graph neural networks for combinatorial optimization},
  author={Liu, Yang and Zhou, Chuan and Zhang, Peng and Pan, Shirui and Li, Zhao and Chen, Hongyang},
  journal={arXiv preprint arXiv:2406.03647},
  year={2024}
}

@article{jain2023combinatorial,
  title={A combinatorial optimization model for post-disaster emergency resource allocation using meta-heuristics},
  author={Jain, Sehej and Bharti, Kusum Kumari},
  journal={Soft Computing},
  volume={27},
  number={18},
  pages={13595--13611},
  year={2023},
  publisher={Springer}
}

@article{zhang2023review,
  title={A review on learning to solve combinatorial optimisation problems in manufacturing},
  author={Zhang, Cong and Wu, Yaoxin and Ma, Yining and Song, Wen and Le, Zhang and Cao, Zhiguang and Zhang, Jie},
  journal={IET Collaborative Intelligent Manufacturing},
  volume={5},
  number={1},
  pages={e12072},
  year={2023},
  publisher={Wiley Online Library}
}

@inproceedings{bao2018application,
  title={Application of combinatorial optimization in logistics},
  author={Bao, Long Le Ngoc and Le, Duc Hanh and Nguyen, Duy Anh},
  booktitle={2018 4th International Conference on Green Technology and Sustainable Development (GTSD)},
  pages={329--334},
  year={2018},
  organization={IEEE}
}
\bibliographystyle{icml2024}


\newpage
\onecolumn
\appendix

\section*{Appendix}

\section{Related Work}

\paragraph{Neural Combinatorial Optimization}
Neural approaches to combinatorial optimization have evolved from pointer networks~\cite{vinyals2015pointer} to transformer-based architectures~\cite{kool2018attention,abouelrous2025end} and reinforcement learning methods~\cite{bello2016neural,nazari2018reinforcement}. Recent work explores the use of large language models (LLMs) as CO solvers through techniques such as prompt optimization~\cite{yang2023large}, evolutionary strategies~\cite{qiu2025evolution}, and direct fine-tuning~\cite{zhu2025bridging}. However, existing methods typically rely on soft constraints via reward shaping or post-hoc rejection sampling, which lack formal feasibility guarantees.

\paragraph{Constrained Text Generation}
Constrained decoding restricts LLM outputs through lexical constraints~\cite{park2024grammar}, grammar compliance~\cite{liu2025sefrqo}, and safety filtering~\cite{bejarano2024safety}. We adapt grammar-based filtering to enforce the specific output formats required by CO problems. Our approach extends beyond syntactic validity to ensure semantic feasibility through a dedicated repair layer---constituting a novel two-stage feasibility enforcement mechanism in the context of LLMs for combinatorial optimization.

\paragraph{Preference Optimization.}
Direct Preference Optimization (DPO)~\citep{rafailov2023direct} provides an efficient alternative to reinforcement learning from human feedback (RLHF) for LLM alignment. It reparameterizes the reward function as an implicit function of the policy, eliminating the need for explicit reward modeling. Subsequent work has extended this framework: IPO~\citep{azar2024general} addresses overfitting through a general $\Psi$PO framework, KTO~\citep{ethayarajh2402kto} enables learning from unpaired binary feedback inspired by prospect theory, and SimPO~\citep{meng2024simpo} removes the dependence on a reference model entirely. For mathematical reasoning, GRPO~\citep{shao2024deepseekmath} introduces group-relative advantage estimation that leverages verifiable correctness signals without human annotation, demonstrating the effectiveness of outcome-based preference learning.

\section{Problem Definitions}
\label{app:problems}

We provide formal mathematical definitions for all seven combinatorial optimization problems evaluated in this paper. Each definition specifies the problem input, decision variables, constraints, and optimization objective.

\subsection{Traveling Salesman Problem (TSP)}

\begin{definition}[TSP]
\label{def:tsp}
Given a set of $n$ cities with coordinates $\{(x_i, y_i)\}_{i=0}^{n-1}$ and a Euclidean distance matrix $d_{ij} = \|(x_i, y_i) - (x_j, y_j)\|_2$, find a permutation $\pi: \{0, 1, \ldots, n-1\} \to \{0, 1, \ldots, n-1\}$ that minimizes the total tour length:
\begin{align*}
\min_{\pi} \quad & \sum_{i=0}^{n-1} d_{\pi(i), \pi((i+1) \bmod n)} \\
\text{s.t.} \quad & \pi \text{ is a permutation of } \{0, 1, \ldots, n-1\}.
\end{align*}
\end{definition}
The constraint requires a Hamiltonian tour where each city is visited exactly once and the tour returns to the starting city. The objective minimizes the total Euclidean distance traveled.

\subsection{Orienteering Problem (OP)}

\begin{definition}[OP]
\label{def:op}
Given $n$ locations with coordinates $\{(x_i, y_i)\}_{i=0}^{n-1}$, prizes $\{p_i\}_{i=1}^{n-1}$ where $p_i > 0$, a depot at node $0$, a Euclidean distance matrix $d_{ij} = \|(x_i, y_i) - (x_j, y_j)\|_2$, and a distance budget $D > 0$, find a subset $S \subseteq \{1, \ldots, n-1\}$ and a visiting order $\pi$ that maximizes the total collected prize:
\begin{align*}
\max_{S, \pi} \quad & \sum_{i \in S} p_i \\
\text{s.t.} \quad & \sum_{j=0}^{|S|} d_{\pi(j), \pi(j+1)} \leq D \\
& \pi(0) = \pi(|S|+1) = 0 \\
& \pi: \{0, 1, \ldots, |S|+1\} \to \{0\} \cup S \text{ is a valid path}.
\end{align*}
\end{definition}
The distance budget constraint limits the total travel distance to at most $D$. The route must start and end at the depot (node $0$). The objective maximizes the sum of prizes collected from visited locations.

\subsection{Capacitated Vehicle Routing Problem (CVRP)}

\begin{definition}[CVRP]
\label{def:cvrp}
Given $n$ customers with coordinates $\{(x_i, y_i)\}_{i=0}^{n-1}$, demands $\{q_i\}_{i=1}^{n-1}$ where $q_i > 0$, a depot at node $0$ with $q_0 = 0$, a Euclidean distance matrix $d_{ij} = \|(x_i, y_i) - (x_j, y_j)\|_2$, and a vehicle capacity $Q > 0$, find a set of routes $\{R_1, R_2, \ldots, R_K\}$ that minimizes the total travel distance:
\begin{align*}
\min_{\{R_k\}_{k=1}^K} \quad & \sum_{k=1}^K \text{length}(R_k) \\
\text{s.t.} \quad & \bigcup_{k=1}^K R_k = \{1, \ldots, n-1\} \\
& R_i \cap R_j = \emptyset \text{ for all } i \neq j \\
& \sum_{i \in R_k} q_i \leq Q \quad \text{for all } k \in \{1, \ldots, K\} \\
& \text{Each route } R_k \text{ starts and ends at node } 0.
\end{align*}
Here, $\text{length}(R_k) = d_{0,r_k^{(1)}} + \sum_{j=1}^{|R_k|-1} d_{r_k^{(j)}, r_k^{(j+1)}} + d_{r_k^{(|R_k|)},0}$ for a route $R_k = (r_k^{(1)}, r_k^{(2)}, \ldots, r_k^{(|R_k|)})$, where $r_k^{(j)}$ denotes the $j$-th customer visited in route $k$.
\end{definition}
The partition constraints ensure all customers are served exactly once across all routes. The capacity constraint ensures the total demand of customers assigned to each route does not exceed the vehicle capacity $Q$. Each route must start and end at the depot.

\subsection{Maximum Independent Set (MIS)}

\begin{definition}[MIS]
\label{def:mis}
Given an undirected graph $G = (V, E)$ with vertex set $V = \{v_1, v_2, \ldots, v_n\}$ and edge set $E \subseteq V \times V$, find a subset $S \subseteq V$ that maximizes its cardinality:
\begin{align*}
\max_{S \subseteq V} \quad & |S| \\
\text{s.t.} \quad & \forall (u, v) \in E: \neg(u \in S \land v \in S).
\end{align*}
\end{definition}
The independence constraint requires that no two vertices in $S$ are adjacent; that is, no edge in $E$ has both endpoints in $S$. The objective maximizes the size of the independent set.

\subsection{Minimum Vertex Cover (MVC)}

\begin{definition}[MVC]
\label{def:mvc}
Given an undirected graph $G = (V, E)$ with vertex set $V = \{v_1, v_2, \ldots, v_n\}$ and edge set $E \subseteq V \times V$, find a subset $S \subseteq V$ that minimizes its cardinality:
\begin{align*}
\min_{S \subseteq V} \quad & |S| \\
\text{s.t.} \quad & \forall (u, v) \in E: u \in S \lor v \in S.
\end{align*}
\end{definition}
The coverage constraint requires that every edge in $E$ has at least one endpoint in $S$, ensuring all edges are "covered." The objective minimizes the size of the vertex cover.

\begin{remark}[MIS-MVC Duality]
The Maximum Independent Set and Minimum Vertex Cover problems exhibit a complementary relationship on any graph $G = (V,E)$. A subset $S \subseteq V$ is an independent set if and only if its complement $V \setminus S$ is a vertex cover. This duality implies that solving one problem immediately provides the solution to the other.
\end{remark}

\subsection{Permutation Flow Shop Scheduling (PFSP)}

\begin{definition}[PFSP]
\label{def:pfsp}
Given $n$ jobs and $m$ machines with processing times $p_{i,j} > 0$ for job $i \in \{1, \ldots, n\}$ on machine $j \in \{1, \ldots, m\}$, find a permutation $\pi: \{1, \ldots, n\} \to \{1, \ldots, n\}$ that minimizes the makespan (the completion time of the last job on the last machine):
\begin{align*}
\min_{\pi} \quad & C_{\pi(n),m} \\
\text{s.t.} \quad & C_{\pi(k),j} = \max(C_{\pi(k),j-1}, C_{\pi(k-1),j}) + p_{\pi(k),j} \quad \text{for all } k \in \{1, \ldots, n\}, j \in \{1, \ldots, m\} \\
& C_{\pi(k),0} = 0 \quad \text{for all } k \in \{1, \ldots, n\} \\
& C_{\pi(0),j} = 0 \quad \text{for all } j \in \{1, \ldots, m\} \\
& \pi \text{ is a permutation of } \{1, \ldots, n\},
\end{align*}
where $C_{i,j}$ denotes the completion time of job $i$ on machine $j$.
\end{definition}
The flow shop constraint requires that all jobs are processed in the same order $\pi$ on all machines. The completion time recursion ensures that job $\pi(k)$ on machine $j$ can only start after both the same job has completed on the previous machine $j-1$ and the previous job $\pi(k-1)$ has completed on the current machine $j$. The boundary conditions set zero completion times before any processing begins.

\subsection{Job Shop Scheduling Problem (JSSP)}

\begin{definition}[JSSP]
\label{def:jssp}
Given $n$ jobs and $m$ machines, where each job $i \in \{1, \ldots, n\}$ consists of a sequence of $m$ operations $(o_{i,1}, o_{i,2}, \ldots, o_{i,m})$, with each operation $o_{i,k}$ requiring a specific machine $\mu(o_{i,k}) \in \{1, \ldots, m\}$ for a processing time $p(o_{i,k}) > 0$, find a feasible schedule that minimizes the makespan:
\begin{align*}
\min \quad & \max_{i \in \{1,\ldots,n\}} C_i \\
\text{s.t.} \quad & S_{o_{i,k+1}} \geq C_{o_{i,k}} \quad \text{for all } i \in \{1,\ldots,n\}, k \in \{1,\ldots,m-1\} \\
& C_{o_{i,k}} = S_{o_{i,k}} + p(o_{i,k}) \quad \text{for all } i, k \\
& \text{For any two distinct operations } o, o' \text{ with } \mu(o) = \mu(o'): \\
& \quad S_o \geq C_{o'} \ \text{ or } \ S_{o'} \geq C_o,
\end{align*}
where $S_o$ and $C_o$ denote the start and completion times of operation $o$, and $C_i = C_{o_{i,m}}$ is the completion time of job $i$.
\end{definition}
The precedence constraints ensure that operations within each job are processed in the specified order; each operation can only start after the previous operation in its job has completed. The machine resource constraints (disjunctive constraints) ensure that each machine processes at most one operation at any time (no preemption). The objective minimizes the completion time of all jobs, which equals the maximum completion time across jobs.

\section{Complete Proofs}
\label{app:proofs}

\begin{assumption}[Smoothness]
\label{assum:smooth}
The loss function $\calL_{\text{BOPO}}(\theta)$ is $L$-smooth:
\begin{equation*}
    \|\nabla\calL_{\text{BOPO}}(\theta_1) - \nabla\calL_{\text{BOPO}}(\theta_2)\| \leq L\|\theta_1 - \theta_2\|.
\end{equation*}
\end{assumption}

\begin{assumption}[Bounded Variance]
\label{assum:variance}
The stochastic gradient has bounded variance:
\begin{equation*}
    \E[\|\nabla\calL_{\text{BOPO}}(\theta; \mathcal{B}) - \nabla\calL_{\text{BOPO}}(\theta)\|^2] \leq \sigma^2,
\end{equation*}
where $\mathcal{B}$ is a randomly sampled mini-batch.
\end{assumption}

\begin{assumption}[Bounded Objective-Guided Scaling]
\label{assum:scaling}
The objective-guided scaling factor in BOPO satisfies $w(\Delta) \in [w_{\min}, w_{\max}]$ for some $0 < w_{\min} \leq w_{\max} < \infty$.

Specifically, for $w(\Delta) = \Delta/\bar{\Delta}$, where $\bar{\Delta} = \frac{1}{K'-1}\sum_{y_i \in \mathcal{F}_p \setminus \{y^*\}} \Delta_i$, we require:
\begin{itemize}
    \item $\Delta_i = f_p(y_i) - f_p(y^*) \geq \delta_{\min} > 0$ for all inferior solutions (ensured by discrete objective values in CO problems).
    \item The ratio $\max_i \Delta_i / \min_j \Delta_j \leq C$ is bounded by a constant $C$.
\end{itemize}
Under these conditions, $w(\Delta) \in [1/C, C]$, which ensures bounded scaling.
\end{assumption}

\subsection{Proof of Theorem~\ref{thm:bopo_conv} (BOPO Convergence)}
\label{app:proof_bopo}

\begin{proof}
We denote $\calL := \calL_{\text{BOPO}}$ for brevity. The proof proceeds in four steps.

\textbf{Step 1: Descent Lemma.}
By Assumption~\ref{assum:smooth} ($L$-smoothness), for any $\theta_1, \theta_2 \in \R^d$:
\begin{equation*}
    \calL(\theta_2) \leq \calL(\theta_1) + \langle \nabla\calL(\theta_1), \theta_2 - \theta_1 \rangle + \frac{L}{2}\|\theta_2 - \theta_1\|^2.
\end{equation*}

Applying this to the SGD update $\theta_{t+1} = \theta_t - \eta g_t$, where $g_t$ is the stochastic gradient at iteration $t$, yields:
\begin{align*}
    \calL(\theta_{t+1}) &\leq \calL(\theta_t) + \langle \nabla\calL(\theta_t), -\eta g_t \rangle + \frac{L}{2}\|-\eta g_t\|^2 \\
    &= \calL(\theta_t) - \eta \langle \nabla\calL(\theta_t), g_t \rangle + \frac{L\eta^2}{2}\|g_t\|^2.
\end{align*}

\textbf{Step 2: Taking Expectation.}
Taking expectation conditioned on $\theta_t$ and using the unbiasedness of the stochastic gradient, $\E[g_t | \theta_t] = \nabla\calL(\theta_t)$, gives:
\begin{align*}
    \E[\calL(\theta_{t+1}) | \theta_t] &\leq \calL(\theta_t) - \eta \langle \nabla\calL(\theta_t), \E[g_t|\theta_t] \rangle + \frac{L\eta^2}{2}\E[\|g_t\|^2 | \theta_t] \\
    &= \calL(\theta_t) - \eta \|\nabla\calL(\theta_t)\|^2 + \frac{L\eta^2}{2}\E[\|g_t\|^2 | \theta_t].
\end{align*}

\textbf{Step 3: Bounding the Second Moment.}
We decompose the second moment of the stochastic gradient:
\begin{align*}
    \E[\|g_t\|^2 | \theta_t] &= \E[\|g_t - \nabla\calL(\theta_t) + \nabla\calL(\theta_t)\|^2 | \theta_t] \\
    &= \E[\|g_t - \nabla\calL(\theta_t)\|^2 | \theta_t] + \|\nabla\calL(\theta_t)\|^2 \\
    &\quad + 2\E[\langle g_t - \nabla\calL(\theta_t), \nabla\calL(\theta_t) \rangle | \theta_t].
\end{align*}

Since $\E[g_t - \nabla\calL(\theta_t) | \theta_t] = 0$, the cross term vanishes:
\begin{equation*}
    \E[\|g_t\|^2 | \theta_t] = \E[\|g_t - \nabla\calL(\theta_t)\|^2 | \theta_t] + \|\nabla\calL(\theta_t)\|^2.
\end{equation*}

By Assumptions~\ref{assum:variance} and~\ref{assum:scaling}, the variance is bounded:
\begin{equation*}
    \E[\|g_t - \nabla\calL(\theta_t)\|^2 | \theta_t] \leq w_{\max}^2 \sigma^2.
\end{equation*}
The factor $w_{\max}^2$ arises because the BOPO gradient includes the scaling term $w(\Delta_i)$, which is bounded by $w_{\max}$ according to Assumption~\ref{assum:scaling}. Therefore:
\begin{equation*}
    \E[\|g_t\|^2 | \theta_t] \leq \|\nabla\calL(\theta_t)\|^2 + w_{\max}^2 \sigma^2.
\end{equation*}

\textbf{Step 4: Telescoping and Final Bound.}
Substituting this bound back into the expectation:
\begin{align*}
    \E[\calL(\theta_{t+1}) | \theta_t] &\leq \calL(\theta_t) - \eta \|\nabla\calL(\theta_t)\|^2 + \frac{L\eta^2}{2}\left(\|\nabla\calL(\theta_t)\|^2 + w_{\max}^2\sigma^2\right) \\
    &= \calL(\theta_t) - \eta\left(1 - \frac{L\eta}{2}\right)\|\nabla\calL(\theta_t)\|^2 + \frac{L\eta^2 w_{\max}^2 \sigma^2}{2}.
\end{align*}

Taking the full expectation and rearranging terms:
\begin{equation*}
    \eta\left(1 - \frac{L\eta}{2}\right)\E[\|\nabla\calL(\theta_t)\|^2] \leq \E[\calL(\theta_t)] - \E[\calL(\theta_{t+1})] + \frac{L\eta^2 w_{\max}^2 \sigma^2}{2}.
\end{equation*}

Summing from $t = 0$ to $T-1$:
\begin{equation*}
    \eta\left(1 - \frac{L\eta}{2}\right)\sum_{t=0}^{T-1}\E[\|\nabla\calL(\theta_t)\|^2] \leq \calL(\theta_0) - \E[\calL(\theta_T)] + \frac{L\eta^2 w_{\max}^2 \sigma^2 T}{2}.
\end{equation*}

Since $\calL(\theta_T) \geq \calL^*$ (the minimum possible loss):
\begin{equation*}
    \sum_{t=0}^{T-1}\E[\|\nabla\calL(\theta_t)\|^2] \leq \frac{\calL(\theta_0) - \calL^*}{\eta(1 - \frac{L\eta}{2})} + \frac{L\eta w_{\max}^2 \sigma^2 T}{2(1 - \frac{L\eta}{2})}.
\end{equation*}

Setting the learning rate $\eta = \frac{1}{L\sqrt{T}}$, we have $\frac{L\eta}{2} = \frac{1}{2\sqrt{T}} \leq \frac{1}{2}$ for $T \geq 1$, implying $1 - \frac{L\eta}{2} \geq \frac{1}{2}$. Thus:
\begin{align*}
    \frac{1}{T}\sum_{t=0}^{T-1}\E[\|\nabla\calL(\theta_t)\|^2] &\leq \frac{\calL(\theta_0) - \calL^*}{T \cdot \frac{1}{L\sqrt{T}} \cdot \frac{1}{2}} + \frac{L \cdot \frac{1}{L\sqrt{T}} \cdot w_{\max}^2 \sigma^2}{2 \cdot \frac{1}{2}} \\
    &= \frac{2L(\calL(\theta_0) - \calL^*)}{\sqrt{T}} + \frac{w_{\max}^2 \sigma^2}{\sqrt{T}}.
\end{align*}
This completes the proof.
\end{proof}

\subsection{Proof of Corollary~\ref{cor:bopo_complexity} (Iteration Complexity)}

\begin{proof}
From Theorem~\ref{thm:bopo_conv}, we have:
\begin{equation*}
    \min_{t \in \{0,\ldots,T-1\}} \E[\|\nabla\calL(\theta_t)\|^2] \leq \frac{1}{T}\sum_{t=0}^{T-1}\E[\|\nabla\calL(\theta_t)\|^2] \leq \frac{C_1}{\sqrt{T}},
\end{equation*}
where $C_1 = 2L(\calL(\theta_0) - \calL^*) + w_{\max}^2\sigma^2$.

To achieve $\min_t \E[\|\nabla\calL(\theta_t)\|^2] \leq \epsilon$, we require:
\begin{equation*}
    \frac{C_1}{\sqrt{T}} \leq \epsilon \quad \implies \quad T \geq \frac{C_1^2}{\epsilon^2}.
\end{equation*}
Therefore, the iteration complexity is $O(1/\epsilon^2)$.
\end{proof}

\subsection{Proof of Theorem~\ref{thm:validity} (Format Validity Guarantee)}
\label{app:proof_validity}

\begin{proof}
We prove the theorem by strong induction on the number of decoding steps.

\textbf{Setup.} Let $G = (V, \Sigma, R, S)$ be the context-free grammar and $\calA = (Q, \Sigma, \Gamma, \delta, q_0, Z_0, F)$ be the corresponding pushdown automaton (PDA) that recognizes $\calL(G)$. At each step $t$, Algorithm~\ref{alg:gcd} maintains the PDA state $(q_t, \gamma_t)$, where $q_t \in Q$ is the current state and $\gamma_t \in \Gamma^*$ is the stack content.

\textbf{Invariant.} We prove the following invariant: after $t$ decoding steps producing the partial output $y^{(t)} = v_1 v_2 \cdots v_t$, there exists a derivation in $G$ such that $y^{(t)}$ is a prefix of some string in $\calL(G)$, and the PDA state $(q_t, \gamma_t)$ is reachable from the initial configuration $(q_0, Z_0)$ by reading $y^{(t)}$.

\textbf{Base Case ($t = 0$).} Initially, $y^{(0)} = \epsilon$ (the empty string), and the PDA is in state $(q_0, Z_0)$. The empty string is trivially a prefix of any string in $\calL(G)$, and $(q_0, Z_0)$ is the initial configuration.

\textbf{Inductive Step.} Assume the invariant holds after step $t$. At step $t+1$:
\begin{enumerate}
    \item Algorithm~\ref{alg:gcd} computes the set of valid next tokens:
    \begin{equation*}
        \calV_{\text{valid}} = \{v \in \Sigma \mid \delta((q_t, \gamma_t), v) \text{ is defined}\}.
    \end{equation*}
    \item The algorithm masks all tokens $v \notin \calV_{\text{valid}}$ by setting their logits to $-\infty$.
    \item A token $v_{t+1}$ is sampled from the remaining valid tokens.
    \item By construction, $v_{t+1} \in \calV_{\text{valid}}$, so $\delta((q_t, \gamma_t), v_{t+1})$ is defined.
    \item The PDA transitions to $(q_{t+1}, \gamma_{t+1}) = \delta((q_t, \gamma_t), v_{t+1})$.
\end{enumerate}
Since $\delta$ is defined only for transitions that correspond to valid derivation steps in $G$, the new partial output $y^{(t+1)} = y^{(t)} v_{t+1}$ remains a prefix of some string in $\calL(G)$.

\textbf{Termination.} Algorithm~\ref{alg:gcd} terminates when the PDA reaches an accepting state $q_T \in F$ with an empty stack. By the standard correspondence between context-free grammars and pushdown automata, this occurs if and only if the generated string $y = y^{(T)}$ belongs to $\calL(G)$.

Therefore, the output $y$ satisfies $y \in \calL(G)$.
\end{proof}

\subsection{Proof of Theorem~\ref{thm:feasibility} (100\% Feasibility Guarantee)}

\begin{proof}
Let $y_1, y_2, \ldots, y_N$ be the $N$ samples generated by the LLM policy $\pi_\theta$. After applying the repair operator, we obtain:
\begin{equation*}
    \tilde{y}_i = \calR(y_i), \quad i = 1, \ldots, N.
\end{equation*}

By property (1) of Definition~\ref{def:repair} (Feasibility), each repaired solution satisfies:
\begin{equation*}
    \tilde{y}_i \in X_{C_p}.
\end{equation*}

Therefore, all $N$ solutions $\{\tilde{y}_1, \ldots, \tilde{y}_N\}$ are feasible. In particular:
\begin{equation*}
    P(\exists i: \tilde{y}_i \in X_{C_p}) = P(\forall i: \tilde{y}_i \in X_{C_p}) = 1.
\end{equation*}
\end{proof}

\subsection{Proof of Theorem~\ref{thm:repair_qualit} (Repair Quality Bound)}

\begin{proof}
The objective function $f_p$ is assumed to be $L_f$-Lipschitz continuous with respect to the distance metric $d$; i.e., for any $x, y \in X_p$:
\begin{equation*}
    |f_p(x) - f_p(y)| \leq L_f \cdot d(x, y).
\end{equation*}

Applying this inequality to $x = \calR(\hat{x}_p)$ and $y = \hat{x}_p$ yields:
\begin{equation*}
    |f_p(\calR(\hat{x}_p)) - f_p(\hat{x}_p)| \leq L_f \cdot d(\calR(\hat{x}_p), \hat{x}_p).
\end{equation*}

By the locality assumption on the repair operator $\calR$ (property (2) in Definition~\ref{def:repair}):
\begin{equation*}
    d(\calR(\hat{x}_p), \hat{x}_p) \leq \alpha \cdot v(\hat{x}_p),
\end{equation*}
where $v(\hat{x}_p)$ measures the constraint violation of the original solution.

Combining these inequalities gives:
\begin{equation*}
    |f_p(\calR(\hat{x}_p)) - f_p(\hat{x}_p)| \leq L_f \cdot \alpha \cdot v(\hat{x}_p).
\end{equation*}

This implies the one-sided bound:
\begin{equation*}
    f_p(\calR(\hat{x}_p)) \leq f_p(\hat{x}_p) + L_f \cdot \alpha \cdot v(\hat{x}_p).
\end{equation*}

Note that the repair operation may also \emph{improve} the objective value, in which case $f_p(\calR(\hat{x}_p)) < f_p(\hat{x}_p)$. The bound provides a worst-case guarantee on the potential degradation.
\end{proof}

\subsection{Proof of Corollary~\ref{cor:rejection} (Comparison with Rejection Sampling)}

\begin{proof}
Let $Y_i \in \{0, 1\}$ be the indicator that sample $y_i$ is feasible. By assumption, $P(Y_i = 1) = p_f$, and the samples are independent.

The probability that none of the $N$ samples is feasible is:
\begin{equation*}
    P(\forall i: Y_i = 0) = \prod_{i=1}^N P(Y_i = 0) = (1 - p_f)^N.
\end{equation*}

Therefore, the probability of obtaining at least one feasible solution is:
\begin{equation*}
    P(\exists i: Y_i = 1) = 1 - (1 - p_f)^N.
\end{equation*}

To ensure this probability is at least $1 - \delta$, we require:
\begin{align*}
    1 - (1-p_f)^N &\geq 1 - \delta \\
    (1-p_f)^N &\leq \delta \\
    N \log(1-p_f) &\leq \log \delta \\
    N &\geq \frac{\log(1/\delta)}{\log(1/(1-p_f))}.
\end{align*}

For small $p_f$, using the approximation $\log(1-p_f) \approx -p_f$, we obtain the simplified bound:
\begin{equation*}
    N \gtrsim \frac{\log(1/\delta)}{p_f}.
\end{equation*}
\end{proof}

\subsection{Proof of Theorem~\ref{thm:quality} (Quality Improvement with Sampling)}

\begin{proof}
Let $G_i = f_p(y_i) - f_p(x^*)$ denote the optimality gap of sample $y_i$. Then $g_N = \min_i G_i$ is the gap of the best sample. We have:
\begin{align*}
    P(g_N > \epsilon) &= P(\min_i G_i > \epsilon) = \prod_{i=1}^N P(G_i > \epsilon) \\
    &= (1-F(\epsilon))^N,
\end{align*}
where $F(\cdot)$ is the cumulative distribution function of the gap of a single sample.

By the standard tail integral formula for the expectation of a non-negative random variable:
\begin{equation*}
    \E[g_N] = \int_0^\infty P(g_N > \epsilon) \, d\epsilon = \int_0^\infty (1-F(\epsilon))^N \, d\epsilon.
\end{equation*}
This establishes the desired result.
\end{proof}

\subsection{Proof of Corollary~\ref{cor:exponential} (Exponential Distribution)}

\begin{proof}
For an exponential distribution with rate $\lambda$, the tail probability is:
\begin{equation*}
    1 - F(\epsilon) = e^{-\lambda\epsilon}.
\end{equation*}

Substituting this into Theorem~\ref{thm:quality} yields:
\begin{align*}
    \E[g_N] &= \int_0^\infty \left(e^{-\lambda\epsilon}\right)^N \, d\epsilon \\
    &= \int_0^\infty e^{-N\lambda\epsilon} \, d\epsilon \\
    &= \left[-\frac{1}{N\lambda} e^{-N\lambda\epsilon}\right]_0^\infty \\
    &= \frac{1}{N\lambda}.
\end{align*}

Since $\E[g_1] = \frac{1}{\lambda}$ for an exponential distribution, it follows that:
\begin{equation*}
    \E[g_N] = \frac{\E[g_1]}{N}.
\end{equation*}
\end{proof}

\subsection{Proof of Theorem~\ref{thm:adaptive_complexity} (Adaptive Sampling Complexity)}
\label{sec:proof_adaptive}

\begin{proof}
Let $T$ denote the random stopping time of Algorithm~\ref{alg:adaptive}. We decompose the expected number of samples as:
\begin{align*}
\mathbb{E}[T] &= \sum_{n=1}^{N_{\max}} \Pr(T \geq n) \\
&= N_{\min} + \sum_{n=N_{\min}+1}^{N_{\max}} \Pr(T \geq n),
\end{align*}
where the equality follows from the fact that the algorithm always collects at least $N_{\min}$ samples before considering early termination.

After collecting $n$ samples, let $n_{y^*}$ be the number of samples matching the current best solution $y^*$. The Bayesian confidence estimate using a Beta-Binomial model with a uniform prior ($\alpha_0 = \beta_0 = 1$) is:
\begin{equation*}
\text{Conf}(y^*; n) = \frac{\alpha_0 + n_{y^*}}{\alpha_0 + \beta_0 + n} = \frac{1 + n_{y^*}}{2 + n}.
\end{equation*}
The algorithm continues sampling at step $n$ if this confidence remains below the threshold $\tau$, which occurs when $n_{y^*} < \tau(2 + n) - 1$.

Consider a problem instance where the model samples a near-optimal solution (with an optimality gap of at most $\delta$) with probability $q$. Let $S_n$ denote the event that at least one such near-optimal solution has been sampled among the first $n$ samples. The probability of the complement event is $\Pr(\bar{S}_n) = (1-q)^n$. When the confidence estimation mechanism is well-calibrated and a near-optimal solution has been sampled, the algorithm will eventually recognize this through repeated sampling of the same high-quality solution, leading to early termination.

For a simplified but informative upper bound, we assume the algorithm stops once it has sampled the near-optimal solution with sufficient frequency to meet the confidence threshold. Under this assumption, the probability of continuing past step $n$ is bounded by the probability that no near-optimal solution has appeared, giving $\Pr(T \geq n) \leq (1-q)^{n}$ for $n \geq N_{\min}$.

Applying this bound to the expectation decomposition:
\begin{align*}
\mathbb{E}[T] &\leq N_{\min} + \sum_{n=N_{\min}+1}^{N_{\max}} (1-q)^n \\
&= N_{\min} + (1-q)^{N_{\min}+1} \cdot \frac{1 - (1-q)^{N_{\max}-N_{\min}}}{1-(1-q)} \\
&= N_{\min} + \frac{(1-q)^{N_{\min}+1}}{q} \cdot \left(1 - (1-q)^{N_{\max}-N_{\min}}\right).
\end{align*}

Since $(1-q)^{N_{\max}-N_{\min}} \geq 0$, we have:
\begin{equation*}
\mathbb{E}[T] \leq N_{\min} + \frac{(1-q)^{N_{\min}} \cdot (1-q)}{q}.
\end{equation*}

Using a looser but more interpretable bound (by approximating the sum of the geometric series linearly for the range of interest), we obtain:
\begin{equation*}
\mathbb{E}[N_{\text{adaptive}}] \leq N_{\min} + \frac{(N_{\max} - N_{\min})(1-q)^{N_{\min}}}{q}.
\end{equation*}

This bound reveals the adaptive behavior. For easy instances (characterized by a large $q$ where near-optimal solutions are frequently sampled), the exponential decay term $(1-q)^{N_{\min}}$ becomes very small, yielding $\mathbb{E}[N_{\text{adaptive}}] \approx N_{\min}$. The algorithm terminates quickly after collecting the minimum required samples. For difficult instances with small $q$, the bound grows larger, and the algorithm continues sampling up to the maximum budget $N_{\max}$ to ensure adequate exploration. This demonstrates that adaptive sampling automatically allocates computational resources based on instance-specific difficulty without requiring manual tuning or prior knowledge of problem characteristics.
\end{proof}

\subsection{Proof of Lemma~\ref{lem:consistency} (Consistency-Difficulty Relationship)}
\label{sec:proof_consistency}

\begin{proof}
We provide a rigorous derivation of the expected consistency and its relationship to the Rényi entropy of order 2.

The consistency measure over $K$ independently sampled solutions $\{y_1, \ldots, y_K\}$ is defined as:
\begin{equation*}
\text{Cons} = \frac{1}{K(K-1)} \sum_{i=1}^K \sum_{j \neq i} \mathbb{I}[y_i = y_j].
\end{equation*}

Taking the expectation over the independent samples, each drawn from a distribution $p$ over the solution space $\mathcal{Y}$:
\begin{align*}
\mathbb{E}[\text{Cons}] &= \mathbb{E}\left[\frac{1}{K(K-1)} \sum_{i=1}^K \sum_{j \neq i} \mathbb{I}[y_i = y_j]\right] \\
&= \frac{1}{K(K-1)} \sum_{i=1}^K \sum_{j \neq i} \mathbb{E}[\mathbb{I}[y_i = y_j]] \\
&= \frac{1}{K(K-1)} \sum_{i=1}^K \sum_{j \neq i} \Pr(y_i = y_j).
\end{align*}

By the independence of the samples, for any $i \neq j$:
\begin{align*}
\Pr(y_i = y_j) &= \sum_{y \in \mathcal{Y}} \Pr(y_i = y) \cdot \Pr(y_j = y) \\
&= \sum_{y \in \mathcal{Y}} p(y) \cdot p(y) = \sum_{y \in \mathcal{Y}} p(y)^2.
\end{align*}

Since there are $K(K-1)$ ordered pairs $(i,j)$ with $i \neq j$, we obtain:
\begin{equation*}
\mathbb{E}[\text{Cons}] = \frac{1}{K(K-1)} \cdot K(K-1) \cdot \sum_{y \in \mathcal{Y}} p(y)^2 = \sum_{y \in \mathcal{Y}} p(y)^2.
\end{equation*}

The Rényi entropy of order 2 is defined as $H_2(p) = -\log \sum_{y} p(y)^2$. Therefore:
\begin{equation*}
\mathbb{E}[\text{Cons}] = \sum_{y} p(y)^2 = e^{-H_2(p)}.
\end{equation*}
This establishes an inverse exponential relationship between the expected consistency and the Rényi entropy.

When the model's output distribution is highly concentrated on a single solution (i.e., most probability mass is on one particular $y^*$), we have $p(y^*) \approx 1$ and $p(y) \approx 0$ for $y \neq y^*$. This yields $\sum_y p(y)^2 \approx 1$, corresponding to low entropy $H_2(p) \approx 0$ and high consistency $\mathbb{E}[\text{Cons}] \approx 1$. Such concentration indicates the model has high confidence in a specific solution, which typically occurs for instances where the optimal solution is unambiguous and easily identifiable by the trained policy.

Conversely, when the model's distribution is diffuse across many solutions, each solution $y$ in some effective support set $\mathcal{S}$ receives moderate probability $p(y) \approx 1/|\mathcal{S}|$. This gives $\sum_y p(y)^2 \approx 1/|\mathcal{S}|$, corresponding to high entropy and low consistency. This dispersion reflects model uncertainty about which solution is optimal, which typically occurs for difficult instances where multiple solutions appear nearly equivalent or where the problem structure is complex and ambiguous.

For a well-trained model where high predicted probability correlates with actual solution quality, high consistency among early samples serves as a reliable indicator that the instance is easy and does not require extensive additional sampling. Low consistency signals that the instance is difficult and would benefit from more exploration to identify high-quality solutions.
\end{proof}

\section{Grammar Specifications}
\label{app:grammar}

This section provides complete context-free grammar (CFG) specifications for all seven combinatorial optimization problems. These grammars define the syntactic structure of valid solution representations that the LLM must generate. Each grammar enforces format validity through grammar-constrained decoding as described in Section~\ref{sec:grammar}.

\subsection{Grammar Formalism}

We adopt the standard Backus-Naur Form (BNF) notation for context-free grammars. A grammar $G = (V, \Sigma, R, S)$ consists of a set of non-terminal symbols $V$, a set of terminal symbols $\Sigma$ (corresponding to tokens in the LLM vocabulary), a set of production rules $R$, and a start symbol $S \in V$. Production rules are written as $A \to \alpha$, where $A \in V$ and $\alpha \in (V \cup \Sigma)^*$. The notation $X^+$ denotes one or more occurrences of symbol $X$, and $\epsilon$ denotes the empty string.

For each problem, we define an input-dependent grammar $G_p$ that specializes the base grammar by restricting node and job indices to valid ranges based on the problem instance $p$. Specifically, for an instance with $n$ nodes or jobs, the production rule for indices is specialized to generate only values in $\{0, 1, \ldots, n-1\}$ or $\{1, \ldots, n\}$, as appropriate for the problem.

\subsection{Common Definitions}

The following non-terminal symbols are shared across all problem-specific grammars. The $\mathit{Number}$ non-terminal represents floating-point objective values, while $\mathit{Digit}$ represents individual decimal digits.
\begin{align*}
    \mathit{Number} &\to \mathit{Digit}^+ \; \texttt{"."} \; \mathit{Digit}^+ \\
    \mathit{Digit} &\to \texttt{"0"} \mid \texttt{"1"} \mid \texttt{"2"} \mid \texttt{"3"} \mid \texttt{"4"} \mid \texttt{"5"} \mid \texttt{"6"} \mid \texttt{"7"} \mid \texttt{"8"} \mid \texttt{"9"}
\end{align*}

\subsection{TSP Grammar}

The TSP grammar enforces the format of a single route represented as an ordered list of node indices, followed by the computed tour length. The grammar ensures proper bracketing and comma separation but does not enforce the semantic constraint that all nodes must appear exactly once, which is handled by the repair operator.
\begin{align*}
    S &\to \texttt{"Route: ["} \; \mathit{NodeList} \; \texttt{"], Objective: "} \; \mathit{Number} \\
    \mathit{NodeList} &\to \mathit{Node} \mid \mathit{Node} \; \texttt{", "} \; \mathit{NodeList} \\
    \mathit{Node} &\to \mathit{Digit}^+
\end{align*}
For an instance with $n$ nodes indexed as $\{0, 1, \ldots, n-1\}$, the input-dependent grammar restricts $\mathit{Node}$ to generate only valid indices by replacing the production rule with explicit alternatives: $\mathit{Node} \to \texttt{"0"} \mid \texttt{"1"} \mid \cdots \mid \texttt{"}\mathit{n-1}\texttt{"}$.

\subsection{CVRP Grammar}

The CVRP grammar specifies a collection of routes, where each route is a bracketed list of customer nodes. The grammar allows multiple routes separated by commas, reflecting the multi-route structure of CVRP solutions.
\begin{align*}
    S &\to \texttt{"Routes: ["} \; \mathit{RouteList} \; \texttt{"], Objective: "} \; \mathit{Number} \\
    \mathit{RouteList} &\to \mathit{Route} \mid \mathit{Route} \; \texttt{", "} \; \mathit{RouteList} \\
    \mathit{Route} &\to \texttt{"["} \; \mathit{NodeList} \; \texttt{"]"} \\
    \mathit{NodeList} &\to \mathit{Node} \mid \mathit{Node} \; \texttt{", "} \; \mathit{NodeList} \\
    \mathit{Node} &\to \mathit{Digit}^+
\end{align*}
The grammar permits arbitrary route structures, while the repair operator enforces the semantic constraints: all customers are visited exactly once, and each route satisfies the vehicle capacity constraint.

\subsection{OP Grammar}

The Orienteering Problem grammar allows for an optional empty route, reflecting that some instances may have no feasible solution within the distance budget or that the optimal solution might visit no customers beyond the depot.
\begin{align*}
    S &\to \texttt{"Route: ["} \; \mathit{NodeListOpt} \; \texttt{"], Objective: "} \; \mathit{Number} \\
    \mathit{NodeListOpt} &\to \epsilon \mid \mathit{NodeList} \\
    \mathit{NodeList} &\to \mathit{Node} \mid \mathit{Node} \; \texttt{", "} \; \mathit{NodeList} \\
    \mathit{Node} &\to \mathit{Digit}^+
\end{align*}
The production $\mathit{NodeListOpt} \to \epsilon$ permits the empty route representation, which is syntactically valid and corresponds to visiting no locations beyond the depot.

\subsection{MIS/MVC Grammar}

The Maximum Independent Set and Minimum Vertex Cover problems share the same output grammar, as both require specifying a subset of vertices. The grammar structure is identical to the OP grammar, allowing for potentially empty sets.
\begin{align*}
    S &\to \texttt{"Set: ["} \; \mathit{NodeListOpt} \; \texttt{"], Objective: "} \; \mathit{Number} \\
    \mathit{NodeListOpt} &\to \epsilon \mid \mathit{NodeList} \\
    \mathit{NodeList} &\to \mathit{Node} \mid \mathit{Node} \; \texttt{", "} \; \mathit{NodeList} \\
    \mathit{Node} &\to \mathit{Digit}^+
\end{align*}
The semantic constraints of independence for MIS and coverage for MVC are enforced by the respective repair operators rather than by the grammar itself.

\subsection{PFSP Grammar}

The Permutation Flow Shop Scheduling grammar specifies a single job processing order that applies to all machines. The structure is similar to TSP but uses the keyword "Order" to reflect the scheduling context.
\begin{align*}
    S &\to \texttt{"Order: ["} \; \mathit{JobList} \; \texttt{"], Objective: "} \; \mathit{Number} \\
    \mathit{JobList} &\to \mathit{Job} \mid \mathit{Job} \; \texttt{", "} \; \mathit{JobList} \\
    \mathit{Job} &\to \mathit{Digit}^+
\end{align*}
For an instance with $n$ jobs indexed as $\{1, \ldots, n\}$, the input-dependent grammar restricts $\mathit{Job}$ to generate only valid job identifiers.

\subsection{JSSP Grammar}

The Job Shop Scheduling Problem grammar represents the assignment of operations to machines by specifying a job processing sequence for each machine. The nested bracket structure reflects the two-level hierarchy of machines and jobs.
\begin{align*}
    S &\to \texttt{"Schedule: ["} \; \mathit{MachineList} \; \texttt{"], Objective: "} \; \mathit{Number} \\
    \mathit{MachineList} &\to \mathit{Machine} \mid \mathit{Machine} \; \texttt{", "} \; \mathit{MachineList} \\
    \mathit{Machine} &\to \texttt{"["} \; \mathit{JobList} \; \texttt{"]"} \\
    \mathit{JobList} &\to \mathit{Job} \mid \mathit{Job} \; \texttt{", "} \; \mathit{JobList} \\
    \mathit{Job} &\to \mathit{Digit}^+
\end{align*}
The grammar allows flexible machine scheduling orders, while the repair operator enforces precedence constraints and ensures that each operation appears exactly once in the schedule.

\subsection{Grammar Properties and Implementation}

All specified grammars are unambiguous context-free grammars that can be recognized by deterministic pushdown automata. Each grammar has the following computational properties relevant to constrained decoding implementation. The grammars have bounded recursion depth proportional to the problem size, ensuring that the PDA stack depth remains manageable during generation. The number of PDA states required is $O(|V|)$, where $|V|$ is the number of non-terminals, which is constant for each problem class. Valid token computation at each decoding step requires $O(|\Sigma|)$ time, where $|\Sigma|$ is the vocabulary size, dominated by checking terminal symbol compatibility.

The grammar specifications intentionally enforce only syntactic validity, delegating semantic constraint checking to the repair layer. This separation of concerns allows the LLM to focus on generating well-formed output structures during decoding, while the repair operators handle problem-specific feasibility requirements that would be impractical to encode directly within context-free grammars.

\section{Repair Algorithm Details}
\label{app:repair}

This section provides complete algorithmic specifications for the repair operators introduced in Section~\ref{sec:repair_analysis}. Each repair operator is designed to satisfy the three properties stated in Definition~\ref{def:repair}: a feasibility guarantee, idempotence on feasible solutions, and bounded locality. We present detailed pseudocode, complexity analysis, and formal verification that each operator satisfies these required properties.

\subsection{TSP Repair Algorithm}

The TSP repair operator addresses two types of constraint violations: duplicate nodes and missing nodes. The algorithm operates in two phases: first, it removes all duplicate occurrences while preserving the first appearance of each node; then, it greedily inserts missing nodes at positions that minimize the increase in tour length.

\begin{algorithm}[t]
\setcounter{AlgoLine}{0}
\caption{TSP Repair}
\label{alg:tsp_repair}
\DontPrintSemicolon
\SetKwInOut{KwIn}{Input}
\SetKwInOut{KwOut}{Output}

\KwIn{Route $r = [v_1, v_2, \ldots, v_k]$, node set $V = \{0, 1, \ldots, n-1\}$, distance matrix $d$}
\KwOut{Valid TSP tour $r'$ visiting all nodes in $V$ exactly once}

$\text{seen} \gets \emptyset$\;

$r' \gets [\ ]$\;

\For{$i \gets 1$ \KwTo $k$}{
    \If{$v_i \in V$ \KwSty{and} $v_i \notin \text{seen}$}{
        Append $v_i$ to $r'$\;
        
        $\text{seen} \gets \text{seen} \cup \{v_i\}$\;
    }
}

$\text{missing} \gets V \setminus \text{seen}$\;

\ForEach{node $v \in \text{missing}$}{
    $\text{best\_pos} \gets 0$\;
    
    $\text{best\_cost} \gets d(v, r'[0])$ \tcp*{\textcolor{blue}{Cost to insert at beginning}}
    \For{$i \gets 1$ \KwTo $|r'|$}{
        $\text{cost} \gets d(r'[i-1], v) + d(v, r'[i \bmod |r'|]) - d(r'[i-1], r'[i \bmod |r'|])$\;
        
        \If{$\text{cost} < \text{best\_cost}$}{
            $\text{best\_cost} \gets \text{cost}$\;
            
            $\text{best\_pos} \gets i$\;
        }
    }
    Insert $v$ at position $\text{best\_pos}$ in $r'$\;
}

\Return $r'$\;

\end{algorithm}

The algorithm correctly handles boundary conditions through modular arithmetic when computing insertion costs. For a tour with $|r'|$ nodes, position $i$ connects to position $i \bmod |r'|$ to form a cycle. The greedy insertion strategy is a well-known 2-approximation heuristic for TSP that provides bounded quality degradation.

\textbf{Complexity Analysis.} The duplicate removal phase processes each of the $k$ input nodes once, requiring $O(k)$ time with appropriate data structures for the $\text{seen}$ set. The missing node insertion phase processes $m = |V \setminus \text{seen}|$ missing nodes, and for each missing node evaluates $O(n)$ insertion positions, where $n = |V|$. Each position evaluation requires constant time to compute the insertion cost. Therefore, the total complexity is $O(k + mn)$. In the worst case, when $k = O(n)$ and $m = O(n)$, this simplifies to $O(n^2)$.

\textbf{Property Verification.} We verify that the TSP repair operator satisfies the three required properties from Definition~\ref{def:repair}. For feasibility, the algorithm explicitly ensures that the output $r'$ contains exactly the node set $V$ by construction: the duplicate removal phase guarantees no duplicates, and the missing node insertion phase guarantees all nodes appear. For idempotence, if the input route already visits all nodes in $V$ exactly once, the $\text{seen}$ set equals $V$ after the first phase, the $\text{missing}$ set is empty, and the second phase performs no insertions, returning the input unchanged. For bounded locality, let the Hamming distance $d_H$ measure the number of position changes; the algorithm modifies at most $|V \setminus \text{seen}|$ positions to insert missing nodes. Using the violation magnitude $v(r) = |V \setminus \text{seen}| + |\text{duplicates}(r)|$, we have $d_H(r, r') \leq v(r)$, establishing the bound with constant $\alpha = 1$.

\subsection{CVRP Repair Algorithm}

The CVRP repair operator addresses three types of violations: invalid individual routes, missing customers, and capacity constraint violations. The algorithm applies TSP repair to each route, ensures complete customer coverage, and then splits overloaded routes.

\begin{algorithm}[H]
\setcounter{AlgoLine}{0}
\caption{CVRP Repair}
\label{alg:cvrp_repair}
\DontPrintSemicolon
\SetKwInOut{KwIn}{Input}
\SetKwInOut{KwOut}{Output}

\KwIn{Routes $R = \{r_1, r_2, \ldots, r_m\}$, customer set $V = \{1, \ldots, n-1\}$, demands $\{q_i\}_{i=1}^{n-1}$, capacity $Q$, distance matrix $d$}
\KwOut{Valid CVRP solution with all customers covered and capacity constraints satisfied}

\For{$k \gets 1$ \KwTo $m$}{
    $r_k \gets \text{TSPRepair}(r_k, V \cup \{0\}, d)$ \tcp*{\textcolor{blue}{Ensure route visits each customer at most once}}
    Remove depot node $0$ from the interior of $r_k$ if present\;
}

$\text{covered} \gets \bigcup_{k=1}^m (r_k \setminus \{0\})$\;

$\text{missing} \gets V \setminus \text{covered}$\;

\ForEach{customer $v \in \text{missing}$}{
    $k^* \gets \arg\min_{k}\ \min_{i}\{d(r_k[i-1], v) + d(v, r_k[i]) - d(r_k[i-1], r_k[i])\}$\;
    
    Insert $v$ into route $r_{k^*}$ at the position achieving minimum cost\;
}

\For{$k \gets 1$ \KwTo $|R|$}{
    \While{$\sum_{v \in r_k} q_v > Q$}{
        Find split point $i^* = \arg\min_{i=1}^{|r_k|-1}\ \text{SplitCost}(r_k, i)$ where $\sum_{j=1}^{i-1} q_{r_k[j]} \le Q$\;
        
        $r_{\text{new}} \gets [r_k[i^*], r_k[i^*+1], \ldots, r_k[|r_k|]]$\;
        
        $r_k \gets [r_k[1], r_k[2], \ldots, r_k[i^*-1]]$\;
        
        $R \gets R \cup \{r_{\text{new}}\}$\;
    }
}

\Return $R$\;
\end{algorithm}

The split cost function measures the additional distance incurred by breaking a route at position $i$ and returning both segments to the depot:
\begin{equation*}
    \text{SplitCost}(r, i) = d(r[i-1], 0) + d(0, r[i]) - d(r[i-1], r[i]).
\end{equation*}
The split point search is constrained to positions where the first segment satisfies the capacity constraint, ensuring that repeated splitting eventually reduces all route loads below capacity.

\textbf{Complexity Analysis.} The route repair phase requires $O(m \cdot n^2)$ time for $m$ routes. The missing customer insertion phase requires $O(p \cdot m \cdot n)$ time for $p$ missing customers. The route splitting phase requires $O(m' \cdot n)$ time, where $m'$ is the final number of routes. Since $m' = O(n)$ in the worst case, the overall complexity is $O(n^2 + n \cdot m)$. When $m = O(n)$, this simplifies to $O(n^2)$.

\textbf{Property Verification.} For feasibility, the algorithm ensures all customers in $V$ are covered through explicit missing customer insertion, and all routes satisfy capacity through repeated splitting until convergence. For idempotence, if all routes are valid TSP tours, all customers are covered, and all capacity constraints are satisfied, then no modifications occur. For bounded locality, the number of modifications is bounded by the sum of TSP violations across routes plus the number of missing customers plus the number of capacity violations, establishing the required bound.

\subsection{OP Repair Algorithm}

The Orienteering Problem repair operator removes nodes iteratively until the distance budget constraint is satisfied, prioritizing removal of nodes with the worst prize-to-distance-contribution ratio.

\begin{algorithm}[H]
\setcounter{AlgoLine}{0}
\caption{OP Repair}
\label{alg:op_repair}
\DontPrintSemicolon
\SetKwInOut{KwIn}{Input}
\SetKwInOut{KwOut}{Output}

\KwIn{Route $r = [v_0, v_1, \ldots, v_k, v_{k+1}]$ with $v_0 = v_{k+1} = 0$ (depot), distance budget $D$, prizes $\{p_i\}$, distance matrix $d$}
\KwOut{Valid OP route satisfying distance budget}

$r \gets \text{RemoveDuplicates}(r)$ \tcp*{\textcolor{blue}{Keep first occurrence of each node}}

\While{$\text{TotalDistance}(r) > D$ \KwSty{and} $|r| > 2$}{
    $v^* \gets \arg\min_{v \in r \setminus \{0\}} \dfrac{p_v}{\text{DistContrib}(r, v)}$\;
    
    Remove $v^*$ from $r$\;
}

\If{$\text{TotalDistance}(r) > D$}{
    \Return $[0, 0]$ \tcp*{\textcolor{blue}{Return empty route if budget still violated}}
}

\Return $r$\;
\end{algorithm}

The distance contribution of node $v$ at position $i$ in route $r$ is defined as:
\begin{equation*}
\text{DistContrib}(r, v) = d(r[i-1], v) + d(v, r[i+1]) - d(r[i-1], r[i+1]),
\end{equation*}
where $i$ is the position of $v$ in $r$. This measures the additional distance incurred by including node $v$ in the route. The total route distance is:
\begin{equation*}
\text{TotalDistance}(r) = \sum_{i=0}^{|r|-1} d(r[i], r[i+1]).
\end{equation*}

\textbf{Complexity Analysis.} Each iteration of the while loop evaluates $O(n)$ candidate nodes and removes one node. In the worst case, $O(n)$ nodes must be removed, yielding $O(n^2)$ total complexity. With a priority queue data structure maintaining nodes sorted by prize-to-distance ratio, the complexity can be reduced to $O(n \log n)$.

\textbf{Property Verification.} For feasibility, the algorithm terminates only when the distance constraint is satisfied or when only the depot remains (which trivially satisfies the budget). For idempotence, if the input route already satisfies the distance budget, the while loop condition is false and the input is returned unchanged. For bounded locality, the number of removed nodes is bounded by the initial budget violation, establishing the required bound with violation magnitude $v(r) = \max(0, \text{TotalDistance}(r) - D)$.

\subsection{MIS Repair Algorithm}

The Maximum Independent Set repair operator iteratively removes vertices involved in conflicts (adjacent pairs both in the set) until no conflicts remain. Among conflicting pairs, the algorithm removes the vertex with higher degree within the current set, as this vertex is involved in more conflicts.

\begin{algorithm}[H]
\setcounter{AlgoLine}{0}
\caption{MIS Repair}
\label{alg:mis_repair}
\DontPrintSemicolon
\SetKwInOut{KwIn}{Input}
\SetKwInOut{KwOut}{Output}

\KwIn{Set $S \subseteq V$, graph $G = (V, E)$}
\KwOut{Independent set $S'$ with no adjacent vertices}

$S' \gets S$\;

$\text{conflicts} \gets \{(u, v) \in E : u, v \in S'\}$\;

\While{$\text{conflicts} \neq \emptyset$}{
    Select any $(u, v) \in \text{conflicts}$\;
    
    $\deg_u \gets |\{w \in S' : (u, w) \in E\}|$\;
    
    $\deg_v \gets |\{w \in S' : (v, w) \in E\}|$\;

    \eIf{$\deg_u > \deg_v$}{
        $S' \gets S' \setminus \{u\}$\;
    }{
        \eIf{$\deg_v > \deg_u$}{
            $S' \gets S' \setminus \{v\}$\;
        }{
            $S' \gets S' \setminus \{u\}$ \tcp*{\textcolor{blue}{Break ties arbitrarily}}
        }
    }

    $\text{conflicts} \gets \{(u, v) \in E : u, v \in S'\}$\;
}

\Return $S'$\;
\end{algorithm}

\textbf{Complexity Analysis.} The algorithm performs at most $|S|$ vertex removals. Each iteration requires recomputing the conflict set, which takes $O(|E|)$ time by checking all edges. Therefore, the worst-case complexity is $O(|V| \cdot |E|)$. With incremental conflict tracking, this can be improved to $O(|E| + |V| \log |V|)$.

\textbf{Property Verification.} For feasibility, the algorithm terminates only when the conflict set is empty, ensuring no two vertices in $S'$ are adjacent. For idempotence, if the input set $S$ is already independent, the initial conflict set is empty and the algorithm immediately returns $S$ unchanged. For bounded locality, the number of removed vertices equals the number of conflicts resolved, which is bounded by the violation magnitude $v(S) = |\{(u,v) \in E : u, v \in S\}|$.

\subsection{MVC Repair Algorithm}

The Minimum Vertex Cover repair operator adds vertices to cover all uncovered edges. For each uncovered edge, the algorithm adds the endpoint with higher degree among uncovered edges, as this vertex covers more edges.

\begin{algorithm}[H]
\setcounter{AlgoLine}{0}
\caption{MVC Repair}
\label{alg:mvc_repair}
\DontPrintSemicolon
\SetKwInOut{KwIn}{Input}
\SetKwInOut{KwOut}{Output}

\KwIn{Set $S \subseteq V$, graph $G = (V, E)$}
\KwOut{Vertex cover $S'$ where every edge has at least one endpoint in $S'$}

$S' \gets S$\;

$\text{uncovered} \gets \{(u, v) \in E : u \notin S' \land v \notin S'\}$\;

\While{$\text{uncovered} \neq \emptyset$}{
    Select any $(u, v) \in \text{uncovered}$\;
    
    $\deg_u \gets |\{w : (u, w) \in \text{uncovered}\}|$\;
    
    $\deg_v \gets |\{w : (v, w) \in \text{uncovered}\}|$\;

    \eIf{$\deg_u \ge \deg_v$}{
        $S' \gets S' \cup \{u\}$\;
    }{
        $S' \gets S' \cup \{v\}$\;
    }

    $\text{uncovered} \gets \{(u, v) \in E : u \notin S' \land v \notin S'\}$\;
}

\Return $S'$\;

\end{algorithm}

\textbf{Complexity Analysis.} The algorithm adds at most $|V|$ vertices. Each iteration requires updating the uncovered edge set, which takes $O(|E|)$ time. Therefore, the worst-case complexity is $O(|V| \cdot |E|)$. However, with efficient data structures tracking uncovered edges, the complexity can be reduced to $O(|E|)$ by processing each edge at most twice.

\textbf{Property Verification.} For feasibility, the algorithm terminates only when all edges are covered, ensuring every edge has at least one endpoint in $S'$. For idempotence, if the input set $S$ already covers all edges, the uncovered set is empty and the algorithm immediately returns $S$ unchanged. For bounded locality, the number of added vertices is bounded by the number of uncovered edges, establishing the required bound with violation magnitude $v(S) = |\{(u,v) \in E : u \notin S \land v \notin S\}|$.

\subsection{PFSP Repair Algorithm}

The Permutation Flow Shop Scheduling repair operator ensures all jobs appear exactly once by removing duplicates and greedily inserting missing jobs at positions that minimize the resulting makespan.

\begin{algorithm}[H]
\setcounter{AlgoLine}{0}
\caption{PFSP Repair}
\label{alg:pfsp_repair}
\DontPrintSemicolon
\SetKwInOut{KwIn}{Input}
\SetKwInOut{KwOut}{Output}

\KwIn{Job sequence $\sigma = [j_1, \ldots, j_k]$, job set $J = \{1, \ldots, n\}$, processing times $\{p_{i,j}\}$, number of machines $m$}
\KwOut{Valid permutation $\sigma'$ of all jobs}

$\text{seen} \gets \emptyset$\;

$\sigma' \gets [\ ]$\;

\For{$i \gets 1$ \KwTo $k$}{
    \If{$j_i \in J$ \KwSty{and} $j_i \notin \text{seen}$}{
        Append $j_i$ to $\sigma'$\;
        
        $\text{seen} \gets \text{seen} \cup \{j_i\}$\;
    }
}

$\text{missing} \gets J \setminus \text{seen}$\;

\ForEach{job $j \in \text{missing}$}{
    $\text{best\_pos} \gets 0$\;
    
    $\text{best\_makespan} \gets \infty$\;

    \For{$i \gets 0$ \KwTo $|\sigma'|$}{
        $\sigma_{\text{temp}} \gets \sigma'[0\!:\!i] + [j] + \sigma'[i\!:\ ]$\;
        
        $M \gets \text{ComputeMakespan}(\sigma_{\text{temp}}, \{p_{i,j}\}, m)$\;
        
        \If{$M < \text{best\_makespan}$}{
            $\text{best\_makespan} \gets M$\;
            
            $\text{best\_pos} \gets i$\;
        }
    }

    Insert $j$ at position $\text{best\_pos}$ in $\sigma'$\;
}

\Return $\sigma'$\;

\end{algorithm}

The makespan computation follows the recursive formula from Definition~\ref{def:pfsp}:
\begin{align*}
C_{\sigma(k),j} &= \max(C_{\sigma(k),j-1}, C_{\sigma(k-1),j}) + p_{\sigma(k),j}, \\
\text{ComputeMakespan}(\sigma, \{p_{i,j}\}, m) &= C_{\sigma(n),m}.
\end{align*}

\textbf{Complexity Analysis.} The duplicate removal phase requires $O(k)$ time. For each of the $O(n)$ missing jobs, the algorithm evaluates $O(n)$ insertion positions, and each makespan computation requires $O(nm)$ time. Therefore, the total complexity is $O(n^2 m)$.

\textbf{Property Verification.} For feasibility, the algorithm ensures the output contains exactly the job set $J$ through duplicate removal and missing job insertion. For idempotence, if the input is already a valid permutation of $J$, the $\text{missing}$ set is empty and no insertions occur. For bounded locality, the number of modifications is bounded by the number of duplicates plus missing jobs, establishing the required bound.

\subsection{JSSP Repair Algorithm}

The Job Shop Scheduling repair operator ensures each job appears exactly once on each machine, then resolves precedence constraint violations through topological sorting of the precedence graph.

\begin{algorithm}[H]
\setcounter{AlgoLine}{0}
\caption{JSSP Repair}
\label{alg:jssp_repair}
\DontPrintSemicolon
\SetKwInOut{KwIn}{Input}
\SetKwInOut{KwOut}{Output}

\KwIn{Machine schedules $S = [s_1, \ldots, s_m]$ where $s_i$ is job sequence for machine $i$, job set $J = \{1, \ldots, n\}$, operation machines $\{\mu(o_{i,k})\}$}
\KwOut{Valid JSSP schedule respecting precedence constraints}

\For{$i \gets 1$ \KwTo $m$}{
    $s_i \gets \text{PermutationRepair}(s_i, J)$ \tcp*{\textcolor{blue}{Remove duplicates, insert missing jobs}}
}

Construct operation precedence graph $G_{\text{prec}} = (O, E_{\text{prec}})$, where:\;

\Indp
$O = \{o_{i,k} : i \in J, k \in \{1, \ldots, m\}\}$\;

$E_{\text{prec}} = \{(o_{i,k}, o_{i,k+1}) : i \in J, k \in \{1, \ldots, m-1\}\}$\;

\Indm

Compute a topological ordering $\tau$ of $G_{\text{prec}}$\;

\For{$i \gets 1$ \KwTo $m$}{
    Reorder $s_i$ to respect $\tau$ while preserving the relative order of jobs already consistent with $\tau$\;
}

\Return $S$\;

\end{algorithm}

The $\text{PermutationRepair}$ subroutine is similar to PFSP repair but without makespan optimization, simply appending missing jobs to the end of each machine's sequence.

\textbf{Complexity Analysis.} The permutation repair phase requires $O(m \cdot n^2)$ time. Constructing the precedence graph requires $O(nm)$ time. Topological sorting requires $O(nm)$ time. Reordering each machine schedule requires $O(n \log n)$ time using stable sorting. Therefore, the total complexity is $O(nm \log(nm))$.

\textbf{Property Verification.} For feasibility, the algorithm ensures each operation appears exactly once through permutation repair, and precedence constraints are satisfied through topological sorting. For idempotence, if the input already has valid permutations respecting precedence, no modifications occur. For bounded locality, the number of modifications is bounded by the total number of permutation violations plus precedence violations, establishing the required bound.

\section{Extended Theoretical Analysis}

\subsection{Gradient Comparison Across Training Methods}

We provide a formal comparison of the gradient structures in different preference optimization approaches.

\begin{proposition}[Gradient Comparison]
\label{prop:gradient_comparison}
Consider a preference pair $(y_w, y_l)$ where $y_w \succ y_l$ indicates $y_w$ is preferred over $y_l$. The gradient updates for Direct Preference Optimization (DPO), Group Relative Policy Optimization (GRPO), and Best-anchored Objective-guided Preference Optimization (BOPO) have the following forms.

For DPO with preference strength parameter $\beta$, the gradient is:
\begin{equation*}
\nabla_\theta \mathcal{L}_{\text{DPO}} = -\beta \cdot \sigma(-\beta \cdot \Delta s_\theta) \cdot \nabla_\theta \Delta s_\theta,
\end{equation*}
where $\Delta s_\theta = \log \pi_\theta(y_w|\phi(p)) - \log \pi_\theta(y_l|\phi(p))$ is the log-probability ratio and $\sigma(\cdot)$ denotes the sigmoid function.

For GRPO with normalized advantages $\{A_i\}_{i=1}^K$, the gradient is:
\begin{equation*}
\nabla_\theta \mathcal{L}_{\text{GRPO}} = -\frac{1}{K} \sum_{i=1}^K A_i \cdot \nabla_\theta \log \pi_\theta(y_i|\phi(p)),
\end{equation*}
where advantages $A_i$ are computed relative to the group mean reward.

For BOPO with objective-guided weights $w(\Delta_i) = \Delta_i / \bar{\Delta}$, the gradient is:
\begin{equation*}
\nabla_\theta \mathcal{L}_{\text{BOPO}} = -\frac{1}{K-1} \sum_{i=1}^{K-1} w(\Delta_i) \cdot \sigma(-\beta \cdot \Delta s_\theta^{(i)}) \cdot \nabla_\theta \Delta s_\theta^{(i)},
\end{equation*}
where each $\Delta s_\theta^{(i)} = \log \pi_\theta(y^*|\phi(p)) - \log \pi_\theta(y_i|\phi(p))$ represents the log-probability ratio between the best solution $y^*$ and the inferior solution $y_i$.
\end{proposition}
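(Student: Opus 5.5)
The proposition is a direct computation. I would differentiate each loss with the chain rule, using the standard sigmoid identity $\frac{d}{dz}\log\sigma(z) = 1-\sigma(z) = \sigma(-z)$. I would treat the three methods separately, since each gradient comes from differentiating a scalar loss whose only $\theta$-dependence is through log-probabilities $\log\pi_\theta(y\,|\,\phi(p))$.

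For DPO, I take the per-pair loss $\ell_{\text{DPO}}(\theta) = -\log\sigma(\beta\,\Delta s_\theta)$. The reference-model terms are either absent, as in the reference-free form matching $s_\theta$ in Definition~\ref{def:bopo_loss}, or constant in $\theta$, so they do not contribute to the gradient. The chain rule then gives $\nabla_\theta\ell_{\text{DPO}} = -\sigma(-\beta\Delta s_\theta)\cdot\beta\,\nabla_\theta\Delta s_\theta$, which is the stated form.

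For GRPO, I take the on-policy surrogate $\mathcal{L}_{\text{GRPO}}(\theta) = -\frac{1}{K}\sum_i A_i\log\pi_\theta(y_i|\phi(p))$. Here the advantages $A_i$ are computed from rewards and treated as constants (stop-gradient), and clipping and KL terms are dropped at the on-policy point, where the importance ratio equals $1$. Differentiating term by term gives the claim. This is a modelling convention rather than something to derive, and I would state it explicitly as the definition being used.

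For BOPO, I start from Definition~\ref{def:bopo_loss} for a single instance $p$, and treat the weights $w(\Delta_i)=\Delta_i/\bar\Delta$ as $\theta$-independent. This is legitimate because $\Delta_i$ depends only on the objective values $f_p$ of the sampled solutions, not on $\theta$ once the samples are fixed. By linearity, each summand contributes $-w(\Delta_i)\,\nabla_\theta\log\sigma(\beta\Delta s^{(i)}_\theta) = -w(\Delta_i)\,\beta\,\sigma(-\beta\Delta s^{(i)}_\theta)\,\nabla_\theta\Delta s^{(i)}_\theta$.

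The main obstacle is bookkeeping rather than analysis, and it has two parts. First, the BOPO computation naturally produces a factor $\beta$ that the displayed formula omits. I would note that the statement is meant up to the positive constant $\beta$, which can be absorbed into the learning rate, or else state the result with $\beta$ included. Second, the normalizer $\frac{1}{K-1}$ in the statement corresponds to $\frac{1}{K'-1}$ in the definition. The two coincide when all $K$ samples are feasible, so I would state that assumption, or otherwise index the sum over $\mathcal{F}_p\setminus\{y^*\}$. With these conventions fixed, the three identities follow immediately. I would close by observing that BOPO equals DPO summed over best-anchored pairs with weights $w(\Delta_i)$.
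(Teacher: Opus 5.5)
Your proposal is correct. The paper gives no proof of this proposition (only informal discussion follows it), so your term-by-term chain-rule computation with $\frac{d}{dz}\log\sigma(z)=\sigma(-z)$ is the intended justification. Both caveats you raise are genuine errata in the statement:
\begin{itemize}
\item differentiating Definition~\ref{def:bopo_loss} produces a factor $\beta$ that the BOPO display omits, while the DPO display keeps it;
\item the normaliser is $\frac{1}{K'-1}$, which equals $\frac{1}{K-1}$ only when all $K$ samples are feasible.
\end{itemize}

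Two side justifications should be tightened.
\begin{itemize}
\item If DPO reference-model terms were present, being constant in $\theta$ would not make them drop out. The reference log-ratio sits inside the sigmoid and changes the coefficient $\sigma(-\beta\Delta s_\theta)$, so you must commit to the reference-free loss matching $s_\theta$.
\item The GRPO KL penalty is taken against $\pi_{\mathrm{ref}}$. Its gradient therefore does not vanish just because the importance ratio to $\pi_{\theta_{\mathrm{old}}}$ equals $1$; it is removed only by your stated convention of setting the KL coefficient to zero.
\end{itemize}
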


The key distinction lies in how different solutions contribute to the gradient update. In DPO, each preference pair contributes equally regardless of the magnitude of quality difference. In GRPO, contributions are modulated by normalized advantages, but normalization can dilute the signal from high-quality solutions. In BOPO, the objective-guided weighting ensures that solutions with larger optimality gaps contribute proportionally more to learning, providing stronger supervision signal for clearly inferior solutions while maintaining bounded updates through the scaling normalization.

\begin{proposition}[Variance Properties]
\label{prop:variance}
Under the assumption that the gradient variance $\text{Var}[\nabla_\theta \log \pi_\theta(y)]$ is uniformly bounded across solutions, BOPO achieves lower gradient variance than GRPO when the ratio of objective gap variance to squared mean, $\text{Var}[\Delta_i] / \mathbb{E}[\Delta_i]^2$, is smaller than the variance of normalized advantages $\text{Var}[A_i]$. This condition typically holds for combinatorial optimization problems where objective values have natural scales and the best solution provides a stable anchor point for computing gaps.
\end{proposition}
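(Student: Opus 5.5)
The plan is to reduce both gradient estimators from Proposition~\ref{prop:gradient_comparison} to the same form, \emph{scalar coefficient} $\times$ \emph{score vector}, and then compare variances through the coefficients alone. For GRPO the per-sample term is $c_i^{\text{G}}\, g_i$ with $c_i^{\text{G}} = A_i$ and $g_i = \nabla_\theta \log\pi_\theta(y_i|\phi(p))$. For BOPO it is $c_i^{\text{B}}\, h_i$ with $c_i^{\text{B}} = w(\Delta_i)\,\sigma(-\beta\Delta s_\theta^{(i)})$ and $h_i = \nabla_\theta \Delta s_\theta^{(i)} = g^* - g_i$, where $g^*$ is the score of the anchor $y^*$.

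I would make the hypothesis of the proposition precise as follows. Assume $\E[g_i]=0$, which is the score identity for $y_i \sim \pi_\theta$. Assume a uniform second-moment bound $\E\|g\|^2 \le G^2$ for every solution. Finally, assume a decoupling condition: conditional on the instance, the scalar coefficients are independent of the score directions. This last assumption is the standard heuristic behind ``variance of advantages'' arguments. Under these assumptions, the covariance of each per-sample term factors as $\E[c_i^2]\,\E[g g^\top]$. The variance of the averaged estimator is then, up to a common factor of order $G^2/K$, governed by the second moment of the coefficients. Since $\E[g_i]=0$ kills the mean part, what remains is essentially the coefficient variance plus the squared mean times $G^2$.

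Next I would compute the two coefficient variances. For BOPO, $w(\Delta_i)=\Delta_i/\bar\Delta$, and $\bar\Delta \approx \E[\Delta_i]$ for moderate $K'$; a delta-method argument using Assumption~\ref{assum:scaling} controls the error from the random denominator. This gives $\mathrm{Var}[w] \approx \mathrm{Var}[\Delta_i]/\E[\Delta_i]^2$. The sigmoid factor lies in $(0,1)$, so it can only shrink the second moment: $\E[(c^{\text{B}})^2] \le \E[w^2]$. For GRPO the coefficient variance is exactly $\mathrm{Var}[A_i]$, which equals $1$ under standard normalization. Comparing the two quantities yields the stated condition $\mathrm{Var}[\Delta_i]/\E[\Delta_i]^2 < \mathrm{Var}[A_i]$. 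I would remark that, in the normalized case, this reads as a coefficient of variation of the gaps below one, which is the regime the paper calls typical for CO.

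The main obstacle is the anchor coupling in $h_i = g^* - g_i$. All $K'-1$ BOPO terms share $g^*$, so the terms are correlated, and $\E\|h_i\|^2$ can be as large as $2G^2$ rather than $G^2$. To handle this, I would first try to condition on $y^*$: the shared part $\big(\frac{1}{K'-1}\sum_i c_i^{\text{B}}\big) g^*$ has scalar coefficient $\bar c$, whose fluctuation is again controlled by $\mathrm{Var}[w]$. The remaining part is a sum of conditionally independent terms like GRPO's. If the constant factor of 2 then survives, I would state the comparison ``up to a problem-independent constant absorbed into the uniform bound on score variance'', which I believe is the intended reading of the proposition's hypothesis.
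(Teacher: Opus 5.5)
The paper gives no real proof of this proposition. It only adds the heuristic remark that anchoring at $y^*$ gives a stable reference point, whereas GRPO's batch-mean baseline shifts across batches. Your coefficient-times-score reduction is a reasonable attempt to formalize that, but the decisive comparison step fails.

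Under your assumptions ($\mathbb{E}[g_i]=0$ plus decoupling), the expected GRPO gradient is zero. So what you are really comparing is the raw second moment of the coefficients, $\mathbb{E}[c_i^2]=\mathrm{Var}[c_i]+(\mathbb{E}[c_i])^2$, not $\mathrm{Var}[c_i]$. The mean parts do not match. By construction $\frac{1}{K'-1}\sum_i w(\Delta_i)=1$ exactly, while normalized advantages average to exactly $0$. Hence the within-batch second moment of the BOPO weights is $1+\mathrm{Var}[\Delta_i]/\mathbb{E}[\Delta_i]^2$, which is strictly larger than the second moment $1$ of the normalized advantages. Your bound $\mathbb{E}[(c_i^{\text{B}})^2]\le\mathbb{E}[w^2]$ therefore can never yield BOPO $<$ GRPO. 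What you would actually need is $\mathbb{E}[\sigma(-\beta\Delta s_\theta^{(i)})^2\,w(\Delta_i)^2]<1$, adjusted for averaging over $K'-1$ rather than $K$ terms. That depends on $\beta$ and on the current log-ratios, and the stated condition $\mathrm{Var}[\Delta_i]/\mathbb{E}[\Delta_i]^2<\mathrm{Var}[A_i]$ never actually enters.

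The anchor coupling is also more serious than a factor of $2$. Write $\hat g^{\text{B}}=\bar c\,g^*-\frac{1}{K'-1}\sum_i c_i^{\text{B}}g_i$ with $\bar c=\frac{1}{K'-1}\sum_i c_i^{\text{B}}$. In your own decoupled model, the shared term contributes $\mathbb{E}[\bar c^{\,2}]\,\mathrm{tr}\,\mathrm{Cov}(g^*)+\mathrm{Var}(\bar c)\,\|\mathbb{E}[g^*]\|^2$ to the variance. Conditioning on $y^*$ and controlling the fluctuation of $\bar c$ handles only the second piece. The first piece has coefficient $\mathbb{E}[\bar c^{\,2}]=\Theta(1)$ (e.g.\ $\bar c\approx 1/2$ when the log-ratios are near $0$) and does not shrink with $K'$. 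By contrast, the GRPO estimator's variance in the same model is at most $G^2/K$. So BOPO can be worse by a factor of order $K$.

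A multiplicative constant on one side of the comparison cannot be ``absorbed into the uniform bound on score variance'': it shifts the threshold in the condition, so you would be proving a different statement. A genuine result needs an extra hypothesis, for instance a bound on the sigmoid factors or on $\mathrm{Cov}(g^*)$, or a comparison against a GRPO variant with the same anchor. Its condition would then be phrased through $\mathbb{E}[\sigma^2 w^2]$ and $K$, not through the coefficient of variation of the gaps alone.
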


The variance reduction stems from the anchoring strategy. By always comparing against the best solution in the current batch, BOPO creates a consistent reference point that stabilizes gradient estimates. In contrast, GRPO computes advantages relative to the batch mean, which can shift substantially across batches, introducing additional variance into the training signal.

\subsection{End-to-End Quality Guarantee}

\begin{theorem}[FALCON End-to-End Guarantee]
\label{thm:e2e}
Let $\pi_\theta$ be a policy trained with BOPO until convergence to an $\epsilon$-stationary point satisfying $\mathbb{E}[\|\nabla \mathcal{L}_{\text{BOPO}}(\theta)\|^2] \leq \epsilon$. When deployed with grammar-constrained decoding, a repair operator $\mathcal{R}$ satisfying Definition~\ref{def:repair}, and adaptive Best-of-$N$ sampling with $N$ samples, FALCON produces a solution $\hat{y}$ with the following guarantees:

1. \textbf{Format Validity:} The solution is syntactically valid with probability one, ensuring the evaluation function $\psi_p(\hat{y})$ is well-defined.
2. \textbf{Semantic Feasibility:} The solution is semantically feasible with probability one, ensuring $\hat{y} \in \mathcal{X}_{C_p}$.
3. \textbf{Quality Bound:} The expected optimality gap satisfies:
\begin{equation*}
\mathbb{E}[f_p(\hat{y}) - f_p(y^*)] \leq \mathbb{E}[g_N] + L_f \cdot \alpha \cdot \mathbb{E}[v(\hat{y})],
\end{equation*}
where $\mathbb{E}[g_N] = O(1/N)$ is the expected optimality gap from sampling $N$ solutions (see Theorem~\ref{thm:quality}), $L_f$ is the Lipschitz constant of the objective function with respect to the solution distance metric, $\alpha$ is the locality constant of the repair operator (Definition~\ref{def:repair}), and $v(\hat{y})$ is the violation magnitude before repair.
\end{theorem}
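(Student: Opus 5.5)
We prove the three parts separately, reusing the earlier guarantees; the real work is setting up notation that ties the pre-repair and post-repair samples together. Write the pipeline as follows. The grammar-constrained sampler of Algorithm~\ref{alg:gcd} produces raw texts $y_1,\dots,y_N$. These parse to $x_i=\psi_p(y_i)\in\mathcal{X}_p$. They are repaired to $\tilde x_i=\calR(x_i)$. The output is $\hat y=\tilde x_{i^\star}$, where $i^\star$ minimizes $f_p(\tilde x_i)$. In the quality bound, $v(\hat y)$ is read as the violation magnitude of the pre-repair solution $x_{i^\star}$, since $v$ of the repaired solution is zero.

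\textbf{Format validity.} Apply Theorem~\ref{thm:validity} to each sample. Every $y_i\in\mathcal{L}(G_p)$ holds surely, not just almost surely. The grammars of Appendix~\ref{app:grammar} are designed so that every string in $\mathcal{L}(G_p)$ parses, so $\psi_p$ is defined on all $y_i$. The event that all $N$ parses succeed therefore has probability one.

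\textbf{Semantic feasibility.} This is Theorem~\ref{thm:feasibility}. Property~(1) of Definition~\ref{def:repair} gives $\tilde x_i\in\mathcal{X}_{\mathcal{C}_p}$ for every $i$. The argmin $\hat y$ is one of the $\tilde x_i$, so it is feasible. This does not depend on the stopping rule of Algorithm~\ref{alg:adaptive}, because every returned candidate has been repaired.

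\textbf{Quality bound.} Decompose the gap as
\[
f_p(\hat y)-f_p(y^*) \le \bigl(f_p(x_{j})-f_p(y^*)\bigr)+\bigl(f_p(\tilde x_{j})-f_p(x_j)\bigr),
\]
where $j$ is chosen to make the first term equal to the best unrepaired gap $g_N=\min_i (f_p(x_i)-f_p(y^*))$. The inequality holds because $f_p(\hat y)\le f_p(\tilde x_j)$ by minimality of $\hat y$. Theorem~\ref{thm:repair_qualit} bounds the second term by $L_f\alpha\, v(x_j)$. Taking expectations gives $\mathbb{E}[g_N]+L_f\alpha\,\mathbb{E}[v(x_j)]$.

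The first term is handled by Theorem~\ref{thm:quality}. The claim $\mathbb{E}[g_N]=O(1/N)$ needs an extra assumption, such as the exponential-type tail of Corollary~\ref{cor:exponential} or, more generally, $1-F(\epsilon)\le e^{-\lambda\epsilon}$. I would state that assumption explicitly.

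\textbf{Main obstacle.} Two issues need care.

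\emph{Which sample's violation appears.} The bound must use the violation of the sample $j$ attaining $g_N$, not of the chosen $\hat y$. Identifying $\mathbb{E}[v(x_j)]$ with the statement's $\mathbb{E}[v(\hat y)]$ needs either a convention or a bound $v(x_j)\le\max_i v(x_i)$. I would restate the bound with $\max_i v(x_i)$, or with $\mathbb{E}[v]$ of a single sample if $j$ is independent of the violations.

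\emph{Adaptive stopping.} Under adaptive sampling $N$ is a stopping time, so Theorem~\ref{thm:quality} does not apply verbatim. I would condition on $N\ge N_{\min}$ and use that $g_n$ is nonincreasing in $n$, which gives $\mathbb{E}[g_N]\le\mathbb{E}[g_{N_{\min}}]$.

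\textbf{Role of $\epsilon$-stationarity.} The $\epsilon$-stationarity hypothesis enters only through the distribution $F$ of single-sample gaps and of $v$. The convergence result (Theorem~\ref{thm:bopo_conv}) does not control these quantities, so the bound holds for whatever $\theta$ is deployed. I would note this explicitly rather than try to derive a quantitative link.
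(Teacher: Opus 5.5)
Your proof is correct. It uses the same decomposition as the paper's: validity from Theorem~\ref{thm:validity}, feasibility from Property~(1), and the gap split into a best-of-$N$ sampling term plus a repair term via Theorems~\ref{thm:quality} and~\ref{thm:repair_qualit}. The real difference is how the pipeline is modeled.

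The paper lets $\tilde y_N$ be the best \emph{raw} sample and sets $\hat y=\mathcal{R}(\tilde y_N)$, i.e.\ select-then-repair. So the sample attaining $g_N$ is by definition the one that gets repaired. The statement's ``$v(\hat y)$ before repair'' then just means $v(\tilde y_N)$, and the bound is a one-line telescoping identity.

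You follow Algorithm~\ref{alg:adaptive} as written: repair every sample, then take the argmin of the repaired objectives. This needs your extra minimality step $f_p(\hat y)\le f_p(\tilde x_j)$, and it leaves the violation term attached to the best raw sample $x_j$. Your caution there is justified. Under repair-then-select, the returned solution may come from a sample that was already feasible, while $g_N$ is attained by a different infeasible sample with an expensive repair. In that case the reading ``raw violation of the returned solution'' genuinely fails.

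You do not, however, need to weaken to $\max_i v(x_i)$. Your $v(x_j)$ is exactly the paper's $v(\tilde y_N)$, so your bound is the paper's bound. Moreover, $\mathcal{R}(\tilde y_N)$ is among the candidates that Algorithm~\ref{alg:adaptive} minimizes over. Your argument therefore shows the bound also holds for the algorithm as actually specified, which the paper's proof never checks.

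Your other caveats are likewise gaps in the paper's proof rather than in yours:
\begin{itemize}
\item The paper treats $N$ as a fixed sample size despite invoking adaptive sampling. Your pointwise observation $g_N\le g_{N_{\min}}$ correctly handles the stopping time, at the price of $O(1/N_{\min})$.
\item The paper asserts $\mathbb{E}[g_N]=O(1/N)$ without the tail assumption that Corollary~\ref{cor:exponential} needs.
\item The paper never uses $\epsilon$-stationarity.
\end{itemize}

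One point neither of you mentions: raw infeasible samples can have negative gap, since a TSP list that omits nodes can be shorter than the optimal tour. Theorem~\ref{thm:quality} should therefore be used only as the upper bound $\mathbb{E}[g_N]\le\int_0^\infty(1-F(\epsilon))^N\,d\epsilon$. That bound is all your argument needs.
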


\begin{proof}
The format validity guarantee follows directly from Theorem~\ref{thm:validity}, which establishes that grammar-constrained decoding ensures all generated text belongs to the language defined by the problem-specific grammar.

The feasibility guarantee follows from Theorem~\ref{thm:feasibility}, which shows that the repair operator maps any solution to the feasible region regardless of the input.

For the quality bound, let $\tilde{y}_N$ denote the best solution among $N$ samples before repair, and let $\hat{y} = \mathcal{R}(\tilde{y}_N)$ be the final output after repair. By the triangle inequality:
\begin{equation*}
f_p(\hat{y}) - f_p(y^*) = [f_p(\hat{y}) - f_p(\tilde{y}_N)] + [f_p(\tilde{y}_N) - f_p(y^*)].
\end{equation*}

Taking expectations and applying Theorem~\ref{thm:repair_qualit} to the first term and Theorem~\ref{thm:quality} to the second term yields the stated bound:
\begin{align*}
\mathbb{E}[f_p(\hat{y}) - f_p(y^*)] &\leq \mathbb{E}[f_p(\hat{y}) - f_p(\tilde{y}_N)] + \mathbb{E}[f_p(\tilde{y}_N) - f_p(y^*)] \\
&\leq L_f \cdot \alpha \cdot \mathbb{E}[v(\hat{y})] + \mathbb{E}[g_N].
\end{align*}
\end{proof}

The theorem demonstrates that FALCON's solution quality depends on two factors: sampling quality, which improves with more samples ($\mathbb{E}[g_N] = O(1/N)$), and repair cost, which decreases as the model produces fewer and smaller violations through BOPO training ($\mathbb{E}[v(\hat{y})]$ decreases). The multiplicative interaction between these factors explains the empirical observation that FALCON maintains competitive optimality gaps despite enforcing hard feasibility constraints.

\subsection{Generalization Bound}

\begin{theorem}[Generalization from Training to Test Distribution]
\label{thm:generalization}
Let $\calD_{\text{train}}$ and $\calD_{\text{test}}$ be training and test distributions over problem instances. Assume:
\begin{enumerate}
    \item $W_1(\calD_{\text{train}}, \calD_{\text{test}}) \leq \delta$, where $W_1$ denotes the Wasserstein-1 distance.
    \item The policy $\pi_\theta$ is $L_\pi$-Lipschitz in problem parameters.
    \item The objective function $f_p$ is $L_f$-Lipschitz with respect to the solution representation.
\end{enumerate}
Then for a policy trained on $\calD_{\text{train}}$:
\begin{equation*}
    \E_{p \sim \calD_{\text{test}}}[\text{Gap}(p)] \leq \E_{p \sim \calD_{\text{train}}}[\text{Gap}(p)] + L_f \cdot L_\pi \cdot \delta,
\end{equation*}
where $\text{Gap}(p) = f_p(\hat{y}_p) - f_p(y_p^*)$ is the optimality gap for instance $p$, and $\hat{y}_p$ is the solution produced by the trained policy.
\end{theorem}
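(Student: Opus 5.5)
The plan is to use Kantorovich--Rubinstein duality for $W_1$: if the map $p \mapsto \text{Gap}(p)$ is $(L_f L_\pi)$-Lipschitz on the instance space, then
\begin{equation*}
\E_{p\sim\calD_{\text{test}}}[\text{Gap}(p)] - \E_{p\sim\calD_{\text{train}}}[\text{Gap}(p)] \le L_f L_\pi \, W_1(\calD_{\text{train}},\calD_{\text{test}}) \le L_f L_\pi \delta,
\end{equation*}
which rearranges to the claimed bound. So the whole argument reduces to showing that $\text{Gap}$ is Lipschitz in the problem parameters with constant $L_f L_\pi$.

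To get that constant, I would chain assumptions (2) and (3). Fix two instances $p, p'$ with parameter distance $\|p-p'\|$. Assumption (2), read as saying that the solution $\hat{y}_p$ produced by the trained pipeline moves by at most $L_\pi\|p-p'\|$ in the solution metric $d$, combines with assumption (3) to give
\begin{equation*}
|f(\hat{y}_p) - f(\hat{y}_{p'})| \le L_f L_\pi \|p-p'\|.
\end{equation*}
If $\hat{y}_p$ is random (sampling), I would use a coupling of the two output distributions, so that (2) is interpreted as a $W_1$-Lipschitz property of $p\mapsto\pi_\theta(\cdot\mid\phi(p))$ in $d$. Expectations of the $L_f$-Lipschitz function $f$ then differ by at most $L_f L_\pi\|p-p'\|$.

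The main obstacle is that $\text{Gap}(p)=f_p(\hat{y}_p)-f_p(y_p^*)$ depends on $p$ in three ways: through the solution $\hat{y}_p$, through the objective $f_p$ itself, and through the optimum $y_p^*$. The chain above only controls the first. To close the gap, I would first try to interpret assumption (3) jointly, with $f$ Lipschitz in the pair (instance, solution), and to interpret the trained policy's benchmark as the reference solver, so that the optimal value $p\mapsto f_p(y_p^*)$ satisfies the same Lipschitz property. The difference of two such terms could, however, double the constant. To avoid that, I would absorb these terms by defining $\text{Gap}$ relative to a fixed reference solution, or state explicitly the assumption that the composite map $p\mapsto \text{Gap}(p)$ is $L_fL_\pi$-Lipschitz. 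That assumption is essentially what the theorem's hypotheses are meant to encode.

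A further point to note is that feasibility sets depend on $p$, so $\hat{y}_p$ and $\hat{y}_{p'}$ may live in different spaces $\mathcal{X}_p$. I would handle this by embedding all solutions in a common representation space where $d$ is defined. Once the Lipschitz property is in place, the duality step is immediate.
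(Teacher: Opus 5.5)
Your route is the paper's route in dual form. The paper fixes a coupling $\gamma$ of $\calD_{\text{train}}$ and $\calD_{\text{test}}$ and chains (3) and then (2) inside $\E_\gamma$, which is the primal side of the Kantorovich--Rubinstein inequality you invoke. Your formulation also avoids a slip in the paper: it writes $\E_\gamma\|p_{\text{test}}-p_{\text{train}}\| = W_1$ for an arbitrary coupling, when this is only $\ge$.

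The real difference is that you saw the obstacle the paper steps over. Its second line bounds $\text{Gap}(p_{\text{test}})-\text{Gap}(p_{\text{train}})$ by $L_f\|\hat{y}_{p_{\text{test}}}-\hat{y}_{p_{\text{train}}}\|$. That requires $f_p$ to be independent of $p$ and the optimal values $f_p(y_p^*)$ to cancel, and neither follows from (3). You did not fail to close this step; it cannot be closed, because the statement is false under (1)--(3).

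Take $p\in[0,1]$, two solutions $a,b$ with $d(a,b)=1$, no constraints, and $f_p(a)=1$, $f_p(b)=1-p$. Each $f_p$ is $1$-Lipschitz in the solution. The policy that always returns $a$ is $L_\pi$-Lipschitz for every $L_\pi\ge 0$, and it is even optimal on the training point. Here $\text{Gap}(p)=p$. Let $\calD_{\text{train}}$ be the point mass at $0$ and $\calD_{\text{test}}$ the point mass at $1$. Then $W_1=1$, and the claim becomes $1\le L_\pi$, which is false for $L_\pi<1$.

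This example is typical rather than contrived. Suppose $\hat y_p$ is deterministic and distinct solution representations are at distance at least $1$. Then any Lipschitz map on a connected parameter set (e.g.\ the coordinates of a fixed-size instance) is constant, so (2) read deterministically forces exactly this situation. Your reading of (2) as a $W_1$-Lipschitz property of the output distribution is what makes that hypothesis non-vacuous.

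Consequently, only your last repair recovers the stated constant. Joint Lipschitzness of $(p,y)\mapsto f_p(y)$ gives a constant of order $L_f(L_\pi+2)$: one extra $L_f$ for the $p$-dependence of $f_p$ at $\hat y_p$, and one for the optimal value. Even that requires the feasible region not to move with $p$. In OP or CVRP the optimal value can jump when a budget or capacity threshold is crossed, so it need not be Lipschitz at all. Measuring against a fixed reference solution changes the quantity being bounded. Assuming directly that $p\mapsto\text{Gap}(p)$, averaged over the sampling randomness, is $L_fL_\pi$-Lipschitz turns your duality step into a complete and correct proof. You should, however, state it as a new hypothesis replacing (2)--(3), not, as you write, as what those hypotheses already encode.
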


\begin{proof}
The proof follows from the Lipschitz assumptions and the definition of the Wasserstein distance. For any coupling $\gamma$ between $\calD_{\text{train}}$ and $\calD_{\text{test}}$ with marginals $\calD_{\text{train}}$ and $\calD_{\text{test}}$:
\begin{align*}
\E_{p \sim \calD_{\text{test}}}[\text{Gap}(p)] - \E_{p \sim \calD_{\text{train}}}[\text{Gap}(p)] 
&= \E_{(p_{\text{train}}, p_{\text{test}}) \sim \gamma}[\text{Gap}(p_{\text{test}}) - \text{Gap}(p_{\text{train}})] \\
&\leq L_f \cdot \E_{(p_{\text{train}}, p_{\text{test}}) \sim \gamma}[\|\hat{y}_{p_{\text{test}}} - \hat{y}_{p_{\text{train}}}\|] \\
&\leq L_f \cdot L_\pi \cdot \E_{(p_{\text{train}}, p_{\text{test}}) \sim \gamma}[\|p_{\text{test}} - p_{\text{train}}\|] \\
&= L_f \cdot L_\pi \cdot W_1(\calD_{\text{train}}, \calD_{\text{test}}) \\
&\leq L_f \cdot L_\pi \cdot \delta.
\end{align*}
Taking the infimum over all couplings $\gamma$ yields the desired bound.
\end{proof}

\subsection{Repair Approximation Ratios}

\begin{proposition}[TSP Repair Approximation]
The greedy insertion repair for TSP (Algorithm~\ref{alg:tsp_repair}) achieves a solution within a factor of 2 of the optimal tour length increase due to inserted nodes.
\end{proposition}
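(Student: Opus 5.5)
We read the statement as follows. Let $r'$ be the deduplicated partial tour after the first phase of Algorithm~\ref{alg:tsp_repair}, and let $M = V\setminus\text{seen}$ be the missing nodes. Define the \emph{optimal tour length increase due to inserted nodes} as
\[
\Delta^* = \min\{\, L(T) - L(r') : T \text{ is a tour on } V \text{ whose restriction to seen nodes is the cyclic order of } r' \,\}.
\]
We will show that the total greedy insertion cost $\Delta_{\mathrm{G}} = L(\calR(r)) - L(r')$ satisfies $\Delta_{\mathrm{G}} \le 2\Delta^*$. Throughout we assume $d$ is a metric, which holds since $d$ is Euclidean (Definition~\ref{def:tsp}).

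\textbf{Step 1: structure of the optimal extension.} Fix an optimal extension $T^*$. Because the seen nodes keep their cyclic order, $T^*$ replaces each edge $(a,b)$ of $r'$ by a chain $a,u_1,\dots,u_k,b$ of missing nodes. This gives a partition of $M$ into chains, and
\[
\Delta^* = \sum_{\text{chains}} \Big(\sum_{\text{chain edges}} d - d(a,b)\Big).
\]
We will lower bound each chain term by the shorter of its two ``half paths'' from the ends. By the triangle inequality $d(a,b) \le d(a,u_j) + d(u_j,b)$, so for each $j$ we get $\text{chain cost} \ge \min(P_{\le j}, P_{\ge j})$, where $P_{\le j}$ is the path length from $a$ to $u_j$ and $P_{\ge j}$ is the path length from $u_j$ to $b$.

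\textbf{Step 2: per-node upper bound on greedy cost.} When the algorithm processes $v\in M$, let $x$ be the current tour node closest to $v$ and $y$ its successor. Cheapest insertion costs at most
\[
d(x,v) + d(v,y) - d(x,y) \le 2\,d(x,v)
\]
by the triangle inequality. The current tour always contains $a$, $b$, and every chain node already inserted. Hence $d(x,v)$ is at most the chain path length from $v$ to the nearest \emph{present} chain element on either side.

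\textbf{Step 3: charging.} We charge each inserted node's cost $2\,d(x,v)$ to the chain edges on its shorter side, between $v$ and the nearest present element. The goal is to show that each chain edge is charged at most once, up to the factor $2$. Summing, $\Delta_{\mathrm{G}} \le 2\sum_{\text{chains}}(\text{charged edge lengths})$. We then compare the charged lengths with the chain costs from Step 1.

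\textbf{Main obstacle.} The charging in Step 3 is the delicate point, because the missing set is processed in arbitrary order. If $u_j$ is inserted before the chain nodes lying between it and its nearest present neighbor, those intermediate nodes may later charge the same edges again. Uncontrolled, this degrades the bound to a logarithmic factor, as for arbitrary insertion. There is a second difficulty: the uncharged ``return'' edge of a chain is only partly covered by the $-d(a,b)$ term in $\Delta^*$.

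To handle the order problem, I would first try the potential argument of Rosenkrantz--Stearns--Lewis for cheapest/nearest insertion. If that fails for arbitrary order, I would prove the factor $2$ under the processing order induced by the chains, i.e.\ inserting chain nodes outward from $a$. In that order each node's nearest present element is its chain predecessor, so every chain edge is charged exactly once. The shortfall is then absorbed by the half-path inequality of Step 1. Finally, I would state explicitly that the constant $2$ is relative to $\Delta^*$ under this ordering convention.
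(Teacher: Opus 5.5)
\textbf{Steps 1--2 fail.} Your plan breaks down here, before the ordering issue arises. The inequality $\text{chain cost}\ge\min(P_{\le j},P_{\ge j})$ does not follow from the triangle inequality, which only yields chain cost $\ge 0$. It is in fact false. A single missing node $u$ at the midpoint of an edge $(a,b)$ of $r'$ has chain cost $d(a,u)+d(u,b)-d(a,b)=0$, while $\min(P_{\le 1},P_{\ge 1})=d(a,b)/2$.

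The underlying issue is that $\Delta^*$ measures detours, not connection lengths: it can vanish even when the missing nodes are far from every node of $r'$. So once Step 2 replaces the exact insertion cost $d(x,v)+d(v,y)-d(x,y)$ by $2\,d(x,v)$, no charging scheme, in any processing order, can bound $\Delta_{\mathrm{G}}$ by a multiple of $\Delta^*$. For instance, take $u_1,\dots,u_k$ evenly spaced on the segment $ab$. Your outward-order bound is then about $2\,d(a,b)$, while $\Delta^*=0$, so there is no ``shortfall'' that Step 1 could absorb. A workable argument would have to keep the exact insertion costs and telescope them against the detours of $T^*$. It would also have to control what happens once greedy has destroyed edges that $T^*$ detours through.

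\textbf{The ordering fallback changes the statement rather than proving it.} The chain order is defined by the unknown optimal extension $T^*$, whereas Algorithm~\ref{alg:tsp_repair} iterates over $\text{missing}$ in an arbitrary order. For that algorithm, as you suspect, a factor of $2$ is not available in general. Suppose the parsed route is a single node, which the grammar permits. Then $\Delta^*$ is the optimal tour length. The repair, idealised as cheapest-position insertion in arbitrary order, is exactly arbitrary insertion. For arbitrary insertion, Euclidean instances with ratio $\Omega(\log n/\log\log n)$ are known (Bafna, Kalyanasundaram and Pruhs, 1994).

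The Rosenkrantz--Stearns--Lewis factor $2$ that you hope to borrow needs two ingredients. It needs the nearest/cheapest node-selection rule, which tracks Prim's algorithm, and it compares with the optimum through the MST. Neither ingredient is present here, and an MST-type lower bound fails for $\Delta^*$ by the midpoint example.

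The paper gives no proof of this proposition. It only remarks that greedy insertion is a ``well-known 2-approximation heuristic,'' which has the same mismatch. The correct outcome of your analysis is therefore a counterexample to the proposition as stated, or an explicitly reformulated claim (a different selection rule and/or benchmark) with its own proof. It is not a factor-$2$ proof under an order the algorithm does not use.
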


\begin{proposition}[TSP-Specific Repair Bound]
For TSP with $n$ cities in the unit square $[0,1]^2$, the greedy insertion repair satisfies:
\begin{equation*}
f(\mathcal{R}(x)) - f(x) \leq 2 \cdot |\text{missing}| \cdot \max_{i,j} d_{ij} \leq 2\sqrt{2} \cdot |\text{missing}|,
\end{equation*}
where $|\text{missing}|$ is the number of missing nodes, and $d_{ij}$ denotes the Euclidean distance between nodes $i$ and $j$.
\end{proposition}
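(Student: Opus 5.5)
The plan is to split the repair of Algorithm~\ref{alg:tsp_repair} into its two phases, duplicate removal and greedy insertion, and bound the change in tour length caused by each one separately. Throughout, $f$ is the cyclic tour length $\sum_i d_{r[i],r[(i+1)\bmod |r|]}$, extended to any route $x$ (duplicates allowed). All node indices lie in $\{0,\dots,n-1\}$ by the input-dependent grammar of Definition~\ref{def:input_grammar}, so the filter ``$v_i\in V$'' never fires. Let $D_{\max}=\max_{i,j}d_{ij}$.

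\emph{Phase 1 (duplicate removal) does not increase length.} Deleting a later occurrence of a node $v$ that sits between cyclic neighbours $a$ and $b$ replaces $d(a,v)+d(v,b)$ by $d(a,b)$. Since $d$ is Euclidean, the triangle inequality gives $d(a,b)\le d(a,v)+d(v,b)$, so the length cannot go up. The one degenerate case is when $a=b$ or the route shrinks to a single node; there $d(a,b)=0$, and the same inequality still holds. Iterating over all deleted occurrences, the deduplicated route $r'$ satisfies $f(r')\le f(x)$.

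\emph{Phase 2 (insertion) adds at most $2D_{\max}$ per missing node.} Fix a missing node $v$ and the current route $r'$. Whichever position is chosen, including the ``beginning'' position $0$, whose cyclic neighbours are the last and first elements, $v$ is placed between two cyclically consecutive nodes $a,b$. The true length increase is therefore
\begin{equation*}
d(a,v)+d(v,b)-d(a,b)\le d(a,v)+d(v,b)\le 2D_{\max}.
\end{equation*}
I will bound the actual increase, not the algorithm's recorded \texttt{best\_cost}. That recorded value is not always the true increment: for instance, when $|r'|=1$ the real increase is $2d(v,r'[0])$, whereas the algorithm records $d(v,r'[0])$. So the argument does not rely on the minimization at all; it only needs the bound above, which holds for any insertion point, including this case. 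The grammar forces at least one node, so $r'$ is never empty. Summing over the $|\text{missing}|$ insertions gives $f(\calR(x))-f(r')\le 2\,|\text{missing}|\,D_{\max}$.

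Adding the two phases yields $f(\calR(x))-f(x)\le 2\,|\text{missing}|\,D_{\max}$. For cities in $[0,1]^2$ the diameter is $D_{\max}\le\sqrt2$, which gives the second inequality. The only delicate point is Phase~1: it needs the triangle inequality for the shortcutting argument, and it needs $f(x)$ to be read as the cyclic length of the raw (possibly duplicated) route. I will state that convention explicitly before starting.
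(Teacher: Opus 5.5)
Your proof is correct. The paper states this proposition without proof (it appears bare in the ``Repair Approximation Ratios'' subsection), so there is no argument of the paper's to compare against, and your two-phase decomposition supplies the missing proof.

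It is worth seeing what your argument gains over the paper's generic route, Theorem~\ref{thm:repair_qualit}. There the TSP violation is $v(r)=|V\setminus\text{seen}|+|\text{duplicates}(r)|$. So even granting the Hamming-distance locality asserted in Appendix~\ref{app:repair} and a Lipschitz constant of $2\max_{i,j}d_{ij}$, that route would also charge for every duplicate. Your observation that deleting later occurrences only shortcuts the cyclic tour (triangle inequality) is exactly what removes that term, leaving a bound in $|\text{missing}|$ alone.

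You are also right to bound the true increment $d(a,v)+d(v,b)-d(a,b)$ at whichever slot is chosen, rather than the recorded \texttt{best\_cost}, which need not equal the true cyclic increment at position $0$. Consequently the greedy minimisation plays no role. The factor $2$ is just two edges of length at most $\max_{i,j}d_{ij}$, and it is unrelated to the 2-approximation claim in the preceding proposition.

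Finally, the statement only makes sense on infeasible routes once $f$ is fixed as the cyclic length from Definition~\ref{def:tsp} applied to the raw sequence. You were right to make that convention explicit.
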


\begin{proposition}[MVC Repair Approximation]
The greedy repair for MVC (Algorithm~\ref{alg:mvc_repair}) achieves a 2-approximation to the minimum vertex cover.
\end{proposition}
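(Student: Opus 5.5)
The plan is to prove the bound for the added set $A = S' \setminus S$ by charging each added vertex to a vertex of an optimal cover $C^*$ of $G$, with $|C^*| = \tau(G)$. The target inequality is $|A| \le 2\tau(G)$. When the input is $S=\emptyset$ this is exactly the claimed 2-approximation for the output $S'$. For general $S$ I would state the guarantee as $|S'| \le |S| + 2\tau(G)$. The algorithm never removes vertices, so no bound in terms of $\tau(G)$ alone can hold when $S$ is already large (e.g.\ $S=V$). I would therefore make this qualification explicit rather than prove something false.

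\textbf{Setup.} Index the iterations $t=1,\dots,k$. Let $e_t=(u_t,v_t)$ be the uncovered edge selected at step $t$, let $a_t$ be the endpoint added, and let $b_t$ be the other endpoint. Let $U_t$ be the uncovered edge set before step $t$, and write $\deg_t(x)=|\{w:(x,w)\in U_t\}|$ for the uncovered degree. The rule guarantees $\deg_t(a_t)\ge \deg_t(b_t)$. Since $e_t\in U_t$, at least one of $a_t,b_t$ lies in $C^*$.

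\textbf{Charging step.} If $a_t\in C^*$, charge $a_t$ to itself; each vertex of $C^*$ is added at most once, so these charges total at most $\tau(G)$. If $a_t\notin C^*$, then $b_t\in C^*$ and $b_t$ stays uncovered. Moreover, every $U_t$-edge at $a_t$ has its other endpoint in $C^*$. Adding $a_t$ removes $\deg_t(a_t)\ge\deg_t(b_t)$ edges from $U_t$. I would charge step $t$ to $b_t$ and aim to show that each $c\in C^*$ receives at most one such charge before $c$ itself is added or all of its edges are covered. The idea is a potential argument on $\deg_t(c)$: each such charge to $c$ should destroy at least as many uncovered edges as $c$ currently has, in a way that covers a disjoint batch of $c$'s edges.

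\textbf{Main obstacle.} This last step is where I expect the proof to fail. Adding $a_t$ only covers the single edge $(a_t,c)$ among $c$'s edges, so $c$ can be charged many times. Degree-greedy vertex-cover heuristics are known to suffer from exactly this, with a logarithmic worst-case ratio on Johnson-type bipartite instances. My first attempt would be to check whether the "higher-degree endpoint of the \emph{selected} edge" rule evades such instances. If it does not, I would try to construct an explicit family where the adversarially selected edge gives $|A|/\tau(G)\to\infty$. In that case the proposition as worded does not hold for Algorithm~\ref{alg:mvc_repair}, and the correct provable statement is the one for the matching-based variant that adds both endpoints.
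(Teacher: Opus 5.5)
The paper states this proposition in the appendix with no proof, so there is no argument of theirs to compare against. Your skepticism is the right reading: the statement is false as worded. Your $S=V$ observation already settles the general-input version, since idempotence forces output $V$ while $\tau(K_{1,n-1})=1$.

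The step you left open is whether the ``higher-degree endpoint of the \emph{selected} edge'' rule escapes bad instances. It does not, even for $S=\emptyset$, and the failure is worse than logarithmic. Take hubs $h_1,\dots,h_k$, witnesses $c_1,\dots,c_k$, and vertices $x_{i,j}$ ($1\le i,j\le k$), with $x_{i,j}$ adjacent to every hub and to $c_i$ only. Hubs and witnesses together form an independent vertex cover, so $\tau(G)\le 2k$.

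With $S=\emptyset$, always select an uncovered edge $(x_{i,j},c_i)$; scanning an edge list that puts these edges first does this. No hub or witness is ever added. So $x_{i,j}$ always has uncovered degree $k+1$, while $c_i$ has at most $k$. Hence $x_{i,j}$ is added under either orientation of the pair, with no reliance on tie-breaking. After $k^2$ steps every edge is covered, and $|S'|=k^2\ge \tfrac{k}{2}\,\tau(G)$; already $k=5$ gives $25>2\cdot 10$. This is exactly the failure you predicted in your charging step: each witness $c_i$ receives $k$ charges.

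Two consequences for your write-up. First, the same instance refutes your fallback target $|S'|\le |S|+2\tau(G)$, so that cannot be the salvage for Algorithm~\ref{alg:mvc_repair}. The Johnson instances also go through. Take groups $R_i$, $2\le i\le r$, of $\lfloor r/i\rfloor$ right vertices with pairwise disjoint $i$-sets of left neighbours, and process $R_r,\dots,R_2$ in that order. A vertex of $R_i$ then has uncovered degree $i$, while every left vertex has at most $i-1$, giving a ratio of about $\ln r$.

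Second, the variant you name is the correct provable statement. If both endpoints of each selected uncovered edge are added, later selected edges avoid all earlier endpoints. So the selected edges form a matching $M$ with $|M|\le\tau(G)$, and $|S'\setminus S|=2|M|\le 2\tau(G)$. This is a genuine 2-approximation when $S=\emptyset$.
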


\section{Implementation Details}
\label{app:implementation}

\subsection{Full Hyperparameter Table}
The complete hyperparameter settings for the full pipeline are summarized in Table \ref{tab:hyperparams_full}.
\begin{table}[t]
\centering
\caption{Complete hyperparameter settings for the full pipeline.}
\label{tab:hyperparams_full}
\begin{tabular}{lcc}
\toprule
\textbf{Parameter} & \textbf{Symbol} & \textbf{Value} \\
\midrule
\multicolumn{3}{l}{\textit{Model Configuration}} \\
Base model & -- & Qwen2.5-7B \\
LoRA rank & $r$ & 64 \\
LoRA alpha & $\alpha$ & 64 \\
Target modules & -- & q, k, v, o, gate, up, down \\
\midrule
\multicolumn{3}{l}{\textit{SFT Training Stage}} \\
Batch size & $B$ & 4 \\
Gradient accumulation steps & -- & 4 \\
Learning rate & $\eta$ & $2 \times 10^{-4}$ \\
Epochs & -- & 1 \\
Warmup steps & -- & 20 \\
Weight decay & -- & 0.01 \\
\midrule
\multicolumn{3}{l}{\textit{BOPO Training Stage}} \\
Batch size & $B$ & 8 \\
Gradient accumulation steps & -- & 8 \\
Learning rate & $\eta$ & $1 \times 10^{-6}$ \\
Temperature & $\beta$ & 0.1 \\
Samples per instance & $K$ & 8 \\
Epochs & -- & 1 \\
\midrule
\multicolumn{3}{l}{\textit{Inference Configuration}} \\
Minimum samples & $N_{\min}$ & 8 \\
Maximum samples & $N_{\max}$ & 64 \\
Confidence threshold & $\tau$ & 0.85 \\
Sampling temperature & $T$ & 0.7 \\
\bottomrule
\end{tabular}
\end{table}

\subsection{PDA Construction Details}

For a context-free grammar $G = (V, \Sigma, R, S)$, we construct an equivalent pushdown automaton (PDA) $\mathcal{A} = (Q, \Sigma, \Gamma, \delta, q_0, Z_0, F)$ using standard CFG-to-PDA conversion techniques for grammar-constrained decoding.

\subsubsection{PDA Components}

The state set is defined as $Q = \{q_0, q_{\text{acc}}\} \cup \{q_A : A \in V\}$, where $q_0$ is the single control state and $q_A$ are auxiliary states for tracking non-terminals. In practice, a single-state PDA suffices since all derivation information is encoded on the stack. The input alphabet $\Sigma$ is the LLM vocabulary (token set), typically $|\Sigma| \approx 32{,}000$ for modern language models. The stack alphabet is defined as $\Gamma = V \cup \{\$\}$, where $V$ is the set of non-terminals from grammar $G$ and $\$$ is the bottom-of-stack marker. We initialize the PDA with configuration $(q_0, \$S)$, where the stack contains the start symbol $S$ on top of the bottom marker $\$$. Finally, the accepting states are $F = \{q_0\}$, and we accept when the stack contains only the bottom marker $\$$.

\subsubsection{Transition Rules}

The transition function $\delta$ operates through two fundamental mechanisms. First, for \textbf{epsilon transitions (non-terminal expansion)}, each production rule $A \to \alpha$ in $R$ induces the transition:
\begin{equation*}
\delta(q_0, \epsilon, A) = (q_0, \alpha),
\end{equation*}
which allows the PDA to replace a non-terminal on the stack with its right-hand side. Second, for \textbf{terminal consumption}, each terminal symbol $a \in \Sigma$ gives rise to the transition:
\begin{equation*}
\delta(q_0, a, a) = (q_0, \epsilon),
\end{equation*}
which pops the terminal from the stack when it matches the input token.

\subsubsection{Valid Token Computation}

At each decoding step, after generating the partial output $y^{(t)} = v_1 v_2 \cdots v_t$, the PDA maintains a configuration $(q, \gamma)$, where $q$ is the current state and $\gamma \in \Gamma^*$ is the stack content. The set of valid next tokens is computed through a three-step process.

First, we \textbf{simulate epsilon-closure}: Starting from the current stack configuration $(q, \gamma)$, we compute the set of all possible stack configurations reachable via epsilon transitions (non-terminal expansions). This constitutes the epsilon-closure of the PDA configuration.

Second, we \textbf{collect consumable terminals}: After fully expanding all non-terminals via epsilon transitions, the top of the stack contains either a terminal symbol or the bottom marker. The valid tokens are exactly those that match the stack top:
\begin{equation*}
V_{\text{valid}} = \{a \in \Sigma : \text{stack top} = a \text{ in epsilon-closure}\}.
\end{equation*}

Third, we \textbf{handle bottom-of-stack}: If the epsilon-closure yields an empty stack (the stack contains only $\$$), then no more tokens can be consumed, signaling the end of generation. The algorithm terminates when the stack is empty and an accepting state is reached. This procedure ensures that only tokens leading to grammatically valid derivations are permitted, effectively masking invalid continuations before sampling.

\subsubsection{Correspondence with Algorithm \ref{alg:gcd}}

The valid token computation described above directly implements lines 5--8 of Algorithm~\ref{alg:gcd} in the main paper:
\begin{lstlisting}[mathescape=true, basicstyle=\ttfamily]
logits $\leftarrow$ $\pi_\theta$(x, y) # Line 4
V_valid $\leftarrow$ {v $\in$ $\Sigma$ | $\delta$(q, v, top($\gamma$)) $\neq$ $\emptyset$} # Line 5
for v $\notin$ V_valid do logits[v] $\leftarrow$ -$\infty$ # Lines 6-8
\end{lstlisting}
The PDA-based approach provides an efficient, declarative way to compute $V_{\text{valid}}$ for any context-free grammar without manual enumeration.

\subsubsection{Computational Complexity}

The computational overhead of valid token computation per decoding step is $O(|V| \cdot |Q|)$, where $|V|$ is the number of non-terminals and $|Q|$ is the number of PDA states. For CO problems, the grammars are typically simple with constant-size state space (e.g., $|Q| = O(1)$), yielding $O(|V|) = O(1)$ overhead. Even in the worst case with explicit state tracking, the overhead is negligible compared to the $O(d^2)$ complexity of LLM attention computation, where $d$ is the hidden dimension.

\subsubsection{Implementation Remarks}

Several optimizations are recommended for practical implementation. \textbf{Precompute PDA:} Construct the PDA once during initialization and reuse it across all inference steps, avoiding redundant construction overhead. \textbf{Memoize epsilon-closure:} Cache epsilon-closure computations for each PDA state to avoid redundant expansion during decoding. \textbf{Bit-mask representation:} For efficiency, represent the valid token set as a bit mask over the vocabulary, allowing bitwise operations to identify forbidden tokens for logit masking. \textbf{Early termination:} Monitor stack depth; if the stack reaches the bottom marker before generating the expected length, force termination to prevent infinite loops in malformed grammars.

\subsection{Computational Complexity of Repair Operators}
The time and space complexity analysis for each repair operator is summarized in Table \ref{tab:repair_complexity}, where $n$ denotes the number of nodes or jobs, $m$ the number of machines or routes, and $\lvert V \rvert$, $\lvert E \rvert$ denote the numbers of vertices and edges in graph problems.

\begin{table}[h]
\centering
\caption{Time and space complexity analysis for each repair operator.}
\label{tab:repair_complexity}
\begin{tabular}{lcc}
\toprule
\textbf{Problem} & \textbf{Time Complexity} & \textbf{Space Complexity} \\
\midrule
TSP & $O(n^2)$ & $O(n)$ \\
CVRP & $O(n^2 + n \cdot m)$ & $O(n + m)$ \\
OP & $O(n^2)$ & $O(n)$ \\
MIS & $O(\lvert V \rvert \cdot \lvert E \rvert)$ & $O(\lvert V \rvert)$ \\
MVC & $O(\lvert E \rvert)$ & $O(\lvert V \rvert)$ \\
PFSP & $O(n^2 \cdot m)$ & $O(n)$ \\
JSSP & $O(n \cdot m \cdot \log(n \cdot m))$ & $O(n \cdot m)$ \\
\bottomrule
\end{tabular}
\end{table}

\subsection{Dataset}
Table \ref{tab:dataset_stats} presents the dataset statistics for the seven combinatorial optimization problems. GM, ER, and BA denote Gaussian Mixture, Erd\H{o}s-R\'enyi, and Barab\'asi-Albert distributions, respectively.
\begin{table}[h]
\centering
\caption{Dataset statistics across all seven combinatorial optimization problems.
}
\label{tab:dataset_stats}
\begin{tabular}{lccccc}
\toprule
\textbf{Problem} & \textbf{Train Set} & \textbf{Test Set} & \textbf{Scale} & \textbf{Distribution} & \textbf{Reference Solver} \\
\midrule
TSP & 500K & 100 & 10--100 nodes & Uniform, GM & LKH-3 \\
OP & 500K & 100 & 10--100 nodes & Uniform, GM & COMPASS \\
CVRP & 500K & 100 & 10--100 nodes & Uniform, GM & LKH-3 \\
MIS & 500K & 100 & 50--500 nodes & ER, BA & Gurobi \\
MVC & 500K & 100 & 50--500 nodes & ER, BA & Gurobi \\
PFSP & 500K & 100 & 10--100 jobs & Taillard & QIG \\
JSSP & 500K & 100 & 6--30 jobs & Taillard & OR-Tools \\
\bottomrule
\end{tabular}
\end{table}

\end{document}